\documentclass[]{jair}

\setcopyright{cc}
\copyrightyear{2025}
\acmDOI{10.1613/jair.1.18402}

\JAIRAE{Martin Gebser}
\JAIRTrack{Awards Track} 
\acmVolume{84}
\acmArticle{32}
\acmMonth{12}
\acmYear{2025}

\RequirePackage[
  datamodel=acmdatamodel,
  style=acmauthoryear,
  backend=biber,
  giveninits=true
  ]{biblatex}

\usepackage{soul}
\usepackage{cancel}
\usepackage{url}
\usepackage[utf8]{inputenc}
\usepackage{caption}
\usepackage{graphicx}
\usepackage{amsmath}
\usepackage{amsthm}
\usepackage{booktabs}
\usepackage{algorithm}
\usepackage{algorithmic}

\newtheorem{example}{Example}
\newtheorem{theorem}{Theorem}
\newtheorem{definition}{Definition}
\newtheorem{proposition}{Proposition}
\newtheorem{lemma}{Lemma}
\newtheorem{remark}{Remark}
\newtheorem{corollary}{Corollary}

\usepackage{soul} 
\usepackage[normalem]{ulem} 
\usepackage{graphicx}
\usepackage{color}
\usepackage{braket} 

\usepackage{tabularx}

\usepackage{todonotes} 

\usepackage{tikz}
\usetikzlibrary{arrows}
\usetikzlibrary{positioning}
\usetikzlibrary{decorations.pathmorphing}

\newcommand{\bm}[1]{\underline{\smash{#1}}}
\newcommand{\bmnested}[1]{\underline{#1\vphantom{p}}}
\newcommand{\fquo}[1]{\bm{\ensuremath{#1}}}
\newcommand{\quo}[1]{\bm{\textit{#1}}}
\newcommand{\bland}{\bm \land}
\newcommand{\blor}{\bm \lor}
\newcommand{\bforall}{\bm \forall}
\newcommand{\bneg}{\bm \neg}

\newcommand{\quot}{\ensuremath{\mathbb{Q}}} 
\newcommand{\sigbase}{\ensuremath{S_b}}
\newcommand{\ist}{\ensuremath{\textit{ist}}}
\DeclareRobustCommand{\truth}{\ifmmode \mathbb{T} \else $\mathbb{T}$ \fi} 

\newcommand{\hts}{H_\truth}
\newcommand{\hqf}{H_{\textit{ist}}^\textit{fin}}
\newcommand{\htf}{H_{\truth}^\textit{fin}}
\newcommand{\hof}{H_{\textit{tools}}^\textit{fin}} 

\newcommand{\hq}{H_C}
\newcommand{\hf}{H_C^\textit{fin}}
\newcommand{\htr}{H_{\truth}}
\newcommand{\hfoinf}{H_{\bforall}}
\newcommand{\hcinf}{H_C^\infty}

\newcommand{\wff}{\text{Wff}}
\newcommand{\wft}{\textit{Wft}}

\newcommand{\sub}{\textit{Sub}}
\newcommand{\reach}{\textit{Term}} 
\newcommand{\E}{\text{E}}
\newcommand{\uq}{\mu^{\!\scalebox{0.75}[1.0]{-}\!\!\:1}}

\newcommand{\fq}{\ensuremath{\bm{F}}}
\newcommand{\fpq}{\ensuremath{\bm{P}}}

\newcommand{\cv}{\ensuremath{\bm{V}}}

\newcommand{\formuset}{\ensuremath{\mathcal{L}} }

\newcommand{\termset}{\ensuremath{\mathcal{T}}}
\newcommand{\quotableFormu}{\ensuremath{\mathcal{L}_q}}
\newcommand{\quotableTerm}{\ensuremath{\mathcal{T}_q}}

\newcommand{\tq}{\ensuremath{\mathcal{Q}}}
\newcommand{\tqv}{\ensuremath{\mathcal{Q}_{v}}}
\newcommand{\tqt}{\ensuremath{\bm{\mathcal{T}}}}
\newcommand{\tqf}{\ensuremath{\bm{\mathcal{L}}}}
\newcommand{\tqtv}{\ensuremath{\bm{\mathcal{T}_v}}}
\newcommand{\tqfv}{\ensuremath{\bm{\mathcal{L}_v}}}

\newcommand{\context}[1]{\ensuremath{#1^\ist\ }}
\newcommand{\contextq}[1]{\ensuremath{#1^{\ist q}\ }}

\newcommand{\types}{\ensuremath{U}}

\newcommand{\tauto}{\ensuremath{\textit{Tauto}}}

\newtheorem*{theorem*}{Theorem}
\newtheorem*{proposition*}{Proposition}

\newcounter{qianaax}

\newcommand{\tagqianaaxiom}{
\addtocounter{qianaax}{1}
\tag{A\arabic{qianaax}}}

\newcounter{tautodef}

\newcommand{\tagtautodef}{
\addtocounter{tautodef}{1}
\tag{T\arabic{tautodef}}}

\newcommand{\expofinite}{\ensuremath{^{\text{fin}}}}

\newcommand{\textiff}{\text{ iff }}

\newcommand{\assignment}{\sigma}

\newcommand{\ignore}[1]{}

\setuldepth{Berlin}

\usepackage{xcolor}
\newcommand\SCI[1]{{\color{violet}{\it \bf Simon :} #1}}
\newcommand\FMSI[1]{{\color{blue}{\it \bf Fabian :} #1}}
\newcommand\PHPI[1]{{\color{green}{\it \bf PH: } #1}}
\newcommand{\fms}[1]{\textcolor{magenta}{Fabian: #1}}
\newcommand\SC[1]{{\todo[color=violet!40, inline]{Simon: #1}}}
\newcommand\FMS[1]{{\todo[color=blue!40, inline]{Fabian: #1}}}
\newcommand\ph[1]{{\todo[color=green!40, inline]{PH: #1}}}
\newcommand{\FS}[1]{\todo[inline]{François: #1}}
\renewcommand\SC[1]{}
\renewcommand\FMS[1]{}
\renewcommand\ph[1]{}
\renewcommand{\fms}[1]{}
\renewcommand\SCI[1]{}
\renewcommand\FMSI[1]{}
\renewcommand\PHPI[1]{}

\usepackage{blindtext}
\usepackage{subfiles} 
\graphicspath{{images/}{../../images/}}

\usepackage{enumitem}
\usepackage{nameref}
\usepackage{mdframed} 
\newmdenv[
  topline=false,
  bottomline=false,
  rightline=false,
  linecolor=lightgray,
  linewidth=4pt,
  innertopmargin=1pt,
  innerbottommargin=1pt,
  innerleftmargin=5pt,
  innerrightmargin=0pt,
  skipabove=5pt,
  skipbelow=5pt
]{siderules}

\newcommand{\myexample}[1]{\begin{siderules} #1\end{siderules}} 

\begin{document}
\title[Qiana]{Qiana: A First-Order Formalism to Quantify over Contexts and Formulas with Temporality} 


\author{Simon Coumes}
\orcid{0009-0005-8888-6495}
\authornote{Corresponding Author.}
\email{simon.coumes@telecom-paris.fr}
\affiliation{%
  \institution{Telecom Paris, Institut Polytechnique de Paris}
  \city{Palaiseau}
  \state{Île-de-France}
  \country{France}
}

\author{Pierre-Henri Paris}
\orcid{0000-0002-9665-1187}
\email{pierre-henri.paris@universite-paris-saclay.fr}
\affiliation{%
  \institution{Université Paris-Saclay}
  \city{Gif-sur-Yvette}
  \state{Île-de-France}
  \country{France}
}

\author{François Schwarzentruber}
\orcid{0000-0002-1228-4333}
\email{francois.schwarzentruber@ens-lyon.fr}
\affiliation{%
  \institution{ENS Lyon}
  \city{Lyon}
  \country{France}
}

\author{Fabian Suchanek}
\orcid{0000-0001-7189-2796}
\email{fabian.suchanek@telecom-paris.fr}
\affiliation{%
  \institution{Telecom Paris, Institut Polytechnique de Paris}
  \city{Palaiseau}
  \state{Île-de-France}
  \country{France}
}

\renewcommand{\shortauthors}{Coumes, Paris, Schwarzentruber \& Suchanek}

\begin{abstract}
We introduce Qiana, a logic framework for reasoning on formulas that are true only in specific contexts. 
In Qiana, it is possible to quantify over both formulas and contexts to express, e.g., that 
``everyone knows everything Alice says''.  
Qiana also permits paraconsistent logics within contexts, so that contexts can contain contradictions. 
Furthermore, Qiana is based on first-order logic, and is finitely axiomatizable, so that Qiana theories are compatible with pre-existing first-order logic theorem provers. We show how Qiana can be used to represent temporality, event calculus, and modal logic. We also discuss different design alternatives of Qiana. 
\end{abstract}

\maketitle

\section{Introduction} \label{sec:Introduction}
    In his ``Notes on formalizing contexts''~\cite{McCarthy87Turing}, John McCarthy argued for the importance of context representation in formal logic. 
The core idea is that statements can be tied to specific contexts, which act as modalities on the statements.
This idea is substantiated by the predicate \emph{ist}: 
In McCarthy's notations, $\textit{ist}(c,\varphi)$ means that the formula $\varphi$ is true in the context~$c$. 
Contexts can represent different things: 
Something can be true only in the context of a newspaper article, in the context of a piece of fiction, or in someone's beliefs.
We illustrate one possible use of contexts with the final scene of the play ``Romeo and Juliet'' by William Shakespeare:

\myexample{
\textit{Near the end of the play, Juliet wishes to meet with Romeo, but her parents won't let her. Her friend, Friar Laurence, offers her a potion and says it will allow her to fake her death. 
Juliet takes the potion, hoping it will allow her to escape her family.
However, the plan backfires: Romeo sees Juliet before she awakens, seemingly dead, and kills himself in despair. When Juliet later wakes up, she sees Romeo dead and kills herself. }
}

The key elements of the ending of the play are: (1) Friar Laurence is right in what he says (the potion will make Juliet appear dead), and (2) someone who is madly in love with someone else will kill themselves if they believe their loved one to be dead. Thus, leaving out details and overgeneralizing, we want to represent:
\begin{align*} 
    & ~~~~\forall \phi.\ \textit{ist(says(FriarLaurence)},\phi) \rightarrow \phi\\
    & ~~~~\forall x,y.\ \textit{madlyLoves}(x,y) \wedge \textit{ist(believes}(x), \textit{dead}(y)) \rightarrow \textit{willSuicide}(x)
\end{align*}
\noindent Here
\textit{believes}$(x)$ is the context of the beliefs of the agent $x$. Our example leads us to the following desiderata for expressivity:
\begin{description}
    \item[1. Truth Representation:] the ability to link truth in reality and in contexts (``$ist(c,\varphi)\!\!\rightarrow\!\!\varphi$'') 
    \item[2. Formula Quantification:] the ability to quantify over formulas (``$\forall \varphi.\ ist(c, \varphi)$'')
    \item[3. Context Quantification:] the ability to quantify over contexts (``$\forall c.\ ist(c, \varphi)$''), or even certain forms of context (``$\forall x.\ \textit{ist}(\textit{believes}(x), \varphi)$'')
\end{description}
\noindent Moreover, we want to perform automated reasoning, or at least semi-automated reasoning:
\begin{description}
    \item[4. Semi-decidability: ] Logical entailment $\Gamma \models \phi$ should be semi-decidable. 
\end{description}
 
%

\noindent Fulfilling these desiderata simultaneously is not trivial. One difficulty is that Desideratum 1 invites complications from the Theorem of Undefinability of Truth of \cite{Tarski1936-TARTCO}: A language cannot fully describe its own truth, assuming it includes basic arithmetic. 
This is because it allows self-referential statements, which lead to contradictions.

One way to do contextual reasoning in logic is through modal logic. However, modal logic does not consider formulas as objects that one can quantify over.
Another classical way would be to use higher-order logics, but these (typically) quantify over predicates rather than the syntactic formulas themselves. Furthermore, they are usually not even semi-decidable. 
\cite{MOORE1980} proposes to \emph{quote} formulas as terms within the logic.
However, the notion of context in this approach is very restrictive.
For example, it lacks a dedicated mechanism to express statements as simple as
``\textit{If Juliet believes all Capulets are nice, then for any Capulet $x$, she believes $x$ is nice}''.

It seems that the promising idea of using object-level counterparts to formulas within first-order logic was never explored to produce a suitable framework for this form of general contextual reasoning.
Thus, to the best of our knowledge, no logical framework currently satisfies all 4 desiderata simultaneously (see Table~\ref{fig:tableDesi}, discussed in the related work section).

This article is an extended version of our previous conference paper~\cite{QianaKR24}, which proposes representing formulas within contexts as regular terms of the logic that obey a specific axiomatization. 
We borrow the \textit{ist} predicate from \cite{McCarthy93Context}.
We follow the idea of \cite{MOORE1980} to build terms that are structurally similar to formulas. 
(This idea is itself 
an extension of G\"odel's numbers, see~\cite{godel1931formal}.)
We use the idea of \cite{Tarski1936-TARTCO} 
to introduce a special truth predicate and ensure that this predicate cannot be quoted. 
We then show how these components can be axiomatized so that Desiderata 1-4 are fulfilled without falling for the complications of Tarski's theorem. 
The resulting framework, Qiana (Quantifying over Agents and Assertions), is finitely axiomatizable and can thus be used with any First-Order-Logic theorem prover.
We also introduce a special character $\quot$ to nest quotations within quotations. 
This allows for a larger array of manipulations around contexts, which are notably useful for our finite axiomatization process. Qiana can model agents' beliefs (as in our Romeo and Juliet example). It can also be used for paraconsistent reasoning (where a context contains contradictory statements), or to describe the differences between two fictional contexts (e.g., two versions of the same story). 

This paper 
first follows our original paper on Qiana~\cite{QianaKR24}: Section~\ref{sec:RelatedWork} discusses the related work; Section~\ref{sec:Preliminaries} introduced notations;
Section~\ref{sec:FormalismDef} explains how Qiana quotes formulas; 
Section~\ref{sec:def_qiana} defines Qiana;
Section~\ref{sec:Example} discussed simple applications of Qiana; and
Section~\ref{sec:FiniteAxio} describes the finite axiomatization process of Qiana for use with automated theorem provers.

In the second part of this paper, we provide more discussion that goes beyond the original paper~\cite{QianaKR24}: 
In Section~\ref{sec:Time}, we extend Qiana to reason about temporality and events. In Section~\ref{sec:Types}, we present an alternative version of Qiana that is based on typed logic. 
 In Section~\ref{sec:OtherFormalisms}, we discuss how usual modal logics can be represented in Qiana.
Finally, Section~\ref{sec:Conclusion} concludes.
Supplementary material, including the proofs of our theorems and the code of our implementation, is available at \url{https://github.com/dig-team/Qiana}.
    
\section{Related Work} \label{sec:RelatedWork}

\newcommand{\signYes}{yes}
\newcommand{\signNo}{no}
\newcommand{\signDunno}{?}
\newcommand{\signNA}{NA}
\begin{table*}[ht]
    \centering    
    \caption{Semi-decidability and the desiderata of Truth Representation, Formula Quantification, \\ \hspace*{1.2cm} and Context Quantification} \label{fig:tableDesi}
    \begin{tabular}{l l l l l} 
     \toprule
      & Truth$~~~$ & Formula & Context & Semi-\\ 
      & Representation & Quantif & Quantif &decidable \\
     \midrule
     \cite{MOORE1980} & \signYes & \signYes & \signNA & \signYes\\      
     \cite{kif} & \signYes & \signYes & \signYes & \signNo \\ 
     \cite{MultiModalHalpernM92} & \signYes & \signNo & \signNo & \signYes \\  
     \cite{McCarthy93Context} & \signYes & \signNo & \signNo & \signNA \\ 
     \cite{giunchiglia1993contextual} & \signNA & \signNo & \signNo & \signNA \\ 
     \cite{BuvacM93Propositional}& \signNA & \signNo & \signNo & \signYes \\ 
     \cite{Buvac96FOL} & \signNA & \signNo & \signYes & \signNo \\ 
     \cite{DBLP:journals/ai/GhidiniG01} & \signYes & \signNo & \signNo & \signYes \\
     \cite{DBLP:conf/sbia/Perrussel02} & \signNo & \signNo & \signYes & \signNo \\    
     \cite{DBLP:journals/puc/RanganathanC03} & \signNA & \signNo & \signNo & \signYes\\
     \cite{carroll2005named} & \signYes & \signNo & \signNo & \signYes\\
     \cite{DBLP:conf/ijcai/BrewkaEFW11} & \signNo & \signNo & \signNo & \signYes\\
     \cite{commonlogic} & \signYes & \signNo & \signYes & \signNA \\ 
     \cite{DBLP:conf/iccs/AljalboutBF19} & \signYes & \signNo & \signNo & \signYes \\    
     \cite{higher-order}$~~~$ & \signNA & \signNo & \signNA & \signNo \\ 
     \cite{rdf-star} & \signYes & \signNo & \signNo & \signYes\\     
     Qiana & \signYes & \signYes & \signYes & \signYes\\   
     \bottomrule
    \end{tabular}
    \end{table*}


John McCarthy observed that many statements are true only in a specific context~\cite{McCarthy87Turing}.
Several follow-up works have elaborated on this idea, but none of them allow for Context Quantification and Formula Quantification. 
The first of these elaborations was by McCarthy himself \cite{McCarthy93Context}. He proposed to write 
$\textit{ist}(c,p)$ to say that $p$ is a proposition that is true in the context $c$. Thus, contexts are treated as objects representing a state of the universe at a given instant. However, this work was based on propositional logic. Hence, it cannot deal with first-order formulas, let alone quantify over contexts or formulas. 

\cite{BuvacM93Propositional} and \cite{BuvacBM94Propositional} formalized 
a propositional modal logic version of McCarthy's idea, which is sound, complete, and decidable. 
In their formalism, \emph{ist} is treated as a binary modality over propositions. 
Again, there is no possibility of quantifying over contexts or formulas.
\cite{Buvac96FOL} extended this work to first-order logic and allowed the description of contexts through properties. For instance, $\forall c.\ p(c)\rightarrow \textit{ist}(c,\phi)$ means that 
the formula $\phi$ is true in every context with the property $p$. 
This logic is sound and complete; the work was the first to allow quantification over contexts. However, unlike our approach, all contexts must have perfect knowledge of each other's beliefs, i.e., everyone knows what everyone else thinks.
Furthermore, unlike our approach, \cite{Buvac96FOL} does not allow for quantification over formulas.\footnote{According to \cite{GuhaMF04ContextSemWeb}, it is also not semi-decidable. However, \cite{Buvac96FOL} contains proofs of completeness and soundness, which entail semi-decidability.}  

Moore's work on reasoning about knowledge~\cite{MooreTechReport,MOORE1980} avoids the issues of self-reference in higher-order logic 
by representing formulas as terms within the logic. A special truth predicate connects these terms to their formula counterparts. This will also be done in Qiana. However, Moore's notion of context 
is quite restrictive: its many-worlds semantics assumes that contexts are logically omniscient (if something is true within a context, then all its consequences are also true). 
This is unsuitable to represent the knowledge of humans, whose reasoning depth is limited.
It also lacks an equivalent to our special escape function symbol $\quot$, which is used to put any given value into a quotation. 
Such a feature is important 
to present axioms that connect what is true outside of contexts to what is true within them,
e.g., for
statements of the form
``If in a context it is true that a statement holds for all $x$, then that statement holds for all $x$ in that context''.
Furthermore, there is no finite axiomatization, and thus, the method does not allow the use of state-of-the-art theorem provers that Qiana permits. 

Other works fall in the realm of epistemic and doxastic logics, which deal with the knowledge and beliefs of agents, respectively. The modal approach has been widely adopted for both cases~\cite{epistemic}.
For instance, \cite{MultiModalHalpernM92} proposed 
a multi-modal logic to deal with the knowledge and beliefs of multiple agents,
where each agent has its own operators.
For example, $\mathsf{K}_i \phi$ means that the agent $i$ knows $\phi$, and $\mathsf{B}_i \phi$ means  
that $i$ believes $\phi$.
Considering only the knowledge operators, 
this logic is equivalent to the formalism of \cite{BuvacM93Propositional}, 
where each context is equivalent to a specific modality. 
However, these modal approaches have no way to quantify over contexts. 
Furthermore, these approaches focus on propositional logic and cannot deal with first-order formulas like Qiana. 

\cite{giunchiglia1993contextual} and \cite{giunchiglia1997introduction} treat each context as a logical theory with its own language, set of axioms, and set of rules.
The main goal in this series of works is the translation of formulas from one context to another. 
The works in this series study only the propositional case and introduce no quantification.
\cite{DBLP:journals/ai/GhidiniG01} 
propose that contexts need two principles: locality (what is known by the agent) and compatibility (enforcing a kind of coherence in viewpoints). While this approach can deal with first-order formulas, it does not allow 
for quantified formulas or quantified contexts. 

The Knowledge Interchange Format KIF~\cite{kif} is a data format for database knowledge exchange. 
With the help of a quotation operator, a formula can be reified and handled as a syntactic element. 
However, KIF does not admit any complete proof theory. It is not even semi-decidable because it goes beyond first-order logic~\cite{higher-order}. 
KIF's successor, Common Logic~\cite{commonlogic} (CL), is a framework for a family of FOL-based languages. 
Unlike KIF, CL has no quotation operator, and it does not have a built-in mechanism for handling contexts. While some subsets of CL admit a complete proof theory, 
there is still no complete proof theory for Common Logic as a whole~\cite{commonlogic,proof_cl,menzel2013completeness}. 
\cite{FabianMT} proposes a translation from KIF to disjunctive logic programs, but also does not offer a complete proof theory.
\cite{DBLP:conf/sbia/Perrussel02} proposes a many-sorted modal first-order logic. This approach cannot 
quantify over formulas, or express formulas such as $\textit{ist}(c,\phi) \rightarrow \phi$ since no ``super context'' represents the real world. 

In the work of \cite{DBLP:journals/puc/RanganathanC03}, contexts are first-order predicates like \emph{Location} or \emph{Temperature}. Hence, the approach cannot deal with 
quantified formulas. 
\cite{DBLP:conf/ijcai/BrewkaEFW11} propose an approach to deal with different sources of knowledge. Each context is a knowledge base with its language, and bridge rules allow communication between contexts and handle inconsistencies. Intrinsically, it is not possible to quantify over formulas or contexts. 
More recently, \cite{DBLP:conf/iccs/AljalboutBF19} proposed a two-dimensional ontology language that allows defining context-dependent classes, properties, and axioms. It also allows expressing knowledge about contexts to reason on contextualized triples. However, it is impossible to quantify over formulas or contexts. Furthermore, the work uses description logics, which has limited expressiveness w.r.t. first-order logic.

Other approaches of the Semantic Web, like RDF-star~\cite{rdf-star} and named graphs~\cite{carroll2005named}, can handle context but not truth representation. They can handle neither quantification over contexts nor over formulas. 
One may think that second-order logic~\cite{higher-order} could be of help. However, classical higher-order logics allow quantification over predicates, not over formulas. 
\cite{SteenModal} extend the prover Leo-III with a form of higher-order modal logic, but still does not allow quantification over formulas. 

We thus conclude that no semi-decidable framework currently satisfies the desiderata of Truth Representation, Formula Quantification, and Context Quantification.


\section{Notations} \label{sec:Preliminaries}
    Our work relies on the usual notions of first-order logic (FOL) (see, e.g., ~\cite{BookLogic} for a primer).
We use standard syntactic sugar notations, 
writing, e.g., $\varphi \rightarrow \psi$ for~$\neg \varphi \lor \psi$.
We also use the usual substitution meta-notation:~$\varphi[x \leftarrow t]$ denotes the formula obtained by recursively replacing all occurrences of variable $x$ with $t$ in $\varphi$ until a quantification over $x$ is reached. 

For our purposes, a \emph{signature} $S$ is a tuple $(F,P,V_\infty,\delta)$, where $F$ is a set of function symbols, $P$ is a set of predicate symbols, $V_\infty$ is an infinite set of variables,
 and $\delta : P \cup F \rightarrow \mathbb{N}$ a function that gives the arity of each symbol. Constant symbols are function symbols of arity 0.
A given signature defines a set \termset of terms, and a set \formuset of formulas.

\newcommand{\sem}[1]{[\![#1 ]\!]}
A \emph{model} (sometimes called an \textit{interpretation} in the literature) is a tuple $(D, [\![ ]\!])$ where $D$ is a non-empty set called the \emph{domain of the model}, and $[\![ ]\!]$ is the interpretation mapping that maps each 
function symbol $f$ to a function $[\![f]\!] \in D^{\delta(f)}\!\rightarrow\!D$, and each predicate symbol $p$ to a function $[\![p]\!] \in D^{\delta(p)}\!\rightarrow\!\{0,1\}$.
An assignment is a partial function $\assignment : V_\infty \rightarrow D$.
Given an assignment for the free variables $\assignment$, we recall that terms (e.g., $1+x$) are interpreted as elements in the domain (e.g., $1+x$ is interpreted as the element $\sem{+}(\sem{1}, \assignment(x))$), and formulas (e.g., $p(x)$) are interpreted as true/false (e.g., the semantics of $p(x)$ is $\sem{p}(\assignment(x))$, which is either 0 or 1).

A theory is a set of formulas. Contrary to some definitions in the literature, we do not require theories to be closed under entailment.
An axiom schema is a formula with meta-variables (such as $\neg \neg \varphi \rightarrow \varphi$, where $\varphi$ is a meta-variable that stands for a formula). We will occasionally write the name of the axiom schema to stand for the set of all its instantiations in a given signature.

Let $M$ be a model, $\assignment$ an assignment of values in the domain of $M$ to free variables, and $H$ a theory. 
We write $M, \assignment \models \varphi$ to say that the formula $\varphi$ is true in model $M$, where all the free variables of $\varphi$ are defined in $\assignment$.
We omit $\assignment$ if there are no free variables.
We write $H \models \varphi$ to say that $\varphi$ is a semantic consequence of $H$.
A closed formula that is true in at least one model is coherent. A theory for which there is a model that makes all the formulas true is also called coherent.

\section{Quoting and Unquoting Formulas} \label{sec:FOdefs} \label{sec:FormalismDef} 
%

The main idea of Qiana is to represent formulas that are true only in a specific context by \emph{quoted formulas}. Technically, a quoted formula is a term that represents a formula. 
FOL does not allow to manipulate formulas as objects, which is why we need to introduce our quoted formulas, which are terms that serve as counterparts of formulas we can manipulate.
Intuitively speaking, quoting a formula consists of 
replacing each logical connective, variable, predicate, and function symbol with a fresh function symbol. We denote the quoted counterpart of a symbol $z$ by $\fquo{z}$:
\begin{example}
    The quotation of formula the $p(x) \land (1+x = 2)$ is the term
$ \fquo{\land}(\fquo{p}(\fquo{x}), \fquo{=}(\fquo{+}(\fquo{1},\fquo{x}),\fquo{2}))$ 
where $\fquo{p}, \fquo{\land},\fquo{1},\fquo{+},\fquo{x},\fquo{=},\fquo{2}$ are quoted counterparts to the original symbols $p$, $\land$, 1, +, x, =, and 2.
For convenience and readability, we can write the formula as
$\fquo{p}~(\fquo{x})~ \fquo{\land} ~ (\fquo{1}~\fquo{+}~\fquo{x}~\fquo{=}~\fquo{2})$ 
\end{example}

\noindent We need to quote formulas that already contain quotations. To this end, we introduce a special function symbol $\quot$ (read ``quote''), which acts as an escape character and provides a way to nest quotations. The symbol $\quot$ is called the \emph{escape operator} or the \emph{quote operator}.
The $\truth$ symbol is called truth. Its behavior is defined in Section~\ref{sec:axioms}.




\noindent To accommodate all these additional symbols, we extend our signature. We write $\sqcup$ for the disjoint union and define:
\ignore{
We furthermore assume that there is a function symbol $\ist$ of arity 2 in \sigbase, which we will need in Section~\ref{sec:FOdefs} to describe what is true in a given context.
Recall that the set $V_\infty$ of variables is infinite. 
For technical reasons (having a finite axiomatization, see Section~\ref{sec:FiniteAxio}), we only allow quoting variables from a fixed finite subset $V$ of $V_\infty$. 
Moreover, we assume that $|V|$ is ``sufficiently large''\footnote{ meaning it is greater than 2 and at least as large as the highest symbol arity of our logic \FS{expliquer un peu plus}}.
}
\begin{definition}[augmented signature] \label{def:qcs}
Given a signature \sigbase = $(F_b, P_b, V_\infty, \delta_b)$
and a finite $V \subseteq V_\infty$, 
the augmented signature $S$ is the tuple $(F,P,V_\infty,\delta)$  with
   \begin{itemize}
        \item $P = P_b \sqcup \{\truth\}$
        \item $F = F_b \sqcup \fq \sqcup \fpq \sqcup \cv \sqcup \{\bland, \bneg, \bforall, \quot\}$ where
        \begin{itemize}
            \item $\fq = \{\fquo{f} \mid f \in F_b\}$
            \item $\fpq = \{\fquo{p} \mid p \in P\}$
            \item $\cv = \{\fquo{x} \mid x \in V\}$
        \end{itemize}
        \item $V_\infty$ remaining the same
        \item $\delta$ specifying that $\bland$, $\bforall$, $\bneg$, and $\quot$ are of arities 2, 2, 1, and 1, respectively.
        Furthermore, $\fquo{f}$ has the same arity as $f$, and $\fquo{p}$ has the same arity as $p$. The arity of $\fquo{x}$ is 0 for all $x$. The arity of $\truth$ is 1.
   \end{itemize}
\end{definition}
\noindent Without loss of generality, we assume that all the new symbols we introduce are not already in $\sigbase$. In what follows, we assume a fixed signature $\sigbase$ with an augmentation $S$. This signature implicitly defines the set $\termset$ of all terms.
We write $a \blor b$ as syntactic sugar for $\bneg (\bneg a \bland ~ \bneg b)$, and $a \bm{\rightarrow} b$ as syntactic sugar for $\bneg(a) \blor b$.


\subsection{Quotation Sets}

Now that we have a quotation-compatible signature, we want to define the quotation function $\mu$, which takes a formula and returns its quoted equivalent. For this purpose, we have to introduce a number of subsets of the set $\termset$ of all terms, which can be
roughly described as follows:
\begin{itemize}[itemsep=0em,parsep=0em,topsep=0pt,partopsep=1ex]
\item $\tqt$ is the subset of all quotations of well-formed terms. For example, $\tqt$ contains the terms $\fquo{+}(\fquo{1}, \fquo{1})$ (which is the quotation of the term $+(1, 1)$) and $\fquo{f}(\quot(\fquo 1))$ (which is the quotation of the term $f(\fquo 1)$).
\item $\tqf$ is the subset of all quotations of well-formed (possibly not closed) formulas. For example, $\tqf$ contains the terms $\fquo{p}(\fquo x)$ (which is the quotation of the formula $p(x)$) and $\fquo{p}(\quot(\fquo 1))$ (which is the quotation of the formula $p(\fquo 1)$).
    \item $\tq$ is the set of \emph{all} terms made up of quotation symbols. The set $\tq$ includes both $\tqt$ and $\tqf$. However, it also contains quotations of non-well-formed terms and formulas. For example, $\tq$ contains the term $\fquo{p} ( \fquo{x}$ $\bland~\fquo 1 )$ (which is the ``quotation'' of the non-well-formed expression $p(x \land 1)$).
\end{itemize}

\noindent We provide Backus-Naur definitions of $\tqt$, $\tqf$, and $\tq$ below.
For each of the subsets $\tqt$, $\tqf$, and $\tq$ we introduce (respectively) $\tqtv$, $\tqfv$, and $\tqv$, which have similar definitions except that they also contain the construction $\quot(x)$ with $x$ a variable in $V$.
This is necessary to allow variables in quotations. 

We start by defining $\tq$ and $\tqv$ formally by two Backus-Naur Forms. Our grammars 
apply to 
quotations of both terms and formulas:

\begin{definition}
\label{def:QQv}
    The sets $\tq$ and $\tqv$ are defined inductively by: 
    \begin{align*}  
    & \tq \hspace*{0.1cm} := \fquo{x} \mid \bm f(t_1, \dots, t_n) \mid \bm p(t_1, \dots, t_n) \mid \bland (t_1, t_2) \mid \bneg (t) \mid \bforall (\bm x,t) \mid \quot(t) \\
    & ~~~~~~~~~\text{\emph{for}}~~ \fquo{x} \in \cv, \bm f \in \fq, \bm p \in \fpq, t, t_1, ..., t_n \in \tq \\
    & \tqv \hspace*{0.1cm} := \bm x \mid \bm f(t_1, \dots, t_n) \mid \bm p(t_1, \dots, t_n) \mid \bland (t_1, t_2) \mid \bneg (t_1) \mid \bforall (\bm x,t) \mid \quot(t) \mid \quot(x)\\
    & ~~~~~~~~~\text{\emph{for}}~~ \fquo{x} \in \cv, \bm f \in \fq, \bm p \in \fpq, t, t_1, ..., t_n \in \tqv
    \end{align*}
\end{definition}

\begin{remark}
    $\tqv$ is $\tq$ extended by the term $\quot(x)$ for each variable $x \in V$.
    Hence $\tq \subseteq \tqv$.
\end{remark}
\begin{example}
    Let $x$ be a variable and $f$ a function symbol of arity 1.
    Then $\bm f(\bm x) \in \tq$.
    Also, $\bm f(\quot(x)) \in \tqv, \not\in \tq$.
\end{example}



\noindent The subsets of $\tqt$, $\tqtv$, (resp. $\tqf$ and $\tqfv$) are also defined by Backus-Naur Forms; but this time, we permit only quotations of well-defined terms (resp. formulas) except with a special $\quot$ symbol. 
\begin{definition}
\label{def:TTvLLv} The sets $\tqt, \tqtv, \tqf, \tqfv$ are defined inductively by
    \begin{align*}  
        & \tqt \hspace*{0.1cm} := \bm x \mid \bm f(t_1, \dots, t_n)  \mid \quot(t_q)\\
        & ~~~~~~~~~\text{\emph{for}}~~ \fquo{x} \in \cv, \bm f \in \fq, t_q \in \tq, t_1, \dots, t_n \in \tqt \\
        & \tqtv \hspace*{0.1cm} := \bm x \mid \bm f(t_1, \dots, t_n)  \mid \quot(t_q) \mid \quot(x)\\
        & ~~~~~~~~~\text{\emph{for}}~~ \fquo{x} \in \cv, \bm f \in \fq, t_q \in \tq, t_1, \dots, t_n \in \tqtv \\
        & \tqf \hspace*{0.1cm} := \bm p(t_1, \dots, t_n) \mid \bm \varphi_1 \bland \bm \varphi_2 \mid \bneg \bm \varphi_1 \mid \bforall(\bm x, \varphi_1) \\
        & ~~~~~~~~~\text{\emph{for}}~~ \fquo{x} \in \cv, t_1, \dots, t_n \in \tqt, \bm p \in \fpq, \varphi_1, \varphi_2 \in \tqf \\
        & \tqfv \hspace*{0.1cm} := \bm p(t_1, \dots, t_n) \mid \bm \varphi_1 \bland \bm \varphi_2 \mid \bneg \bm \varphi_1 \mid \bforall(\bm x, \varphi_1) \\
        & ~~~~~~~~~\text{\emph{for}}~~ \fquo{x} \in \cv, t_1, \dots, t_n \in \tqtv, \bm p \in \fpq, \varphi_1, \varphi_2 \in \tqfv
    \end{align*}  
\end{definition}

\begin{remark}
    $\tqtv$ (resp $\tqfv$) is to $\tqt$ (resp. $\tqf$) what $\tqv$ is to $\tq$.
    They are identical in structure, except that $\tqtv, \tqfv$, and $\tqv$ have variables injected.
\end{remark}

\noindent In this definition, $\tqt$, $\tqtv$ (resp. $\tqf$, $\tqfv$) 
contain only quotations of well-formed terms (resp. formulas) -- except in $\quot(t_q)$ where $t_q$ may be a ``quotation'' of a non-well-formed expression.



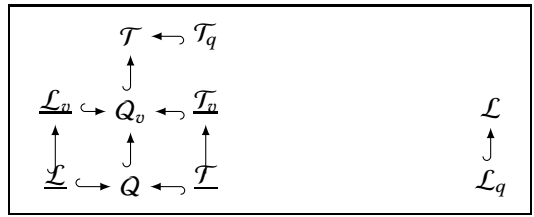
\begin{figure}[h]
    \centering
    \fbox{
    \tikzstyle{includedin} = [right hook-latex]
   \begin{tikzpicture}
    \node (termset) at (0, 2) {$\termset$};
    \node (quotaT) at (1, 2) {$\quotableTerm$};
    \node (tqv) at (0, 1)  {$\tqv$};
    \node (tq) at (0, 0)  {$\tq$};

    \node (tqfv) at (-1, 1)  {$\tqfv$};
    \node (tqf) at (-1, 0)  {$\tqf$};

    \node (tqtv) at (1, 1)  {$\tqtv$};
    \node (tqt) at (1, 0)  {$\tqt$};
  
   \draw[includedin] (tqfv) -- (tqv);
   \draw[includedin] (tqf) -- (tqfv);
\draw[-latex] (tqt) -- (tqtv);

   \draw[includedin] (tqt) -- (tq);
   \draw[includedin] (tqtv) -- (tqv);
   
   \draw[includedin] (tqf) -- (tq);
   \draw[includedin] (tq) -- (tqv);
   \draw[includedin] (tqv) -- (termset);
   \draw[includedin] (quotaT) -- (termset);
   \end{tikzpicture}
   \hspace{3cm}
   \begin{tikzpicture}
      \node (quotableFormu) at (0, 0)  {$\quotableFormu$};
   \node (L) at (0, 1) {$\mathcal L$};
   \draw[includedin] (quotableFormu) -- (L);
\end{tikzpicture}
    }
    \caption{Inclusion relationship ($\hookrightarrow$) between subsets of the set of terms $\termset$ and subset $\quotableFormu$ of $\mathcal L$.}
    \label{fig:venndiagramterms}
\end{figure}

\subsection{Quoting}


We now define the quotation function $\mu$. This function takes a quotable formula or term and outputs its quotation.
We define~$\mu$ jointly with the sets of terms and formulas it can be applied to.

The set of quotable terms $\quotableTerm$ is the set of terms present in quotable formulas. These terms contain only variables from $V$ and are recursively formed with non-quotation symbols (the non-underlined symbols) or with the image by $\mu$ of quotable elements.
By quotable elements, we mean elements of $\quotableTerm$ or $\quotableFormu$.
The set of quotable formulas $\quotableFormu$ is the set of formulas for which we can produce quotations. 
They contain only variables in $V$, do not contain the predicate $\truth$, and use only terms from $\quotableTerm$.
These are the formulas we can handle as objects of the logic, the formulas we can say are true or false in a context.

We define $\quotableTerm$, $\quotableFormu$, and $\mu$ jointly by mutual induction.
\begin{definition} The sets $\quotableTerm, \quotableFormu$ are defined inductively by
    \begin{align*}
        & \quotableTerm \hspace*{0.1cm} := x \mid t_q \mid f(t_1, \dots, t_n)\\
        & ~~~~~~~~~\text{\emph{for}}~~ x \in V, f \in F_b, t_1, \dots, t_n \in \quotableTerm, t_q \in \mu(\quotableTerm \cup \quotableFormu)\\
        & \quotableFormu \hspace*{0.1cm} := p(t_1, \dots, t_n) \mid \forall x.\ \varphi \mid \neg \varphi \mid \varphi_1 \land \varphi_2 \\
        & ~~~~~~~~~\text{\emph{for}}~~ p \in P_b, t_1, \dots, t_n \in \quotableTerm, x \in V, \varphi, \varphi_1, \varphi_2 \in \quotableFormu\\
    \end{align*}
\end{definition}
\begin{definition}
    $\mu$ is inductively defined on $\quotableTerm \sqcup \quotableFormu$ by:
    \begin{align*}
        \mu(f(t_1, \dots, t_n)) & := \fquo f (\mu(t_1),\dots,\mu(t_n)) \\ 
        \mu(t_q) & := \quot(t_q) \\
        \mu(p(t_1, \dots, t_n)) & := \fquo p (\mu(t_1), \dots, \mu(t_n)) \\
        \mu(x) & := \fquo x\\
        \mu(\phi_1 \land \phi_2) & := \bland(\mu(\phi_1),\mu(\phi_2)) \\
        \mu(\neg \phi) & := \bneg(\mu(\phi)) \\
        \mu(\forall x.\ \phi) & := \bforall (\fquo x, \mu(\phi))
    \end{align*}  
    \textbf{Range:} $f \in \fq$, $x \in V$, $t_1,\dots,t_n \in \quotableTerm$, $t_q \in \tq$.
\end{definition}

For ease of reading, whenever $\mu(\varphi)$ is defined, we write it as $\bm \varphi$, underlining the entire argument. For instance, we write $\fquo{p(x)}$ instead of $\fquo{p}~(\fquo{x})$. Instead of $\fquo{ist}(\fquo{x}, \quot(\fquo{happy}(\fquo{y})))$, we write $\bmnested{ist(x, \bmnested{happy(y)})}$.

\begin{example}
    The term $1+x$ with $x \in V$ is quotable, i.e., $1+x~~\in~~\quotableTerm$. 
    $1+y$ with $y \in V_{\infty} \setminus V$ is not. 
    The term $1 \fquo{+} x$ is not quotable because of the symbol $\fquo{+}$; but the term $\fquo{1}\fquo +\fquo x$ is quotable, because it is the image by $\mu$ of $1+x$, which is quotable.
\end{example}
\begin{example}
    The formula $p(x)$ with $x \in V$ is quotable, i.e., $p(x) \in \quotableFormu$. 
    The formula $P(1\bm +x)$ is not quotable since $1\bm +x$ is not a quotable term.
    The formula $p(\fquo 1)$ is quotable. 
\end{example}
\noindent Note that whenever we quote a formula that already contains a quotation, we put said quotation in the $\quot$ symbol, to add a level of quotation:
 
\begin{example}
    $\bmnested{P(\bmnested{1=1})} = \mu(P(\bmnested{1=1})) = \bmnested P(\quot(\bmnested{1=1}))$. 
\end{example}

\noindent The formulas in $\quotableFormu$ do not use the predicate $\truth$, it can never be quoted. This is an important restriction that avoids the complications of Tarski's Theorem of the undefinability of Truth (see Proposition~\ref{th:addTruth} in Section~\ref{sec:def_qiana}).

\subsection{Substitution on Quotations}

\noindent In our axiomatization, we also need a substitution function that substitutes not variables but quoted variables. Its definition is similar to the standard substitution on first-order logic formulas:
\begin{definition}
\label{def:substitutionquotations}
Given $t$ in $\termset$, given $x$ in $V$, we define $z[\fquo{x} \leftarrow t]_q$ on $z \in \tq$ inductively as:
    \begin{align*}
        \bm f(t_1, \dots, t_n)[\fquo{x} \leftarrow t]_q & = \bm f(t_1[\fquo{x} \leftarrow t]_q, \dots, t_n[\fquo{x} \leftarrow t]_q) \\
         \fquo{x}[\fquo{x} \leftarrow t]_q & = t \\
        \bm p(t_1, \dots, t_n)[\fquo{x} \leftarrow t]_q & = \bm p(t_1[\fquo{x} \leftarrow t]_q, \dots, t_n[\fquo{x} \leftarrow t]_q) \\
        \fquo{y}[\fquo{x} \leftarrow t]_q & = \fquo{y} \\
        \bforall(t_1, t_2)[\fquo{x} \leftarrow t]_q & = \bforall(t_1[\fquo{x} \leftarrow t]_q,t_2[\fquo{x} \leftarrow t]_q) \textnormal{ if } t_1 \neq \fquo{x} \\
        \bforall(\fquo{x}, t_2)[\fquo{x} \leftarrow t]_q & = \bforall(\fquo{x}, t_2) \\
        t_1[\fquo{x} \leftarrow t]_q & = t_1 \textnormal{ in all other cases}
    \end{align*}
    \textbf{Range:} $\bm x, \bm y \in \cv$ ($\bm x \neq \bm y$), $t_1, \dots, t_n \in \quotableTerm, \bm f \in \fq, \bm p \in \fpq$
\end{definition}
\noindent This quoted substitution function works just like the standard substitution:
\begin{example} Applying the quoted substitution function:
   $$\fquo p (\fquo x) \bland \bforall (\fquo y, \fquo q(\fquo x, \fquo y))[\fquo x \leftarrow 1+1]_q = \fquo p (1+1) \bland \bforall (\fquo y, \fquo q(1+1, \fquo y))$$
    
\end{example}



\subsection{Unquoting}

\noindent We are now ready to define the converse of the quoting operator $\mu$:
\begin{definition} \label{def:unquoteRecu} \label{def:unquoting}
The unquote operator $\uq$ is defined on $\tqv$ by the following recursive definition.
\begin{align*}
     \uq(\bm f(t_1, \dots, t_n)) & = f(\uq(t_1), \dots, \uq(t_n)) \\
     \uq(\bm p(t_1, \dots, t_n)) & = p(\uq(t_1), ..., \uq(t_n)) \\
     \uq(quote(t)) & = t \\
     \uq(\bm \varphi_1 \bland \bm \varphi_2) & = \uq(\bm \varphi_1) \land \uq(\bm \varphi_2) \\
     \uq(\bneg \bm \varphi) & = \neg \uq(\varphi) \\
     \uq(\bforall(\bm x,\bm \varphi)) & = \forall x.\ \uq(\bm \varphi[\bm x \leftarrow quote(x)]_q) \\
     \uq(\bm x) & = x \\
     \uq(t) & = t \textnormal{ in all other cases}
\end{align*}
\textbf{Range:} $\bm f \in \fq$, $\bm p \in \fpq$, $x \in V$, $t, t_1, \dots, t_n \in \tqv$
\end{definition}

The reason we use the notation $\uq$ for our unquote operator is that it is the inverse of $\mu$ on the image of $\mu$.
$\uq$ is not limited to the inverse of $\mu$, but it extends it.

\begin{proposition}
    $\mu$ is injective on $\mathcal{L}_q$.
\end{proposition}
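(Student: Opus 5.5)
We prove the stronger statement that $\mu$ is injective on all of $\quotableTerm \sqcup \quotableFormu$. The induction runs on the size of the argument, taking the maximum of the sizes of the two elements being compared; this is necessary because quotable terms and formulas are defined by mutual induction with $\mu$, so a single structural induction is not enough.

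The central tool is that the \emph{head symbol} of $\mu(e)$ determines which clause of the definition of $\mu$ produced it. By Definition~\ref{def:qcs}, the symbols of the augmented signature are pairwise distinct (disjoint union). So $\fquo f \in \fq$, $\fquo p \in \fpq$, $\fquo x \in \cv$, $\bland$, $\bneg$, $\bforall$ and $\quot$ are all distinct from each other. Moreover the maps $f \mapsto \fquo f$, $p\mapsto \fquo p$ and $x \mapsto \fquo x$ are themselves injective.

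Suppose $\mu(e_1) = \mu(e_2)$ and compare head symbols. We get the following cases.
\begin{itemize}
\item If the head is $\quot$, both elements fall under the clause $\mu(t_q)=\quot(t_q)$. Since $\quot(t_q)=\quot(t_q')$ gives $t_q=t_q'$ syntactically, we get $e_1=e_2$.
\item If the head is some $\fquo x$, both elements are the same variable $x$.
\item If the head is $\fquo f$, then $e_i = f(t^i_1,\dots,t^i_n)$ with $\mu(t^1_j)=\mu(t^2_j)$ for every $j$. The induction hypothesis on these smaller arguments gives $e_1=e_2$.
\item The cases $\fquo p$, $\bland$ and $\bneg$ are handled in the same way.
\item If the head is $\bforall$, then $\bforall(\fquo x,\mu(\varphi)) = \bforall(\fquo y,\mu(\psi))$. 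This gives $\fquo x=\fquo y$, hence $x=y$, and $\mu(\varphi)=\mu(\psi)$, hence $\varphi=\psi$ by induction.
\end{itemize}
Since terms and formulas are disjoint syntactic categories, and quoted predicate symbols and connectives never coincide with quoted function symbols, variables or $\quot$, a term and a formula can never have the same image. Restricting to $\quotableFormu$ then gives the proposition.

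The main obstacle is the overlap between the clause $\mu(t_q)=\quot(t_q)$ and the other clauses. An element $t_q \in \mu(\quotableTerm\cup\quotableFormu)$ is itself a term built from $\fquo f$, $\bland$, $\bforall$, and so on, and it also belongs to $\quotableTerm$. One might worry that such a $t_q$ could be quoted either by the $\quot$ clause or by a compositional clause. However, $t_q$ is built only from underlined symbols and $\quot$, never from symbols of $F_b$, so the clause $\mu(f(\dots))$ with $f\in F_b$ cannot apply to it. The variable clause cannot apply either, since $t_q$ is not an element of $V$.

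So the decomposition of a quotable element into a top-level clause is unique, and $\mu$ is a well-defined function. In particular, an image with head $\quot$ comes only from an element of $\tq$, and an image with head $\fquo f$ comes only from an application of the genuine base symbol $f\in F_b$. I will state this uniqueness-of-readability fact explicitly as a preliminary lemma before running the induction.
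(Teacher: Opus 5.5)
Your proof is correct and is essentially the paper's argument. The paper likewise proves injectivity on all of $\mathcal{T}_q \sqcup \mathcal{L}_q$ by induction on pairs $(\alpha,\beta)$, and you make explicit the head-symbol case analysis that determines which clause of $\mu$ produced both sides of $\mu(\alpha)=\mu(\beta)$.

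One minor remark: your max-size measure works but is not forced. Since the $\quot$ clause is a base case, ordinary structural induction on the first argument (universally quantifying over the second) already suffices.
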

\begin{proof}
    We can prove by induction on $(\alpha, \beta) \in (\quotableTerm \sqcup \quotableFormu)^2$ that $\mu(\alpha) = \mu(\beta)$ implies $\alpha = \beta$.
\end{proof}

\begin{proposition} \label{prop:unquoteRecu}
    $\forall x \in \quotableFormu \cup \quotableTerm, \uq(\mu(x)) = x$
\end{proposition}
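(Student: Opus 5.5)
We prove the statement by structural induction on $x$, following the mutual inductive definition of $\quotableTerm$ and $\quotableFormu$. The only case that is not a mechanical unfolding of the definitions is the universal quantifier, because there $\uq$ first performs a quoted substitution and only then recurses. We therefore prove a slightly stronger auxiliary claim first, which covers that substitution.

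\emph{Auxiliary lemma.} For every $\alpha \in \quotableTerm \cup \quotableFormu$ and every $x \in V$,
\[
\uq\bigl(\mu(\alpha)[\fquo{x} \leftarrow \quot(x)]_q\bigr) = \alpha .
\]
Since $\fquo{x}$ need not occur free in $\mu(\alpha)$, the proposition itself is the special case where the substitution does nothing. The lemma is proved by induction on $\alpha$, and the cases are as follows.
\begin{itemize}
\item $\alpha = x$. Then $\mu(x)[\fquo{x} \leftarrow \quot(x)]_q = \quot(x)$, and the clause $\uq(\quot(t)) = t$ returns the variable $x$.
\item $\alpha = y$ with $y \neq x$. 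Then $\mu(y) = \fquo{y}$ is unchanged by the substitution, and $\uq(\fquo y) = y$.
\item $\alpha = t_q \in \mu(\quotableTerm \cup \quotableFormu)$. Then $\mu(t_q) = \quot(t_q)$, which falls in the ``all other cases'' clause of the quoted substitution and is left unchanged. Next, $\uq(\quot(t_q)) = t_q$.
\item $\alpha = f(\dots)$, $\alpha = p(\dots)$, $\alpha = \neg\varphi$ or $\alpha = \varphi_1 \land \varphi_2$. Both the quoted substitution and $\uq$ commute with the outermost constructor, so the induction hypothesis closes these cases. One point needs care: the substitution definition lists $\bland$ and $\bneg$ only implicitly. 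We read it as distributing over them, as the worked example following the definition does; otherwise the statement fails for conjunctions that contain $\fquo{x}$.
\item $\alpha = \forall y.\ \varphi$. This is the main obstacle, discussed next.
\end{itemize}

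\emph{Quantifier case, $y = x$.} The substitution stops at $\bforall(\fquo x, \cdot)$. Then $\uq$ gives $\forall x.\ \uq(\mu(\varphi)[\fquo x \leftarrow \quot(x)]_q)$, and the induction hypothesis applied to $\varphi$ gives $\forall x.\ \varphi$.

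\emph{Quantifier case, $y \neq x$.} Here we obtain $\bforall(\fquo y, \mu(\varphi)[\fquo x \leftarrow \quot(x)]_q)$. Unquoting it then applies a second substitution $[\fquo y \leftarrow \quot(y)]_q$, so the induction hypothesis must handle a sequence of substitutions. The fix is to strengthen the lemma to a finite set $X \subseteq V$ of quoted variables replaced simultaneously, or sequentially in any order. Two facts make this work:
\begin{itemize}
\item Substituting $\quot(x)$ produces a term that contains no quoted variables. Later substitutions treat it as ``other cases'' and leave it intact, so sequential substitutions commute and compose.
\item The quantifier case of $\uq$ adds the bound variable to $X$. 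A later $\bforall(\fquo z, \cdot)$ with $z \in X$ removes $z$ again, by the shadowing clause of the substitution.
\end{itemize}
With this invariant the strengthened lemma goes through uniformly, and the proposition is the case $X = \emptyset$.

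The one real obstacle is choosing this induction hypothesis correctly; we would first check it against a doubly nested quantifier such as $\forall x.\forall y.\ p(x,y)$. A secondary check is that $\uq$ never enters the inside of a $\quot(t_q)$ coming from an already-quoted subterm. The clause $\uq(\quot(t)) = t$ guarantees this, so nested quotations are returned verbatim.
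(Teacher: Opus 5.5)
Your proof is correct and uses the same structural induction on $\quotableTerm \cup \quotableFormu$ that the paper cites in one line. It also makes explicit what that line glosses over: the $\forall$ clause of $\uq$ inserts a quoted substitution, so the induction hypothesis must be generalized to substituted images, as your set-$X$ version does. One side remark is inaccurate: since $\uq(\fquo{x}) = x = \uq(\quot(x))$, an occurrence of $\fquo{x}$ left unsubstituted under $\bland$ or $\bneg$ unquotes to the same variable, so the proposition (and your lemma) would still hold under the literal reading of Definition~\ref{def:substitutionquotations}.
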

\begin{proof}
    This is proven by induction on $x \in \quotableFormu \cup \quotableTerm$.
\end{proof}   

\noindent Intuitively, $\mu$ adds a level of underlining on quotable formulas and terms, and $\uq$ removes it:

 \begin{example}
 $\uq(\bmnested{ist(x, \bmnested{happy(y)})}) = ist(x, \bmnested{happy(y)})$
 \end{example}

\section{Defining Qiana} \label{sec:axioms} 
    \label{sec:def_qiana}


We can now define Qiana and its core axioms.
The predicate $\truth$ is designed to say that its argument is true in reality, as given in the following axiom schema (for all $\varphi \in \quotableFormu$):
\begin{align} 
    & \truth(\mu(\varphi)) \leftrightarrow \varphi \label{def:oldDefT} \tag{A$_\text{truth}$}
\end{align}
\noindent As famously shown by Alfred Tarski~\cite{Tarski1936-TARTCO}, this form of predicate can lead to self-referential formulas and incoherent theories (see \cite{BookLogic} 
for a more modern description). Here, the fact that we did not allow the quotation of \truth will protect us from the pitfalls of Tarski's theorem. 
We show this with Proposition~\ref{th:addTruth}:


\begin{proposition}\label{th:addTruth}
    Let $S^{-\truth}$ be the signature equal to $S$ without the symbol \truth.
    Let $H^{-\truth}$ be a coherent theory under the signature $S^{-\truth}$. Let $H$ be the closure of $H^{-\truth}$ under schema~\ref{def:oldDefT}. $H$ is coherent.
\end{proposition}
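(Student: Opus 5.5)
Since $H^{-\truth}$ is coherent, fix a model $M^- = (D, \sem{\ })$ of $H^{-\truth}$ over the signature $S^{-\truth}$. The plan is to expand $M^-$ to a model $M$ of the full signature $S$ by interpreting the single missing symbol $\truth$, and to check that $M$ satisfies every instance of \ref{def:oldDefT}. Because $\truth$ does not occur in $H^{-\truth}$, any such expansion still satisfies $H^{-\truth}$. So the whole task reduces to choosing $\sem{\truth}$ so that the schema holds. Here "closure of $H^{-\truth}$ under the schema" is read as $H^{-\truth}$ together with all instances of \ref{def:oldDefT}.

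The key observation is that no formula of $\quotableFormu$ contains $\truth$. Hence, for every $\varphi \in \quotableFormu$ and every assignment $\assignment$, the truth value of $\varphi$ in $M^-$ is already fixed, independently of how $\truth$ will be interpreted. Define
\[ \sem{\truth}(d) = 1 \iff \text{there exist } \varphi \in \quotableFormu \text{ and } \assignment \text{ with } \sem{\mu(\varphi)}_\assignment = d \text{ and } M^-, \assignment \models \varphi , \]
and set $\sem{\truth}(d) = 0$ otherwise. The right-hand side of an instance $\truth(\mu(\varphi)) \leftrightarrow \varphi$ is then evaluated in $M^-$. The left-hand side asks whether the element denoted by the term $\mu(\varphi)$ lies in the set just defined.

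The main obstacle is well-definedness. Two distinct formulas, or the same formula under two assignments, might have quotations denoting the same element while having different truth values. Injectivity of $\mu$ on $\mathcal{L}_q$ (the earlier proposition) does not settle this, since $M^-$ may identify distinct closed terms. The free variables raise a second difficulty: $\mu(\varphi)$ contains only the constants $\fquo x$, not the variables themselves. A free variable $x$ therefore affects the truth of $\varphi$ but not the denotation of $\mu(\varphi)$, so the instance can only hold robustly when $\varphi$ is closed, or when the schema is read over closed formulas.

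I would handle both problems by not taking $M^-$ arbitrary but modifying it. I would first use the fact that the quotation symbols and $\quot$ are fresh. If $H^{-\truth}$ imposes nothing on them beyond what coherence needs, I would reinterpret them freely. Concretely, I would replace the domain by $D \sqcup \tq$ and interpret every quotation symbol and $\quot$ syntactically, as a free term algebra; $\quot$ applied to an element of $D$ would be tagged as a new atom. Then closed quotations denote distinct elements exactly when they are distinct terms, and the earlier proposition gives the required injectivity. If $H^{-\truth}$ does constrain quotation symbols, I would fall back to a compactness argument instead. Each finite subset of instances mentions finitely many $\varphi$, and one must show that a consistent choice of truth values exists for it; I expect this to be where the real work lies, possibly requiring a restriction to closed $\varphi$ and a Herbrand-style term model of $H^{-\truth}$ in which distinct quotation terms are kept distinct. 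With well-definedness established, each instance of \ref{def:oldDefT} holds in $M$ by construction, so $H$ is coherent.
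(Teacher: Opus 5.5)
\textbf{Verdict: genuine gap.} You diagnose the problem correctly but do not repair it in general.

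\textbf{What you get right.} The paper's one-line proof keeps $M^{-\truth}$ unchanged and defines $\truth$ on $\mu(\varphi)$ by the truth of $\varphi$, exactly as you do, and it silently assumes both of your issues away. First, a one-element model of $\{p,\neg q\}$ (with $p,q$ nullary) sends $\fquo{p}()$ and $\fquo{q}()$ to the same element, so no expansion of that model works. Second, for open $\varphi$ such as $p(x)$, the universally read instance contradicts $p(a),\neg p(b)$, so the schema must be read over closed formulas.

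\textbf{Where your repair fails.}
\begin{itemize}
\item The free reinterpretation on $D \sqcup \tq$ is offered only when $H^{-\truth}$ says nothing about quotation symbols. That excludes the intended uses, e.g.\ any $H^{-\truth}$ containing $\textit{ist}(c,\fquo{p}(\fquo{a}))$.
\item Even in that special case, you do not say how the symbols of $F_b$, the predicates of $P_b$, and $\quot$ act on the new elements. You must, because quantified sentences of $H^{-\truth}$ such as $\forall y.\ r(y)$ now range over them.
\item For the general case you fall back on compactness. Compactness only reduces the task to finitely many instances. You still need a model of $H^{-\truth}$ in which the finitely many relevant quotations denote pairwise distinct elements, which is the original problem.
\item Your plan never uses the fact that makes the statement true: equality is not a logical symbol here. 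With built-in equality, $\{\fquo{p}()=\fquo{q}(),\ p,\ \neg q\}$ is coherent, but its closure under the schema is not.
\end{itemize}

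\textbf{The missing step} is a model transformation that works for every $H^{-\truth}$.
\begin{itemize}
\item Let $D'$ be the set of closed terms over $S^{-\truth}$, extended by a fresh constant for each $d \in D$.
\item Interpret every function symbol syntactically on $D'$.
\item Let $\mathrm{ev}\colon D' \to D$ be evaluation in $M^{-\truth}$, with the constant for $d$ sent to $d$.
\item Interpret each predicate $r$ by: $r'(e_1,\dots,e_n)$ holds iff $r(\mathrm{ev}(e_1),\dots,\mathrm{ev}(e_n))$ holds in $M^{-\truth}$.
\end{itemize}
The map $\mathrm{ev}$ is a surjective homomorphism that preserves and reflects every predicate, and there is no equality. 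So induction on formulas gives $M',\sigma \models \psi$ iff $M^{-\truth},\mathrm{ev}\circ\sigma \models \psi$; surjectivity handles the quantifier case. Hence $M'$ is a model of $H^{-\truth}$.

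In $M'$ every closed term denotes itself, and $\mu$ is injective. Therefore the following definition is well defined and satisfies every instance of the schema: $\truth$ is true on $\mu(\varphi)$ iff $M' \models \varphi$, for closed $\varphi \in \quotableFormu$, and false elsewhere. This is the same device the paper uses to build $M_f$ in the proof of Proposition~\ref{prop:fiBaseFiveFin}. With it you need neither a case split on what $H^{-\truth}$ says about quotation symbols nor compactness.
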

\begin{proof}
    Let $M^{-\truth}$ be a model of $H^{-\truth}$. We define $M$ (the model of $H$) as equivalent to $M^{-\truth}$ on all symbols except~$\mathbf{T}$. For each $\bm \varphi$ in $\tqf$ we check whether $\uq(\bm\varphi)$ is true under $M^{-\truth}$; $M \models \truth(\bm \varphi)$ if and only if that is the case. 
\end{proof}

\newcommand{\AoneAfour}{\ensuremath{A1\textnormal{-}4}\xspace}

\subsection{Truth Axioms}

The axiom schema \ref{def:oldDefT} is not explicitly in a Qiana theory. 
It will be subsumed by axiom schemas~\ref{ax:str1} to~\ref{ax:str4}, which conveniently admit direct counterparts in the finite axiomatization process of Section~\ref{subsec:infiToFinFiniteAxio}. Here,
$x_1, ..., x_n$ are distinct variables.
\begin{align}
    & \forall x_1, \dots, x_n.\ \truth(\bm p(t_1,\dots, t_m)) \leftrightarrow  p(\uq(t_1), \dots, \uq(t_m)) \label{ax:str1} \tagqianaaxiom \\
    & \forall x_1, \dots, x_n.\ \truth(A \bland B) \leftrightarrow (\truth(A) \land \truth(B)) \label{ax:str2} \tagqianaaxiom \\
    & \forall x_1, \dots, x_n.\ \truth(\bneg A) \leftrightarrow (\neg \truth(A)) \label{ax:str3} \tagqianaaxiom \\
    & \forall x_1, \dots, x_n.\ \truth(\bforall(\fquo{x} ,A)) \leftrightarrow  (\forall x.\ \truth(A[\fquo{x} \leftarrow \quot(x)]_q)) \label{ax:str4} \tagqianaaxiom
\end{align}
\textbf{Range:} $p \in P, t_1, \dots, t_n \in \tqtv, A, B \in \tqv, x \in V$\\
\noindent Schema~\ref{ax:str1} concerns the truth of the quotation of an atomic formula. 
Schema~\ref{ax:str2} and Schema~\ref{ax:str3} are about the Boolean connectives. 
Schema \ref{ax:str4} handles universal quantification through substitution and the $\quot$ predicate; we illustrate this behavior in Example~\ref{ex:TschemaApplication} below.
Note that, different from~\ref{def:oldDefT}, Schemas \AoneAfour apply also to non-well-formed terms.

\begin{example}
The formulas $\truth(\bneg \quo{P}()) \leftrightarrow \neg \truth(\quo{P}())$ and $\truth(\bneg 2) \leftrightarrow \neg \truth(2)$ are both instances of \ref{ax:str3}.
\end{example}

\noindent Axiom schema \ref{ax:str4} is the treatment of the universal quantification. We substitute the quotation $\fquo x$ of a variable $x$ by $\quot(x)$. Indeed, $\fquo x$ is a constant symbol, and this mechanism enables to effectively simulate the quantification through a quotation. 
The following example illustrates this mechanism.

\begin{example} \label{ex:TschemaApplication}
We show $\AoneAfour \hspace*{-0.1cm} \models \hspace*{-0.1cm} \truth(\bforall(\fquo{x} ,\fquo{P}(\fquo{x}))) \hspace*{-0.1cm}\leftrightarrow \hspace*{-0.1cm} \forall x.\ P(x)$. \\
To prove this, let $M$ be a model of $\AoneAfour$. We have:
    \begin{align*}
        & M \models \truth(\bforall(\fquo{x} ,\fquo{P}(\fquo{x}))) \\
        & \textiff M \models \forall x.\ \truth(\fquo{P}(\bm x)[\bm x \leftarrow \quot(x)]_q) & \text{as $M \models \ref{ax:str4}$} \\
        & \textiff M \models \forall x.\ \truth(\fquo{P}(\quot(x))) \\
        & \textiff M \models \forall x.\ P(\uq(\quot(x))) & \text{ as $M \models \ref{ax:str1}$} \\
        & \textiff M \models \forall x.\ P(x) & \text{ by Definition~\ref{def:unquoting}}
    \end{align*}
\end{example}
\noindent We are now ready to prove that any instance of  \ref{def:oldDefT} is a logical consequence of the theory \ref{ax:str1}-\ref{ax:str4}.

\begin{proposition} \label{prop:truthSubsumption}
$\AoneAfour \models$ \ref{def:oldDefT}. 
\end{proposition}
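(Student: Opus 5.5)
We must show that for every $\varphi \in \quotableFormu$, every model of \ref{ax:str1}--\ref{ax:str4} satisfies $\truth(\mu(\varphi)) \leftrightarrow \varphi$. Since $\varphi$ may have free variables (from $V$), we prove the stronger statement: for every model $M$ of \AoneAfour and every assignment $\assignment$, $M,\assignment \models \truth(\mu(\varphi))$ iff $M,\assignment \models \varphi$. We argue by structural induction on $\varphi$. The statement cannot be applied directly to the $\forall$ case, because \ref{ax:str4} produces $\mu(\psi)[\fquo x \leftarrow \quot(x)]_q$, which is not in the image of $\mu$. So we generalize the induction hypothesis.

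\textbf{Generalized claim.} For every $\varphi \in \quotableFormu$ and every finite set $X \subseteq V$ of variables, let $\mu_X(\varphi)$ denote $\mu(\varphi)$ with each free quoted variable $\fquo x$, $x \in X$, replaced by $\quot(x)$. Formally, this is the iterated application of $[\fquo x \leftarrow \quot(x)]_q$, which respects binding by Definition~\ref{def:substitutionquotations}. The claim is that
\[
M,\assignment \models \truth(\mu_X(\varphi)) \text{ iff } M,\assignment' \models \varphi,
\]
where $\assignment'$ agrees with $\assignment$ on $X$. The free variables of $\varphi$ outside $X$ correspond to constants $\fquo y$ left inside the term. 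Tracking these is awkward, so instead we restrict to $X \supseteq$ the free variables of $\varphi$. The $\forall$ step preserves this restriction: if $X$ covers the free variables of $\forall x.\psi$, then $X \cup \{x\}$ covers those of $\psi$. For $X = \emptyset$ and closed $\varphi$ we recover \ref{def:oldDefT}. For open $\varphi$, the universal closure in the schema reduces to the same computation, since the substituted instances $\truth(\mu_X(\varphi))$ are exactly what \ref{ax:str4} yields when quantifiers are peeled off. Alternatively, one can read the schema for open $\varphi$ as concerning $\mu_{FV(\varphi)}$; in any case, closed instances are the essential content.

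\textbf{Inductive steps.} The connective cases $\land$ and $\neg$ follow from \ref{ax:str2} and \ref{ax:str3}, because $\mu_X$ commutes with $\bland$ and $\bneg$. For $\forall x.\psi$, we have $\mu_X(\forall x.\psi) = \bforall(\fquo x, \mu_{X\setminus\{x\}}(\psi))$. Then \ref{ax:str4} gives $\forall x.\ \truth(\mu_{X\setminus\{x\}}(\psi)[\fquo x \leftarrow \quot(x)]_q) = \forall x.\ \truth(\mu_{X\cup\{x\}}(\psi))$, and the induction hypothesis applied to $\psi$ with $X \cup\{x\}$ finishes the case. For this we need a small commutation lemma: the quoted substitutions for distinct variables commute, and they do not enter the $\quot(t_q)$ subterms, since those fall under ``all other cases''.

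\textbf{Atomic case (main obstacle).} For $\varphi = p(t_1,\dots,t_m)$, axiom \ref{ax:str1} applies because each $\mu_X(t_i) \in \tqtv$. It yields $p(\uq(\mu_X(t_1)),\dots,\uq(\mu_X(t_m)))$. It remains to show that $\uq(\mu_X(t)) = t$ syntactically for every $t \in \quotableTerm$ with free variables in $X$. We prove this by induction on $t$, along the lines of Proposition~\ref{prop:unquoteRecu}:
\begin{itemize}
\item for a variable $x \in X$, we have $\uq(\quot(x)) = x$;
\item for $t = f(\dots)$, it follows by recursion;
\item for $t = t_q$ already a quotation, $\mu(t_q) = \quot(t_q)$ is untouched by substitution, and $\uq$ returns $t_q$.
\end{itemize}
The delicate point is checking that the substitution never modifies the inside of $\quot(t_q)$. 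This holds by the catch-all clause of Definition~\ref{def:substitutionquotations}, and it is what makes the nested-quotation bookkeeping consistent.
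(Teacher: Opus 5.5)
Your proof is correct, and its key idea is the paper's: extend the induction beyond the image of $\mu$ to quoted formulas in which quoted variables have been replaced by $\quot(x)$, so that the right-hand side of A4 is again covered by the hypothesis. The difference is which side you induct on.

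The paper inducts over quoted formulas. It proves $\forall x_1,\dots,x_n.\ \truth(A)\leftrightarrow\uq(A)$ for every $A\in\tqfv$ without free quoted variables, then takes $A=\mu(\varphi)$ and invokes Proposition~\ref{prop:unquoteRecu}. Its $\forall$ case is immediate, because the $\bforall$-clause of $\uq$ is literally the right-hand side of A4. It therefore needs neither a commutation lemma nor any $\mu_X$ bookkeeping. The price is twofold:
\begin{itemize}
\item That induction is not structural. $A[\fquo{x}\leftarrow\quot(x)]_q$ is not a subterm of $\bforall(\fquo{x},A)$, so one tacitly inducts on a size measure that substitution preserves.
\item $\tqfv$ contains terms such as $\bforall(\fquo{x},\fquo{P}(\quot(x)))$, where both A4 and $\uq$ capture $x$. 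This is harmless only because they capture identically.
\end{itemize}
Your induction on $\varphi\in\quotableFormu$ is genuinely structural and only ever meets capture-free terms $\mu_X(\varphi)$. The cost is the commutation lemma and the identity $\uq(\mu_X(t))=t$.

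One correction: your remark on open $\varphi$ is inaccurate as stated. The open instance of \ref{def:oldDefT} contains the constant $\fquo{x}$, and there is no $\bforall$ for A4 to peel off. This costs you nothing relative to the paper, whose lemma also excludes free quoted variables and so only yields closed instances. That is the intended reading, as the model built in the proof of Proposition~\ref{th:addTruth} indicates.

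If you do want the literal open instances, just drop the restriction $X\supseteq FV(\varphi)$. Because $\uq(\fquo{y})=y$, the identity $\uq(\mu_X(t))=t$ holds for every $X$, and the universally closed instances of A1 absorb the leftover constants $\fquo{y}$. Taking $X=\emptyset$ then gives the schema as written.
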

\begin{proof}
    We prove this via induction on the following property: 
    Let $A \in \tqfv$ with no free quoted variables (i.e.,~each $\bm x$ is quantified by a $\bforall$).
    Let $x_1, \dots, x_n$ be the free variables of $A$.
    Then \[\AoneAfour \models \forall x_1, \dots, x_n.\ \truth(A) \leftrightarrow \uq(A)\]
    We detail the proof in the supplementary material.
\end{proof}

\subsection{Axioms for Reasoning in Contexts} \label{subsec:reasoningAxioms}

Reasoning axioms endow contexts with some inference power. They say that contexts that ``know'' some things must also know some direct consequences of said things.
We first introduce a few general schemas to this end, including associativity, commutativity, and distributivity.
For example, schema~\ref{ax:qiana1} tells us that in any context (represented by variable $x_c$) if the conjunction of two formulas is true (represented by their quotations through variables $x_1$ and $x_2$), then the first of these formulas is also true.
%
\begin{align}
    & \forall x_c, x_1, x_2.\ \textit{ist}(x_c,x_1 \bland x_2) \rightarrow \textit{ist}(x_c, x_1) \label{ax:qiana1} \tagqianaaxiom\\
    & \forall x_c,x_1,x_2.\ \textit{ist}(x_c, {x_1 \bland x_2}) \leftrightarrow \textit{ist}(x_c,  {x_2 \bland x_1}) \label{ax:qiana2}  \tagqianaaxiom \\
    & \forall x_c,x_1.\ \textit{ist}(x_c, {\bneg \bneg x_1}) \leftrightarrow \textit{ist}(x_c, {x_1}) \label{ax:qiana3}  \tagqianaaxiom \\
    & \forall x_c,x_1,x_2,x_3.\ \textit{ist}(x_c, {(x_1 \bland x_2) \bland x_3})\leftrightarrow \textit{ist}(x_c, {x_1 \bland (x_2 \bland x_3)}) \label{ax:qiana4}  \tagqianaaxiom\\
    & \forall x_c,x_1,x_2,x_3.\ \textit{ist}(x_c, {(x_1 \bland x_2) \blor x_3}) \leftrightarrow \textit{ist}(x_c, {(x_1 \blor x_3) \bland (x_2 \blor x_3)}) \label{ax:qiana5}  \tagqianaaxiom
\end{align}
Other properties of associativity, commutativity, and distributivity can be deduced from the above, also with the help of the definition of $(a \blor b)$ as $\bneg (\bneg a \bland \hspace*{0.05cm} \bneg b)$. Next, we introduce the disjunctive syllogism (\textit{modus ponens}), which says that if an agent knows $\phi$ and $\phi \Rightarrow \psi$, then it also knows $\psi$:
\begin{align}
   &\forall x_c, x_1, x_2.\ \textit{ist}(x_c, x_1 \blor x_2) \land \textit{ist}(x_c, {\bm \neg} x_1) \rightarrow \textit{ist}(x_c,x_2) \label{ax:istMP} \tagqianaaxiom
\end{align}
\noindent We also introduce an axiom schema that gives a context some ability to handle $\forall$: A quoted formula can be replaced by each of its instantiations.
\begin{align} 
    & \forall c.\ \textit{ist}(c, \bforall(\fquo{x},\bm{\varphi})) \rightarrow \forall x.\ \textit{ist}(c, \bm{\varphi}[\fquo{x} \leftarrow \quot(x)]_q) \label{as:oldDefss} \tagqianaaxiom
\end{align}
\textbf{Range:} $x \in V$, $\bforall(\fquo{x}, \bm \varphi) \in \tqf$ \\

\noindent We illustrate the use of schema~\ref{as:oldDefss} with the example of Romeo and Juliet from the introduction (\textit{Cap} stands for being a member of Juliet's family, the Capulets):
\begin{example}~\label{ex:allCapuletsAreNice} Juliet believes that all Capulets are nice.
    \[\textit{ist}(\textit{believes}(J), \fquo{\forall x.\ \textit{Cap}(x) \rightarrow \textit{nice}(x)}) \rightarrow \forall x.\ \textit{ist}(\textit{believes}(J), \fquo{\textit{Cap}}(\quot (x)) \fquo{\rightarrow}\ \fquo{\textit{nice}}(\quot(x)))\]
\end{example}

\subsection{Qiana}

We can now formally define a Qiana-closure theory: 

\begin{definition}[Qiana-closure theory] \label{def:qiana}
Let $H$ be a theory. The Qiana-closure of $H$, denoted by $\hq$, is the theory
$$\hq = H  \cup \textrm{A1-A11}.$$
\end{definition}
\noindent As an immediate property, we have semi-decidability: 
\begin{proposition}[Semi-decidability]
If a theory $H$ is recursively enumerable, 
then the problem of deciding whether its Qiana closure $\hq$ entails some formula $\varphi$ is semi-decidable.
\end{proposition}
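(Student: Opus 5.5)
The plan is to reduce the claim to the semi-decidability of first-order entailment from a recursively enumerable set of premises. This follows from Gödel's completeness theorem together with the compactness of first-order logic. The key observation is that $\hq = H \cup \textrm{A1-A11}$ is an ordinary set of first-order formulas over the augmented signature $S$. So it suffices to show that $\hq$ is recursively enumerable whenever $H$ is, assuming, as usual, that the base signature $\sigbase$ is itself effectively presented (countable, with decidable symbol set and computable arity function $\delta_b$).

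First, I check that the schemas \ref{ax:str1}--\ref{ax:str4} and \ref{ax:qiana1}--\ref{as:oldDefss} each have a decidable (or at least r.e.) set of instances. Schemas \ref{ax:qiana1}--\ref{ax:istMP} are single formulas, so this is trivial for them. For \ref{ax:str1}--\ref{ax:str4} and \ref{as:oldDefss}, an instance is determined by a finite choice of parameters: a predicate symbol $p$, a variable $x\in V$, and terms $t_i\in\tqtv$ or $A,B\in\tqv$, or a term $\bforall(\fquo{x},\bm\varphi)\in\tqf$. I then argue that membership in $\tq$, $\tqv$, $\tqtv$ and $\tqf$ is decidable, because these sets are given by the Backus--Naur grammars of Definitions~\ref{def:QQv} and~\ref{def:TTvLLv}, so parsing a finite term decides membership by structural recursion. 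Next, $\uq$ (Definition~\ref{def:unquoting}) and the quoted substitution $[\fquo{x}\leftarrow t]_q$ (Definition~\ref{def:substitutionquotations}) are defined by structural recursion on finite terms and are therefore computable. So, given a parameter tuple, the corresponding instance can be computed, including the prefix $\forall x_1,\dots,x_n$ over the free variables, which are finitely many and computable. Enumerating all parameter tuples then enumerates all instances. Dovetailing this enumeration with an enumeration of $H$ gives an enumeration of $\hq$.

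Second, given $\varphi$, I search for a proof. By the completeness theorem, $\hq\models\varphi$ iff there is a finite derivation of $\varphi$ (for instance in a Hilbert system or by resolution refutation of $\neg\varphi$) using finitely many premises from $\hq$. The procedure enumerates pairs (finite subset of the enumerated $\hq$, candidate derivation) and checks each derivation mechanically, halting with ``yes'' when one is valid. It halts exactly when $\hq\models\varphi$, which is semi-decidability.

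The main, though mild, obstacle is the first step, namely making sure that the schemas are effectively instantiable. In particular, the ranges involve the sets $\tqtv$ and $\tqv$ and the partially defined recursive operators $\uq$ and $[\cdot]_q$, whose ``in all other cases'' clauses have to be checked to terminate on arbitrary finite terms. I would handle this by noting that every recursive call in those definitions is on a strict subterm, with the substitution in the $\bforall$ clause of $\uq$ preserving size up to replacing a constant by $\quot(x)$, so a size measure that counts $\quot(x)$ like a quoted variable decreases. Everything else is the standard completeness argument.
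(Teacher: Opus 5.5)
Your proposal is correct and takes the same route as the paper. The paper's proof just asserts that the schemas A1--A11 are recursive, so that $\hq$ is recursively enumerable whenever $H$ is, and then invokes the standard semi-decidability of entailment from an r.e. set of first-order premises. Your extra details (decidable membership in the quotation grammars, computability and termination of $\uq$ and $[\cdot]_q$, and the explicit assumption that $\sigbase$ is effectively presented) spell out what that one-line argument leaves implicit.
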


\begin{proof}
    Axiom schemas $\textrm{A1-A11}$ are recursive. Any recursively enumerable theory $\Sigma$ leads to semi-decidability of the entailment problem: given $\phi$, decide whether $\Sigma \models \phi$.
\end{proof}


\section{Examples and Discussion} \label{sec:Example}
    Let us now present some examples to illustrate how Qiana can be used to model epistemic knowledge. 

\subsection{Reasoning in Epistemic Contexts} \label{subsec:suicideExample}

Let us now reconsider the example of Romeo and Juliet from the introduction.
For simplicity's sake, we will not model time in this example. This choice will result in seemingly absurd simultaneity but should not hamper understanding. 
We start with our hypotheses from the introduction. We skip the description of suicide and consider death a direct consequence of believing one's love to be dead. 
\begin{align}
        & \forall \phi.\ \textit{ist}(\textit{says}(\textit{FriarLaurence}),\phi) \rightarrow \truth(\phi)\label{eq00}\\
        & \forall x,y.\ \textit{madlyLoves}(x,y) \land \textit{ist}(\textit{believes}(x), \quo{dead}(y)) \rightarrow \textit{dead}(x) \label{eq0}
\end{align}
Next, we state some obvious facts from the tragedy:
\begin{align}
    & \textit{madlyLoves}(\textit{Romeo, Juliet})\label{eq1} \\
    & \textit{madlyLoves}(\textit{Juliet, Romeo})\label{eq2} \\
    & \textit{ist}(\textit{says}(\textit{FriarLaurence}), \bforall(\bm{x},\quo{drinkPotion}(\bm{x}) \ \bm{\rightarrow}\ \quo{appearDead}(\bm{x}))) \label{eq3} \\
    & \textit{drinkPotion}(\textit{Juliet}) \label{eq4}
\end{align}
Finally, we need some world knowledge: by definition, people can see if someone appears dead. They can also see if someone is dead.
\begin{align}
    & \forall c, x.\ \textit{appearDead}(x) \ \rightarrow{} \ \textit{ist}(c, \quo{appearDead}(x)) \label{eq5}\\
    & \forall x,y.\ \textit{dead}(y) \rightarrow \textit{ist}(x,\quo{dead}(y))\label{eq5b}
\end{align}
The next hypothesis is perhaps best summed up as ``Romeo does not know how to check someone's pulse'':
\begin{equation} 
    \forall x. \textit{ist}(\textit{believes}(\textit{Romeo}), \quo{appearDead}(x) \rightarrow \quo{dead}(x)) \label{eq6}
\end{equation}
\noindent We can now see the tragedy unfold:
\begin{align}
   &\forall x.\textit{drinkPotion}(x) \rightarrow \textit{appearDead}(x)~\text{from~\ref{eq00},~\ref{eq3}}\label{eq7}\\
   &\textit{appearDead}(\textit{Juliet})  ~~\text{from~\ref{eq4} and~\ref{eq7}}\label{eq8}\\
   &\textit{ist}(\textit{believes}(\textit{Romeo}), \quo{appearDead}(\quo{Juliet})) ~~\text{from~\ref{eq5},~\ref{eq8}}\label{eq9}\\
   &\textit{ist}(\textit{believes}(\textit{Romeo}), \quo{dead}(\quo{Juliet}))  ~~\text{from~\ref{eq9} and~\ref{eq6}}\label{eq10}\\
   &dead(\textit{Romeo})  ~~\text{from~\ref{eq10},~\ref{eq1}, and~\ref{eq0}}\label{eq11}\\
   &\textit{ist}(\textit{believes}(\textit{Juliet}), \quo{dead}(\quo{Romeo}))  ~~\text{from~\ref{eq11} and~\ref{eq5b}}\label{eq12}\\
   &\textit{dead}(\textit{Juliet})  ~~\text{from~\ref{eq12},~\ref{eq0}, and~\ref{eq2}}
\end{align}

\subsection{Paraconsistency}\label{sec:para}
In first-order logic, an inconsistent theory can be used to deduce anything: if $H \vdash (\varphi \land \neg \varphi)$ then $H \vdash \textit{alive}(\textit{Elvis})$.
This phenomenon is called \emph{the principle of explosion}. While this is still true in Qiana theories, it is not true of the beliefs modeled inside contexts: A context can contain both a statement and its negation, and no axiom schema of Qiana allows deducing arbitrary statements from such beliefs (neither inside the context nor outside).
This can be useful, e.g., to model contradictory beliefs. 
In our running example of Romeo and Juliet, let us assume for a moment that Romeo did notice that Juliet had a pulse but did not conclude that Juliet was alive.
%
\begin{align*}
     H :=\ & \{ \textit{dead}(\textit{Juliet}), \textit{hasPulse}(\textit{Juliet}), \\
    & \hspace{-0.75cm} \forall x.\ \neg (\textit{alive}(x) \land \textit{dead}(x)), \forall x.\ \textit{hasPulse}(x) \rightarrow \textit{alive}(x) \}
\end{align*}
In normal first-order logic, this is an inconsistent theory, and it can thus be used to deduce anything: $H \vdash \textit{alive}(\textit{Elvis})$.
%
%
Qiana, in contrast, emulates a paraconsistent logic inside contexts. Hence, the principle of explosion does not apply inside contexts: 
%
\begin{align*}
     H' :=\ & \{ \textit{ist}(\textit{believes}(\textit{Romeo}), \quo{dead}(\quo{Juliet})) \\
    & \textit{ist}(\textit{believes}(\textit{Romeo}), \quo{hasPulse}(\quo{Juliet}))
\end{align*}
\begin{align*}
    & \textit{ist}(\textit{believes}(\textit{Romeo}), \bforall(\quo{x}, \bneg(\quo{alive}(\quo{x}) \bland \quo{dead}(\quo{x})))) \\
    & \textit{ist}(\textit{believes}(\textit{Romeo}), \bforall(\quo{x}, \quo{hasPulse}(\quo{x})\ \underline{\rightarrow}\ \quo{alive}(\quo{x}))) \}
\end{align*}
\noindent These contradictory thoughts now entail:
\begin{align*}
    H' & \vdash \textit{ist}(\textit{believes(Romeo)},\quo{dead}(\quo{Juliet}))\\
    H' & \vdash \textit{ist}(\textit{believes(Romeo)}, \bneg \quo{dead}(\quo{Juliet}))
\end{align*}
\noindent However, they do not imply that Romeo believes anything:
\begin{align*}
    H' \not\vdash \textit{ist}(\textit{believes(\textit{Romeo})},\quo{alive}(\quo{Elvis}))
\end{align*}
%
\noindent If we want to keep the principle of explosion inside contexts, we can 
add the following axiom schema to our theories (for all $\varphi, \psi \in \quotableFormu$):
\begin{align} 
& \forall c.\ ist(c,\bm \varphi) \rightarrow ist(c, \bm \varphi \blor \bm \psi)
\end{align} 
Together with \textit{modus ponens}, it allows to deduce anything from a contradiction.
From $ist(c, \varphi)$ we deduce $ist(c, \varphi \blor \psi)$. From $ist(c, \bneg \varphi)$ and $ist(c, \varphi \blor \psi)$ we deduce $ist(c, \psi)$.

\subsection{Mixing Different Types of Contexts}\label{subsec:types_contexts}
Until now, we have shown how to use contexts to model beliefs, and we have considered the play itself as the truth. 
However, contexts can also encapsulate a story.
To illustrate this, let us consider two versions of the story of Romeo and Juliet: the original and a fanfiction variant. 
In the fanfiction variant, Romeo decides to check Juliet's pulse and notices that she is alive. He waits for her to come to her senses and then they leave together and live happily ever after. 

We start by declaring that the fanfiction and the original are both stories:
\begin{align*}
    & \textit{story}(\textit{fanfiction}) \land \textit{story}(\textit{original})
\end{align*}
In the fanfiction, Romeo checks Juliet's pulse; in the original, he does not:
\begin{align*}
    & ist(\textit{fanfiction}, \quo{\textit{checkPulse}}(\quo{R}, \quo{J})) \\
    & ist(\textit{original}, \bneg \quo{\textit{checkPulse}}(\quo{R},\quo{J}))
\end{align*}
In all stories, if Romeo checks Juliet's pulse, he knows she is alive.
In all stories, for all persons, if Romeo does not feel their pulse and they appear dead, he thinks they are dead.
\begin{align*}
    & \forall s.\ \textit{story}(s) \rightarrow ist(s, \bmnested{\textit{checkPulse}(R, J) \rightarrow ist(\textit{believes}(R), \bmnested{alive(J)})}) \\
    & \forall s.\ \textit{story}(s) \rightarrow ist(s, \bmnested{\forall x.\ \textit{appearDead}(x) \land \neg \textit{checkPulse(R,x)} \rightarrow ist(\textit{believes}(R), \bmnested{\neg alive(x)})}) \\
\end{align*}
\noindent In all stories, Juliet appears dead.
In all stories, if Romeo knows Juliet is alive, he does not kill himself, but he does if he thinks she is dead. In all stories, the protagonists will be either both dead or both alive: 
\begin{align*}
    & \forall s.\ \textit{story}(s) \rightarrow ist(s, \fquo{appearDead(J)}) \\
    & \forall s.\ \textit{story}(s) \rightarrow ist(s, \bmnested{ist(\textit{believes}(R), \fquo{alive(J)}) \rightarrow alive(R)}) \\
    & \forall s.\ \textit{story}(s) \rightarrow ist(s, \bmnested{ist(\textit{believes}(R), \bmnested{\neg alive(J)}) \rightarrow \neg alive(R)})
\end{align*}
\noindent These formulas -- together with the axioms of Qiana -- are enough to deduce that there is the usual ending in the original version of the story and a vastly more fortunate one in the fanfiction story:
\begin{align*}
& \ist(\textit{original}, \fquo{\neg \textit{alive}(R)})\\
    & \ist(\textit{fanfiction}, \fquo{\textit{alive}(R)})
\end{align*}

\subsection{On the Use of the Quote Symbol $\quot$}

The quote symbol\footnote{$\textit{quote}$ in our previous article~\cite{QianaKR24}} $\quot$ is used to inject a value directly into a quotation.
When we apply the unquote operator $\uq$ to a quotation containing $\quot$, the content of $\quot$ will not be unquoted. For example:
\[\uq(\bm2) = \uq(\quot(2)) = 2\]
\[\uq(\bm P(\quot(2))) = P(2)\]


To avoid any confusion, we recall that $\quot$ is a symbol of the logic while $\mu$ is a meta operator called the `quotation operator''.
$\mu$ exists outside the logic itself and returns the quotation of a given formula or term, preserving its structure.
Most of the time, we write $\bm \varphi$ instead of $\mu(\varphi)$. 
By contrast, $\quot$ is just a function symbol within the logic. It represents a function of the domain of discourse, just like any other function symbol in FOL.

\noindent The two main purposes of $\quot$ are:
\begin{enumerate}
    \item Nesting quotations
    \item Using a variable inside a quotation
\end{enumerate}

\noindent We first illustrate nested quotations.
Let us say that Romeo believes that Juliet believes that he is smart.
We can write that Juliet believes that Romeo is smart as follows: 
\[\ist(\textit{believes}(J), \bm{\textit{Smart}}(\bm{R}))\]

To say that Romeo believes that Juliet believes that he is smart, we need to quote the formula above. To this end, we need the symbol $\quot$:
\[\ist(\textit{believes}(R), \bm{\ist}(\bm{\textit{believes}}(\bm{J}), \quot(\bm{\textit{Smart}}(\bm{R}))))\]

\noindent Using our notations, we could write this more compactly as follows:
\[\ist(\textit{believes}(R), \bmnested{\ist(\textit{believes}(J), \bmnested{\textit{Smart}(R)})})\]

\noindent If we instead wrote the formula without $\quot$, we would end up with:
\[\ist(\textit{believes}(R), \bm{\ist}(\bm{\textit{believes}}(\bm{J}), \bm{\textit{Smart}}(\bm{R})))\]

But $\bm{\ist}(\bm{\textit{believes}}(\bm{J}), \bm{\textit{Smart}}(\bm{R}))$ is not a well-formed quotation. If we were to try to unquote it, we end up with the formula below.
It is not a well-formed first-order formula, as $\textit{Smart}$ is a predicate and, therefore, cannot appear in the argument of the predicate $\ist$.
\[\uq(\bm{\ist}(\bm{\textit{believes}}(\bm{J}), \bm{\textit{Smart}}(\bm{R}))) = \ist(\textit{believes}(J), \textit{Smart}(R))\]

\noindent Hence we need the symbol $\quot$ to nest quotations.

Now we illustrate the second use of $\quot$: using a variable inside a quotation.
Let us say that Romeo is very naive: all liars have successfully convinced him that they are honest. This does not mean that Romeo believes that ``all liars are honest'', which would be written as:
\[\ist(\textit{believes}(R), \bforall(c_x, \bm{\textit{Liar}}(c_x) \bm\rightarrow\ \bm{\textit{Honest}}(c_x)))\]

\noindent Instead, we want to say that for all liars, Romeo believes that person is honest.  We can write this as:
\[\forall x.\ \textit{Liar}(x) \rightarrow \ist(\textit{believes}(R), \bm{\textit{Honest}}(\quot(x)))\]

\noindent Without the symbol $\quot$, we might try to write the following:
\[\forall x.\ \textit{Liar}(x) \rightarrow \ist(\textit{believes}(R), \bm{\textit{Honest}}(x))\]
But this does not work. We can notice this by instantiating the quantifier $\forall$ in the formula above.
\[\textit{Liar}(\textit{Juliet}) \rightarrow \ist(\textit{believes}(R), \bm{\textit{Honest}}(\textit{Juliet}))\]
This makes no sense because $\bm{\textit{Honest}}(\textit{Juliet})$ is not a well-formed quotation.

\section{Finite Axiomatization and Theorem Provers} \label{sec:FiniteAxio}
    



\newcommand{\iconFiniteAxioms}{\faIcon[regular]{file}}
\newcommand{\iconInfiniteAxioms}{\raisebox{-0.2mm}{\includegraphics[height=3mm]{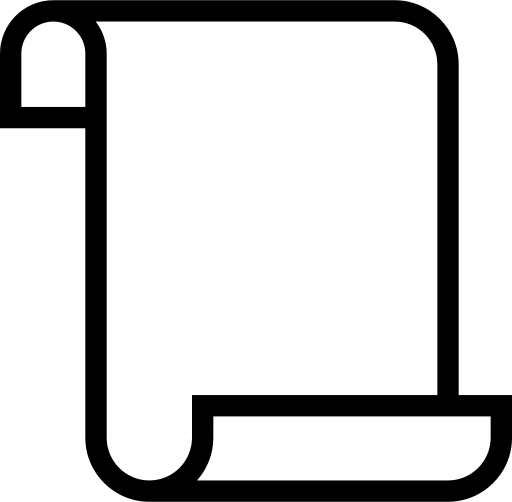}}}
\newcommand{\figureInfiniteAxioms}[2]{\!\!\tikz[baseline=1mm]{\node {\begin{tabular}{c} 
#2 \\
{\scriptsize \ensuremath{#1}}
\end{tabular}};}\!\!
}

\newcommand{\figureFiniteAxioms}[2]{\figureInfiniteAxioms{#1}{#2}}

    
    

\begin{figure}
\newcommand\tikzNode[2]{\tikz[remember picture, baseline=0mm]{ \node[inner sep=0mm] (#1) {#2};}}
   \fbox{ 
    \centering
\newcommand{\yhcfin}{-1.4}
\tikzstyle{finite} = [inner sep=0.5mm, fill=green!10!white]
\tikzstyle{infinite} = [inner sep=0.5mm, fill=red!10!white]
\tikzstyle{transformation} = [-latex,decorate,decoration={zigzag,segment length = 2mm, amplitude = 0.3mm}, line width=0.2mm]
\tikzstyle{help} = [dashed, -latex]

\scalebox{0.95}{
    \begin{tikzpicture}
        \node  (HC) at (0, 0) {\strut $\hq := H~\cup$};
        \node (HCfin) at (0, \yhcfin+0.02) {\strut $\hf := H~\cup$};
        \node (A510) at (1.5, 0) {\strut A5-10};
        \node (A510fin) at (1.5, \yhcfin) {\strut A5-10};
        \node at (2.2, 0) {\strut $\cup$};
        \node  at (2.2,\yhcfin) {\strut $\cup$};
         \node[infinite] (A1A4) at (3.3, 0) {\strut A1-A4};
        \node[finite] (A1A4fin) at (3.3, \yhcfin) {\strut A1\expofinite-A4\expofinite};
          \node at (4.4, 0) {\strut $\cup$};
        \node  at (4.4, \yhcfin) {\strut $\cup$};
            \node[infinite] (A11) at (5.2, 0) {\strut A11};
        \node[finite] (A11fin) at (5.2, \yhcfin) {\strut A11\expofinite};
        \node  at (6, \yhcfin) {\strut $\cup$};
        \node[finite] (A1234) at (6.9, \yhcfin) {\strut A12-34};
        \node at (1.5, 0.5) {\scriptsize $\hqf$};
         \node at (1.5, \yhcfin-0.5) {\scriptsize $\hqf$};
        \node at (3.3, 0.5) {\scriptsize $\htr$};
        \node at (3.3, \yhcfin-0.5) {\scriptsize $\htf$};
         \node at (7, \yhcfin-0.5) {\scriptsize $\hof$};

         \draw[help] (A1234) edge[bend left] 
         (A1A4fin);
         \draw[help] (A1234) edge[bend left]
         (A11fin);
\draw[transformation] (A11) --
(A11fin);
\draw[transformation] (A1A4) -- 
(A1A4fin);
         
    \end{tikzpicture}}
    }
    
    \caption{Overview of the process of finite axiomatization. Zizag arrows indicate the finite sets are counterparts to the top infinite ones. Dotted arrows indicate the the elements of $\hof$ are used to define said counterparts.}
    \label{fig:finiteaxiomatization}
\end{figure}
 
We will now show how the Qiana closure of any finite theory can be finitely axiomatized. 
Let $H$ be a given finite theory on a quotation-compatible signature $S$. 
The Qiana-closure of $H$ is $\hq = H \cup \text{A1-A4} \cup \text{A5-A10} \cup \text{A11}$ (see Definition~\ref{def:qiana}). Both $\text{A1-A4}$ and $\text{A11}$ are infinite. Hence, $\hq$ is also infinite.
In this section, we will present a process to define another theory $\hf$ that is both finite and equisatisfiable with $\hq$.
This will allow us to test the satisfiability of $\hq$ by feeding finitely many formulas (the elements of $\hf$) to a theorem prover.
Subsections~\ref{subsec:extendFiniteAxio} and~\ref{subsec:infiToFinFiniteAxio} will introduce secondary sets of use in this finite axiomatization. Subsection~\ref{subsec:propsFiniteAxio} concludes the presentation of this process and gives the relevant results.

We write $\hqf = \text{A5-A10}$.
Fortunately, $\hqf$ is finite. However, 
we must replace the infinite axiom schemas $\text{A1-A4}$ and $\text{A11}$. 
To do so, we extend the signature $S$ to another signature $S'$, which contains new symbols that we describe with additional (finite) schemas.
Together, these new schemas form the set $\hof$. We present these symbols and the set $\hof$ in Subsection~\ref{subsec:extendFiniteAxio}.
Based on these symbols and their definition schemas, we introduce finite counterparts to our infinite schemas in Subsection~\ref{subsec:infiToFinFiniteAxio}.
These are the sets $\htf$ and $\hfoinf$.
Together, these sets allow us to define a new set $\hf = H \cup \text{A5-A10} \cup \hof \cup \htf \cup \hfoinf$, which is finite and equisatisfiable with $\hq$ (see Figure~\ref{fig:finiteaxiomatization}). 
We present some interesting properties of the process in Subsection~\ref{subsec:propsFiniteAxio}, the most important being the correctness of the process.

\subsection{Utility Symbols for the Finite Axiomatization} \label{subsec:extendFiniteAxio}

We will now introduce new symbols that will allow us to define the finite axiomatization of Qiana.
We consider a fixed and finite theory $H$ on $S$.
Without loss of generality, we introduce two fresh function symbols $\sub$ and $\E$ and three predicate symbols $\wft$, $\reach$, and $=$. 
By adding these symbols to $S$ we obtain a larger signature $S'$. The sets $F_b$ and $P_b$ are unchanged by these additions, but for the rest of this section, the sets $F$ and $P$ are the sets of function and predicate symbols in $S'$.
We now give the axiom schemas that describe the behavior of these symbols.
The symbol = is the standard equality predicate defined by the following axioms, in which $x, y, z, x_1, \dots, x_n, y_1, \dots, y_n$ are distinct variables: 

\begin{align}
    & \forall x.\ x=x \label{ax:firsteq} \tagqianaaxiom \\
    & \forall x,y.\ x=y \rightarrow  y=x \tagqianaaxiom \\
    & \forall x,y,z.\ x=y \land y=z \rightarrow x=z \tagqianaaxiom \\
    & \label{penu} \forall x_1, .., x_n, y_1, .., y_n.\ x_1=y_1 \land .. \land x_n=y_n \rightarrow f(x_1, .., x_n) = f(y_1, .., y_n) \tagqianaaxiom \\
    & \forall x_1, .., x_n, y_1, .., y_n.\ x_1=y_1 \land .. \land x_n=y_n \rightarrow p(x_1, .., x_n) \leftrightarrow p(y_1, .., y_n) \tagqianaaxiom  \label{ax:lasteq}
\end{align}
\noindent \textbf{Range:} $f \in F, p \in P $ \\

\noindent The symbol $\reach$ checks whether its argument can be expressed as a term. More precisely, $\reach(t)$ is true if $t$ is a closed term or has all its open variables behind a $\quot$ statement:
\begin{align}
    & \forall x.\ \reach(\quot(x)) \label{reach1} \tagqianaaxiom \\
    & \forall t_1, \dots, t_n.\ (\reach(t_1) \land \dots \land \reach(t_n)) \label{reach3} \rightarrow \reach(f(t_1, \dots, t_n)) \tagqianaaxiom
\end{align}
\noindent\textbf{Range:} $f \in F$ \\

\noindent The symbol $\wft$ stands for ``well-formed term''. Intuitively,  $\wft(t)$ is true iff $t$ represents a quotation of a well-formed term (i.e.,~$t \in \tqv$): 
\begin{align}
    & \forall y.\ \wft(\quot(y)) \label{wft1} \tagqianaaxiom \\
    & \wft(\fquo{x}) \hspace{1cm} \tagqianaaxiom \\
    & \forall t_1, \dots, t_n.\ (\wft(t_1) \land \dots \land \wft(t_n)) \rightarrow \wft(\bm f(t_1, \dots, t_n)) \label{wft4} \tagqianaaxiom
\end{align}
\noindent\textbf{Range:} $\fquo{x} \in \cv, \bm f \in \fq$ \\

\leavevmode \noindent The symbol $\E$ is the counterpart to $\uq$ on quoted terms. More precisely, $\E(t)$ is defined to inductively evaluate to the value that $t$ is a quotation of, when applicable: 
\begin{align}
    & \forall t.\ E(\quot(t)) = t \label{ax:E2} \tagqianaaxiom \\
    & \label{ax:E3} \forall t_1, .., t_n.\ \reach(t_1) \land .. \land \reach(t_n) \rightarrow E(\bm f(t_1, .., t_n)) = f(E(t_1), .., E(t_n))  \tagqianaaxiom \\
    & \forall t_1, .., t_n. (\reach(t_1) \land .. \land \reach(t_n)) \rightarrow  E(\bm p(t_1, .., t_n)) = \bm p(t_1, .., t_n) \tagqianaaxiom \\
    & \forall t_1, t_2.\ E(\bland(t_1,t_2)) = \bland(t_1,t_2) \label{ax:E4} \tagqianaaxiom \\
    & \forall t_1, t_2.\ E(\bforall(t_1,t_2)) = \bforall(t_1,t_2) \label{ax:E5} \tagqianaaxiom \\
    & \forall t.\ E(\bneg(t)) = \bneg(t) \tagqianaaxiom \label{ax:E6} \\
    & E(\fquo{x}) = \fquo{x} \label{ax:E7} \tagqianaaxiom
\end{align}
\noindent \textbf{Range:} $\fquo{x} \in \cv$, $f \in F_b, p \in P$ \\

\noindent When no step of this induction can be carried out, we have $E(t) = t$ (Axiom Schema A24-A28).


The symbol $\sub$ is an in-logic counterpart of the substitution operator in Definition~\ref{def:substitutionquotations}. The term $\sub(t_1, t_2, t_3)$ represents $t_1 [t_2 \leftarrow t_3]_q$: 
\begin{align}
    & \forall t.\ \reach(t) \rightarrow \sub(\fquo{x}, \fquo{x}, t) = t \label{ax:sub1} \tagqianaaxiom \\
    & \forall t.\ \reach(t) \rightarrow \sub(\fquo{x}, \fquo{y}, t) = \fquo{x} \tagqianaaxiom \label{ax:sub2} \\
    & \forall t,t_1, .., t_n.\ (\reach(t_1) \land .. \land \reach(t_n))  \rightarrow  \nonumber \\ & \hspace*{4cm} \sub(\bm f(t_1, .., t_n), \fquo{x}, t) = 
    \bm f(\sub(t_1,\fquo{x},t), .., \sub(t_n,\fquo{x},t)) \label{ax:sub4} \tagqianaaxiom \\
    & \forall t_1,t_2.\ (\reach(t_1) \land \reach(t_2)) \rightarrow \sub(\bforall(\fquo{x}, t_1), \fquo{x}, t_2) = \bforall(\fquo{x}, t_1) \label{ax:sub5} \tagqianaaxiom \\
    & \forall t_1,t_2.\ (\reach(t_1) \land \reach(t_2)) \rightarrow \sub(\bforall(\fquo{y}, t_1), \fquo{x}, t_2) = \bforall(\fquo{y}, \sub(t_1, \fquo{x}, t_2)) \label{ax:sub6} \tagqianaaxiom \\
    & \forall t_1, t_2.\ (\reach(t_1) \land \reach(t_2)) \rightarrow  \sub(\quot(t_1), \fquo{x},t_2) = \quot(t_1) \label{ax:sub7}
    \tagqianaaxiom
\end{align}
\noindent \textbf{Range:} $\fquo{x}, \fquo{y} \in \cv$ ($\fquo{x} \neq \fquo{y}$)$, p \in P, f \in F \setminus \{\bforall, \quot\}$ \\

\noindent We group all these helper axiom schemas together as a theory $\hof$:
\begin{definition}
    $\hof := \text{\ref{ax:firsteq}-\ref{ax:sub7}}$.
\end{definition}

\subsection{Finite Counterparts to Infinite Schemas} \label{subsec:infiToFinFiniteAxio}

Let us write $\htr = \AoneAfour$.
The set $\htr$ defines the behavior of $\truth$ on well-formed formula quotations. 
Now that we have introduced new symbols to act as in-logic counterparts to the most important meta-operators of these schemas, we can introduce new finite schemas that mimic the behavior of $\AoneAfour$ with finitely many formulas.
This is done with $\htf$: 

\begin{definition} \label{def:htf}
    $\htf = \text{\ref{ax:T1}-\ref{ax:T4}}$.
  \begin{align}
      & \label{ax:T1}  \forall t_1, .., t_n.\ (\wft(t_1) \land .. \land \wft(t_n)) \rightarrow  \truth(\bm p(t_1,.., t_n)) \leftrightarrow p(E(t_1), .., E(t_n)) \tag{A1\expofinite} \\
      & \forall t_1, t_2.\ (\reach(t_1) \land \reach(t_2))  \rightarrow  \truth(t_1 \bland t_2) \leftrightarrow (\truth(t_1) \land \truth(t_2)) \label{ax:T2} \tag{A2\expofinite} \\
      & \forall t_1.\ \reach(t_1) \rightarrow \truth(\bneg t_1) \leftrightarrow (\neg \truth(t_1)) \label{ax:T3} \tag{A3\expofinite} \\
      & \forall t_1.\ \reach(t_1) \rightarrow \label{ax:T4}\tag{A4\expofinite} \truth(\bforall(\fquo{x},t_1)) \leftrightarrow (\forall x.\ \truth(\sub(t_1,\fquo{x},\quot(x))))
  \end{align}
\noindent  \textbf{Range:} $p \in P \setminus \{\truth\}, x \in V$
\end{definition}

\noindent Likewise, we define schema~\ref{ax:istForallFinite} as a finite counterpart to schema~\ref{as:oldDefss}.
\begin{align}
    & \forall t_1, t_2.\ \reach(t_1) \rightarrow \textit{ist}(t_2, \bforall(\fquo{x},t_1)) \rightarrow  \forall x.\ \textit{ist}(t_2, \sub(t_1,\fquo{x},\quot(x))) \tag{A11\expofinite}\label{ax:istForallFinite}
\end{align}

\noindent\textbf{Range:} $x \in V$

\subsection{Correctness} \label{subsec:propsFiniteAxio} 

\begin{definition} \label{def:hf}
    We can now formally define the finite axiomatization of Qiana on $H$ as the set $\hf$:
    \begin{align*}        
        \hf & := H \cup \hqf \cup \hof \cup \htf \cup \ref{ax:istForallFinite}
    \end{align*}
\end{definition}

\noindent Recall that the Qiana-closure of $H$ is $\hq = H \cup \hqf \cup \AoneAfour \cup \ref{as:oldDefss}$.
The following theorem and corollary say that it is equivalent to reason with the theories $\hq$ or $\hf$:
\begin{theorem} \label{th:mainThProveFiniteAxio}
    $\hq$ is coherent if and only if $\hf$ is coherent.
\end{theorem}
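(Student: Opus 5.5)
We prove the two directions separately, each by transforming a model of one theory into a model of the other. Both theories share $H \cup \hqf$, so the work is on the helper symbols $\sub$, $\E$, $\wft$, $\reach$, $=$ and on the schemas A1--A4, A11 versus their finite counterparts.

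\textbf{($\hq$ coherent $\Rightarrow$ $\hf$ coherent).} Take a model $M$ of $\hq$. By a standard argument we may assume it is a term-like model; the cleanest way is to take a Herbrand-style model over the closed terms of $S$, quotiented by the equality induced by $M$. Concretely, we first pass to the substructure generated by the interpretations of closed terms, which preserves the universal parts of $\hq$, and replace quantified statements by their instances. Then we expand $M$ to $S'$:
\begin{itemize}
\item $=$ is interpreted as true equality, so A12--A16 hold.
\item $\reach$ is interpreted as true everywhere, so A17--A18 hold trivially.
\item $\wft$ is interpreted as the set of elements denoted by terms of $\tqtv$ under some assignment.
\item $\E$ and $\sub$ are interpreted by recursion on the term structure, mirroring $\uq$ and $[\cdot\leftarrow\cdot]_q$, and are the identity where no clause applies.
\end{itemize}
With these choices, each instance of A1$^{\text{fin}}$--A4$^{\text{fin}}$ and A11$^{\text{fin}}$ reduces, via the defining clauses, to an instance of A1--A4 or A11 that $M$ satisfies.

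The delicate point is that $\E$ and $\sub$ must be \emph{functions on the domain}. If two distinct terms denote the same element, the recursive definitions could clash. To avoid this, I would build the model freely: take the free term algebra over $S$ (plus the domain elements of $M$ as fresh constants injected only under $\quot$). There, quotation symbols construct syntactically distinct elements, and predicates are read off from $M$ through $\uq$. This mimics the construction in the proof of Proposition~\ref{th:addTruth}.

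\textbf{($\hf$ coherent $\Rightarrow$ $\hq$ coherent).} Take a model $M'$ of $\hf$; its reduct to $S$ will be shown to model $\hq$. We may assume $=$ is interpreted as identity by quotienting, which is legitimate thanks to A12--A16. The key lemma is proved by induction on terms: for every $t\in\tqv$ and every assignment,
\begin{itemize}
\item $M' \models \reach(t)$;
\item $M'\models\wft(t)$ whenever $t\in\tqtv$;
\item $M'\models \E(t)=\uq(t)$ for $t \in \tqtv$;
\item $M'\models \sub(t,\fquo{x},s)=t[\fquo{x}\leftarrow s]_q$.
\end{itemize}
This induction uses A17--A34 clause by clause. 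With the lemma, each instance of A1--A4 and A11 follows from the corresponding finite axiom instantiated at the appropriate terms. For A1, the hypothesis $t_i \in \tqtv$ gives $\wft(t_i)$, and then $\E(t_i)=\uq(t_i)$.

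I expect the main obstacle to be the first direction, namely defining $\E$ and $\sub$ consistently on an arbitrary model. Terms in the ``all other cases'' branches, and variables under $\quot$, make it tempting to rely on syntax that the domain does not remember. The second direction is a routine, if lengthy, structural induction.
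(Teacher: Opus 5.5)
\textbf{Gap: the reduction step in your first direction fails.} You claim that each instance of A1\expofinite--A4\expofinite\ and A11\expofinite\ ``reduces to an instance of A1--A4 or A11''. This fails for A2\expofinite--A4\expofinite\ and A11\expofinite, which are guarded only by $\reach$. Making $\reach$ true everywhere is not your mistake; it is forced. A18 with a nullary $f$ gives $\reach(c)$ for every constant $c$. Moreover, A18 also ranges over $\E$, so combining it with A17, A22 and A16 yields $\hf\models\forall t.\ \reach(t)$.

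Consequently, A3\expofinite\ requires $\truth(\bneg t)\leftrightarrow\neg\truth(t)$ for \emph{every} element $t$, whereas A3 only has instances with $A\in\tqv$. Likewise, A11\expofinite\ applies to arbitrary $t_1$, while A11 only ranges over $\bforall(\fquo{x},\bm\varphi)\in\tqf$. For $t_1=\fquo{x}$, A29 turns A11\expofinite\ into $\ist(t_2,\bforall(\fquo{x},\fquo{x}))\rightarrow\forall x.\ \ist(t_2,\quot(x))$, which no instance of A11 provides. The predicates of your new model must be read off from $M$ by evaluating their arguments in $M$, as the paper does. Reading them ``through $\uq$'' is undefined on most elements and would not preserve $H$. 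Nothing then makes these extra instances true.

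\textbf{No better model can fix this.} The direction is false for the axioms as written. Take a base constant $c\in F_b$ and $H=\{\truth(c),\truth(\bneg c)\}$. Then $\hf$ is incoherent, since $\reach(c)$ together with A3\expofinite\ contradicts $H$. But $\hq$ is coherent: in a free term model no term of $\tqv$ denotes $c$, so A1--A4 never relate $\truth(\bneg c)$ to $\truth(c)$, and both can be set true.

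The paper's Proposition~\ref{prop:fiBaseFiveFin} has the same hole: it checks only A1\expofinite\ and calls A2\expofinite--A4\expofinite\ ``similar''. The missing idea is a change to the finite axiomatization, not to your construction. A2\expofinite--A4\expofinite\ and A11\expofinite\ should be guarded by minimally axiomatized predicates characterizing $\tqv$ and $\tqf$, as $\wft$ does for $\tqtv$, instead of by $\reach$. With such guards, your free-model construction is the natural route.

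\textbf{Second direction.} This is the paper's argument (Proposition~\ref{prop:fiFinGiveBase}), with one caveat. A28 gives $\E(\fquo{x})=\fquo{x}$ while $\uq(\fquo{x})=x$, so your lemma $\E(t)=\uq(t)$ only holds for $t\in\tqtv$ without free quoted variables. A1 must be read with that same restriction.
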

\begin{proof}
    Recall that ``$\hq$ is coherent'' is equivalent to $\hq \not\models \bot$. Hence the theorem becomes $\hq \not\models \bot$ iff $\hf \not\models \bot $.
    In the supplementary material, we prove both directions of this equivalence in Propositions~\ref{prop:fiFinGiveBase} and \ref{prop:fiBaseFiveFin}. 
\end{proof}

\begin{corollary}
    $\hq \models \varphi$ if and only if $\hf \models \varphi$, for all $\varphi$
\end{corollary}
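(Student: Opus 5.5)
The plan is to reduce the corollary to Theorem~\ref{th:mainThProveFiniteAxio} by the usual refutation argument, applied not to $H$ itself but to $H$ enlarged by the negation of the goal. We restrict to formulas $\varphi$ over the signature $S$; formulas mentioning the auxiliary symbols $\sub$, $\E$, $\wft$, $\reach$ or $=$ of $S'$ are not meaningful for $\hq$. We may also assume $\varphi$ is closed: an open $\varphi$ is entailed by a theory iff its universal closure is, so we replace it by that closure.

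First, recall that for any theory $\Sigma$ and closed $\varphi$, we have $\Sigma \models \varphi$ iff $\Sigma \cup \{\neg\varphi\}$ is not coherent. Set $H' := H \cup \{\neg\varphi\}$. Since $H$ is a finite theory on $S$ and $\neg\varphi$ is a single formula on $S$, $H'$ is again a finite theory on $S$. Hence the construction of Section~\ref{sec:FiniteAxio} and Theorem~\ref{th:mainThProveFiniteAxio} apply to $H'$.

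Next, observe from Definitions~\ref{def:qiana} and~\ref{def:hf} that both closures act on $H$ only by taking a union with a fixed set of axioms that does not depend on $H$:
\[
H'_C = \hq \cup \{\neg\varphi\}, \qquad H'^{\textit{fin}}_C = \hf \cup \{\neg\varphi\}.
\]
For $\hf$ this holds because $\hof$, $\htf$ and~\ref{ax:istForallFinite} are schemas over the fixed signature $S'$. We only need that $H'$ introduces no new function or predicate symbols, which is true since $\varphi$ lives over $S$. So the instances of the equality and $\reach$ schemas are the same for $H'$ as for $H$.

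Finally, we chain the equivalences:
\[
\hq \models \varphi \iff H'_C \text{ incoherent} \iff H'^{\textit{fin}}_C \text{ incoherent} \iff \hf \models \varphi,
\]
where the middle step is Theorem~\ref{th:mainThProveFiniteAxio} applied to $H'$. The only point requiring care, and the main potential obstacle, is checking that the hypotheses of the theorem really hold for $H'$: it must be finite, and the signature must be unchanged so that the axiom sets coincide. Both follow immediately from the restriction to closed $\varphi$ over $S$.
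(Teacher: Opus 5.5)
Your proposal is correct and is the paper's own argument: the paper's proof is the one line ``apply Theorem~\ref{th:mainThProveFiniteAxio} to $H \cup \{\neg\varphi\}$,'' relying, as you do, on the fact that both $\hq$ and $\hf$ arise from $H$ by adding a fixed set of axioms. Your explicit restriction to closed $\varphi$ over $S$ is a useful clarification that the paper leaves implicit, and it is genuinely needed: for instance, $\forall x.\ \reach(\quot(x))$ is entailed by $\hf$ but in general not by $\hq$.
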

\begin{proof}
    Apply Theorem \ref{th:mainThProveFiniteAxio} to the theory $H \cup \{\neg \varphi\}$.
\end{proof}

\noindent The following proposition says that the number of formulas created by the finite axiomatization process is quadratic in the total number of symbols, excluding the variables that are not quotable.

\begin{proposition}
  The cardinal of $\hqf \cup \hof \cup \htf$
  is in $\mathcal{O}(|S|^2)$, where
  $|S|$ is the total number of symbols in $S$, excluding $V_\infty \setminus V$.
\end{proposition}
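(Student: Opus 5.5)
The plan is a direct count: go through the schemas making up $\hqf \cup \hof \cup \htf$, bound the number of instances of each in terms of $|S|$, and add up. Let $n_F = |F|$, $n_P = |P|$ and $n_V = |V|$; each of these is at most $|S|$. The quoted symbol sets $\fq$, $\fpq$ and $\cv$ are in bijection with $F_b$, $P$ and $V$, so their sizes are also bounded by $|S|$. I read $|S|$ as the size of the augmented signature $S'$ restricted to the quotable variables $V$. This is harmless: passing from $S_b$ to $S'$ at most doubles the number of symbols and adds a constant number of new ones ($\bland,\bneg,\bforall,\quot,\truth,\sub,\E,\wft,\reach,=$). An instance is counted as a single formula whatever its length; the arity of a symbol only affects how long that formula is.

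First I would handle the schemas with no range parameter, or ranging over a single symbol. $\hqf = $ A5--A10 consists of six fixed formulas. The equality schemas A12--A14 are three formulas. A15 and A16 contribute one instance per $f \in F$ and one per $p \in P$, so $O(|S|)$ in total. A17, A19 and A27 are single formulas. A18 has one instance per $f\in F$, A20 one per $\fquo{x}\in\cv$, and A21 one per $\fquo f\in\fq$, each $O(|S|)$. The $\E$-schemas A22--A28 give one instance per $f \in F_b$ (A23), one per $p\in P$ (A24), one per $\fquo x$ (A28), and constantly many otherwise, hence $O(|S|)$. In $\htf$, \ref{ax:T1} ranges over $p \in P\setminus\{\truth\}$, \ref{ax:T2} and \ref{ax:T3} are single formulas, and \ref{ax:T4} ranges over $x\in V$, so all four are $O(|S|)$.

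The quadratic term comes from the $\sub$-schemas A29--A34, which is where the argument needs the most care. A29 and A34 range over one quoted variable, so they are $O(n_V)$. A30, A32 and A33 range over ordered pairs $\fquo x \neq \fquo y$, giving $O(n_V^2)$ instances. A31 ranges over pairs $(f,\fquo x)$ with $f \in F\setminus\{\bforall,\quot\}$, giving $O(n_F\cdot n_V)$ instances. Its displayed range also mentions $p\in P$, presumably so that it covers quoted predicate heads; reading it that way adds only $O(n_P n_V)$ more. Each of these is $O(|S|^2)$.

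The one point needing a remark is that no schema ranges over more than two symbol parameters. The statement counts formulas, not formula length, so the high arity of an instance such as A15 or A23 does not matter. Summing the finitely many contributions gives $|\hqf\cup\hof\cup\htf| = O(|S|) + O(|S|^2) = O(|S|^2)$. The bound would break only if some schema were indexed by triples of symbols, and checking the ranges listed above shows that none is.
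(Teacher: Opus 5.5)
Your schema-by-schema count is correct and is the argument the statement calls for; the paper itself states this proposition without a written proof. The only super-linear contributions are A30 and A33, indexed by pairs of distinct quoted variables, and A31, indexed by a quoted head symbol together with a quoted variable; every other schema in $\hqf \cup \hof \cup \htf$ has at most one symbol parameter. One harmless slip: A32, $\sub(\bforall(\fquo{x},t_1),\fquo{x},t_2)=\bforall(\fquo{x},t_1)$, is indexed by a single quoted variable, so it contributes $\mathcal{O}(|V|)$ instances rather than $\mathcal{O}(|V|^2)$, which only tightens your bound.
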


%
%

We introduced a finite number of quoted variables to make this finite axiomatization process possible.
Indeed, each quoted variable needs to appear at least once within the finite axiomatization. Since the process uses finitely many formulas of finite length, it cannot handle an infinite number of quoted variables.
Nevertheless, any reasoning that can be carried out with an infinite number of variables can also be done with a finite number of variables:

\begin{proposition}
    Let $H$ be a theory, and let $\hq^n$ be the Qiana closure of $H$ where $V$ has size $n \in \mathbb{N}$, and let $\hcinf$ be the Qiana closure of $H$ obtained by allowing the set $V$ to be infinite. 
    Let $\varphi$ be any well-formed closed formula.
    Then if $\hcinf \models \varphi$ then there is some $n \in \mathbb{N}$ such that $\hq^n \models \varphi$.
\end{proposition}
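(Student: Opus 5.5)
The plan is to use the compactness theorem of first-order logic and then compare signatures. Write $V_\infty^q$ for the infinite set of quotable variables used to build $\hcinf$, and $V_n$ for an $n$-element subset of it. Suppose $\hcinf \models \varphi$. Then $\hcinf \cup \{\neg\varphi\}$ is incoherent. By compactness there is a finite subset $\Delta \subseteq \hcinf$ with $\Delta \models \varphi$.

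The first step is to locate the variables that $\Delta$ and $\varphi$ use. Each formula of $\Delta$ is either a formula of $H$, an instance of A1--A11 for the infinite variable set, or an axiom from A5--A10. Each of these formulas has finite length, so only finitely many quoted-variable symbols $\fquo{x}$ and escaped variables $\quot(x)$ occur in $\Delta \cup \{\varphi\}$. Let $V_0$ be the finite set of variables $x$ with $\fquo{x}$ occurring there, and take $n \ge |V_0|$. We may choose $V_n \supseteq V_0$.

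The second step, which I expect to be the main obstacle, is to check that every formula of $\Delta$ really belongs to $\hq^n$. This needs care on two points.
\begin{itemize}
\item The signatures differ. The augmented signature for $V_n$ contains only the constants $\fquo{x}$ for $x \in V_n$. Since $V_0 \subseteq V_n$, every symbol occurring in $\Delta \cup \{\varphi\}$ still belongs to the smaller signature.
\item The instance ranges of the schemas must match. Take an instance of A1--A4 or A11 that uses only quoted variables from $V_0$. I must show its parameters ($t_i \in \tqtv$, $A, B \in \tqv$, $\bforall(\fquo{x},\bm\varphi)\in\tqf$, and the bound $x \in V$) fall in the corresponding sets defined over $V_n$. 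This is an induction on the grammars of Definitions~\ref{def:QQv} and~\ref{def:TTvLLv}: a term built only from symbols of the smaller signature lies in the $V_n$-version of each set iff it lies in the $V_\infty^q$-version.
\end{itemize}
The meta-operations $[\fquo{x}\leftarrow t]_q$ and $\uq$ must also give the same result in both settings. They are defined by structural recursion that never introduces new quoted variables, so this holds. Also, A5--A10 do not depend on $V$, and $H$ is literally the same theory, provided $H$ is read over the smaller signature. This is the subtle point: if $H$ itself mentions infinitely many quoted variables, I will assume, as the statement implicitly does, that $H$ is stated in the $V_n$-signature. Otherwise I only need that the finitely many $H$-formulas in $\Delta$ use variables from $V_0$, which holds by construction.

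The final step transfers the entailment. We now have $\Delta \subseteq \hq^n$, and $\Delta \models \varphi$ holds over the larger signature. Entailment between formulas of a smaller signature is unaffected by enlarging the signature: any model of the smaller signature expands to one of the larger signature by interpreting the extra constants arbitrarily. Hence $\Delta \models \varphi$ in the $V_n$-signature, and by monotonicity $\hq^n \models \varphi$.
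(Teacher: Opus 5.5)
Your proposal is correct and follows the paper's route: the paper observes that a derivation of $\varphi$ from $\hcinf$ uses only finitely many formulas, all of which lie in $\hq^n$ for some $n$, and you spell out the compactness step, the change of signature, and the matching of schema ranges. One small repair is needed: $V_0$ must also contain every variable $x$ that occurs as $\quot(x)$ in $\Delta \cup \{\varphi\}$, not only those $x$ for which $\fquo{x}$ occurs, because membership in $\tqv$, $\tqtv$ and $\tqfv$ (and hence being an instance of A1--A4 over $V_n$) depends on these variables too, and your ``iff'' claim about the grammars fails without this.
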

\begin{proof}
    Any proof derivation of $\varphi$ from $\hcinf$ uses finitely many formulas, which are all in $\hq^n$ for some $n$.
\end{proof}
\noindent Thus, the finiteness of the set $V$ of quotable variables is not a limitation on the reasoning power. 
When checking some entailment, we can iteratively increase the size of $V$ to check if the entailment appears.
Considering that our theory is semi-decidable rather than decidable, we do not lose any deductive power.

    \subsection{Using Qiana in Theorem Provers} \label{sec:TP}
Our finite axiomatization allows us to 
transform the Qiana closure of any finite theory into an equisatisfiable finite first-order logic theory, which can then be fed into an Automated Theorem Prover (ATP). 
We have implemented a translator (in Python) that accepts a set of Qiana formulas, derives their signature, and outputs a finite set of Qiana axioms in the TPTP syntax~\cite{TPTP}.

This allows us to run the Romeo and Juliet example from Section~\ref{sec:Example} in the Vampire theorem prover~\cite{VampireMainarticle}. 
The reasoning takes 0.05 seconds on an 8th-generation Intel CPU laptop. Vampire duly proves that both Romeo and Juliet die.
The code and the example are available at \url{https://github.com/dig-team/Qiana}. 



\section{Temporality in Qiana} \label{sec:Time}
    In this section, we extend Qiana for temporal reasoning.
There are several modal logics to deal with time, with the most important ones being Event Calculus (EC), linear-temporal logic (LTL), and computational tree logic (CTL). For our work, we chose EC for three reasons. First, time instants are explicit in EC (and not in LTL or CTL). Second, EC is based on first-order logic and thus more amenable to a translation to Qiana. Finally, time instants and intervals play a similar role in event calculus as contexts in Qiana, which makes the adaptation of event calculus to Qiana relatively straightforward.

\subsection{Overview of Event Calculus}
Event calculus is a popular family of formalisms to represent actions and their effects on systems through time.
In this article, we follow the definitions of~\cite{EventCalculusExplained}, which we chose for their clarity and conciseness. 
Event calculus is based on the following concepts:
\textit{Fluents} are properties of the system that can change over time; there is typically a finite set of fluents under consideration.
\textit{Actions} (also called \textit{Events}) occur at points in time or during time intervals and can change the value of fluents.

Here are a few example sentences written in classical event calculus and toying with the Romeo and Juliet story:
\begin{itemize}
    \item The construction $\textit{HoldsAt}(\textit{Alive\_Romeo},t)$, meaning that Romeo is alive at instant $t$. Here, the fluent is $\textit{Alive\_Romeo}$, and the time is $t$.
    \item The construction
    $\textit{Happens}(\textit{Drink\_Potion\_Juliet},t_1,t_2)$
    means that Juliet drinks a potion between time $t_1$ and $t_2$. Here, the action is again $\textit{Drink\_Potion\_Juliet}$. 
    \item The construction $\textit{Happens}(\textit{Drink\_Potion\_Juliet},t)$ means that Juliet drinks a potion at time $t$. Here the action is $\textit{Drink\_Potion\_Juliet}$.  
    Note the operator overload on $\textit{Happens}$ with the previous statement. In fact, $\textit{Happens}(a,t)$ is simply syntactic sugar for $\textit{Happens}(a,t,t)$.
\item The construction $\textit{Initiates}(\textit{Appear\_Dead\_Juliet},\textit{Drink\_Potion\_Juliet},t)$  means that drinking a potion makes Juliet start appearing dead at time $t$. Here the fluent is $\textit{Appear\_Dead\_Juliet}$ and the action is $\textit{Drink\_Potion\_Juliet}$.
\end{itemize}

It should be noted that fluents are atomic and cannot be connected to form more complex fluents (there are no equivalents of $\neg$ or $\land$ on fluents).
Also, this formalism features a notion of inertia. When a fluent becomes true it remains so unless it is ``clipped'', which is represented by a dedicated predicate $\textit{Clipped}$. 
This behavior is formally defined by axioms \ref{ax:EC1}, \ref{ax:EC2}, and \ref{ax:EC3} in Subsection~\ref{subsec:ECQiana}, which we present with Qiana notations.
Table~\ref{table:ECops} lists and describes the main operators of event calculus.

\begin{table}[h!]
    \caption{Matching Event Calculus operators to their descriptions} \label{table:ECops}

    \centering
    \scalebox{0.9}{
    \begin{tabular}{|c|p{12cm}|} 
    \hline
    Event Calculus operator & Description \\ 
    \hline
    0 & The first time instant \\
    $i_1 < i_2$ & Instant $i_1$ occurs before instant $i_2$ \\
    $\textit{Initially}(\bm \varphi)$ & Nontemporal quoted formula $\bm \varphi$ holds at the beginning \\
    $\textit{HoldsAt}(\bm \varphi, i_1)$ & Nontemporal quoted formula $\bm \varphi$ holds at time $i_1$ \\ 
    $\textit{Happens}(a, i_1, i_2)$ & Action $a$ happens between times $i_1$ and $i_2$ \\
    $\textit{Initiates}(a, \bm \varphi, i_1)$ & If action occurs at time $i_1$ it initiates $\bm \varphi$ at that time \\
    $\textit{Terminates}(a, \bm \varphi, i_1)$ & If action occurs at time $i_1$ it terminates $\bm \varphi$ at that time \\
    $\textit{Releases}(a, \bm \varphi, i_1)$ & $\bm \varphi$ is not subject to inertia after action $a$ at time $i_1$ \\
    $\textit{Clipped}(i_1, \bm \varphi, i_2)$ & $\bm \varphi$ is terminated between times $i_1$ and $i_2$ \\
    $\textit{Declipped}(i_1, \bm \varphi, i_2)$ & $\bm \varphi$ is initiated between times $i_1$ and $i_2$ \\
    \hline
    \end{tabular}}
\end{table}
This concludes our presentation of event calculus. 
In the next subsection, we describe how to adapt event calculus to Qiana.

\subsection{Event calculus in Qiana} \label{subsec:ECQiana}

Because the full event calculus of~\cite{EventCalculusExplained} is based on first-order logic, its adaptation to Qiana will be relatively straightforward.
We allow using any quoted formula as a fluent and write the axioms of event calculus in Qiana.
This requires the introduction of a few new symbols to Qiana to match the operators of event calculus.

The event calculus presented in~\cite{EventCalculusExplained} also contains an operator $\textit{Initially}_N$ to say that some fluent is initially false. Thanks to the quote symbol $\bneg$, we write $\textit{Initially}(\bneg\ f)$ instead of $\textit{Initially}_N(f)$.
This makes use of the power of negation over quoted formulas of Qiana, which is not available in the original event calculus.


Here are the axioms of event calculus in Qiana, adapted from ~\cite{EventCalculusExplained}:

\begin{align}
    &\forall f, t .\ \textit{HoldsAt}(f, t) \leftarrow \textit{Initially}_P(f) \land \neg \textit{Clipped}(0, f, t) \label{ax:EC1}\tag{EC1} \\
    &\forall f, t_3, a, t_1, t_2 .\ \textit{HoldsAt}(f, t_3) \notag \\ & \hspace*{1cm} \leftarrow \textit{Happens}(a, t_1, t_2) \land \textit{Initiates}(a, f, t_1) \land \neg \textit{Clipped}(t_1, f, t_3) \land t_2 < t_3 \label{ax:EC2}\tag{EC2} \\
    &\forall t_1, f, t_4 .\ \textit{Clipped}(t_1, f, t_4) \leftrightarrow \exists a, t_2, t_3 .\ \textit{Happens}(a, t_2, t_3) \land \notag \\ &\hspace*{1cm} (\textit{Terminates}(a, f, t_2) \lor \textit{Releases}(a, f, t_2)) \land t_1 < t_3 \land t_2 < t_4 \label{ax:EC3}\tag{EC3} \\
    &\forall f, t .\ \neg \textit{HoldsAt}(f, t) \leftarrow \textit{Initially}(\bneg f) \land \neg \textit{Declipped}(0, f, t)\tag{EC4} \\
    &\forall f, t_3, a, t_1, t_2 .\ \neg \textit{HoldsAt}(f, t_3) \notag \\ & \hspace*{1cm} \leftarrow \textit{Happens}(a, t_1, t_2) \land \textit{Terminates}(a, f, t_1) \land \neg \textit{Declipped}(t_1, f, t_3) \land t_2 < t_3 \tag{EC5} \\
    &\forall t_1, f, t_4 .\ \textit{Declipped}(t_1, f, t_4) \leftrightarrow \exists a, t_2, t_3 .\ \notag \\ &\hspace*{1cm} \textit{Happens}(a, t_2, t_3) \land (\textit{Initiates}(a, f, t_2) \lor \textit{Releases}(a, f, t_2)) \land t_1 < t_3 \land t_2 < t_4 \tag{EC6} \\
    &\forall a, t_1, t_2 .\ \textit{Happens}(a, t_1, t_2) \rightarrow t_1 \leq t_2\label{ax:EC7}\tag{EC7}
\end{align}


\begin{remark}
    We do nothing to prevent the use of the quotations of temporal statements as fluents. 
    For example, nothing explicitly prevents the use of $\fquo{\textit{Happens}(\textit{drink}(P,J),t)}$ as a fluent. 
    This is a choice we make to simplify the formalism, but we also make no special effort to give them a special meaning.
    Hence, they can be considered as any other meaningless quotation we could, in theory, pass to a temporal operator.
    Since no axiom allows their introduction in a temporal context, this creates no problem.
\end{remark}

\subsection{Example}

We will now adapt our running example of Romeo and Juliet (Section~\ref{subsec:suicideExample}) to our temporal framework.
We will tell the same story as before, but we will now account for time:
At first, Romeo is alive, then he sees Juliet, and then he dies. We will use the following axioms:
\begin{align}
    & \forall \varphi.\ \ist(\textit{says}(\textit{L}), \varphi) \rightarrow \truth(\varphi) \label{ex:EC_RJ1}\\
    & \ist(\textit{says(\textit{L})},  \bmnested{\forall t.\ \textit{Happens}(\textit{drink}(P,J),t) \rightarrow \textit{Initiate}(\textit{drink}(J,P), \bmnested{\textit{LookDead}}(J), t)}) \label{ex:EC_RJ2}\\
    & t_\textit{drink} < t_\textit{see} \label{ex:EC_RJ8}\\
    & \textit{Happens}(\textit{drink}(P,J), t_\textit{drink}) \label{ex:EC_RJ3}\\
    & \textit{Happens}(\textit{see}(R,J), t_\textit{see}) \label{ex:EC_RJ6} \\
    & \neg \textit{Clipped}(\textit{LookDead}(J), t_\textit{drink}, t_\textit{see}) \label{ex:EC_RJ4}\\
    & \forall t_1, p.\ \textit{HoldsAt}(\bmnested{\textit{LookDead}}(\quot(p)), t_1) \rightarrow  \textit{Initiate}(\textit{See}(R,p), \bmnested{\ist(\textit{bel}(R), \bmnested{dead(p)})}, t_1) \label{ex:EC_RJ5} \\
    & \forall t_1.\ \textit{HoldsAt}(\ist(\textit{bel}(R), \fquo{dead(J)}), t_1) \rightarrow \textit{Happens}(\textit{die}(R), t_1) \label{ex:EC_RJ7}
\end{align}

Note the different kinds of elements. 
The terms $\textit{see}(R,J)$ and $\textit{die}(R)$ are actions.
$\textit{LookDead}(J)$ is a formula (like $\textit{dead}(J)$), its quotation $\fquo{\textit{LookDead}(J)}$ is a fluent.
Two points in time are important to the story: the time $t_\textit{drink}$ when Juliet drinks the potion and the time $t_\textit{see}$ when Romeo sees Juliet. 
Formulas~\ref{ex:EC_RJ1} and~\ref{ex:EC_RJ2} behave similarly to their counterparts Formula~\ref{eq00} and Formula~\ref{eq3} in Section~\ref{subsec:suicideExample}, 
except that Laurence now says Juliet will ``start looking dead'', rather than simply ``look dead''.
Formula~\ref{ex:EC_RJ3} states that Juliet drinks the potion, and formula~\ref{ex:EC_RJ4} states that Juliet does not wake up before $t_\textit{see}$.
Hence, Juliet looks dead at time $t_\textit{see}$.
Together with formulas~\ref{ex:EC_RJ6} and~\ref{ex:EC_RJ5} this tells us that Romeo starts believing Juliet is dead, which leads to his own death (Formula~\ref{ex:EC_RJ7}).

\subsection{The Frame Problem and Differences Between Qiana and Event Calculus}

The frame problem is a classic issue in formalisms that model change over time, and it is the reason for a variation in the semantics between the event calculus of \cite{EventCalculusExplained} and Qiana. \\

To explain this difference, we begin by explaining the frame problem.
The frame problem arises because things generally remain unchanged unless something happens to them to make them change. This is referred to as inertia—the natural tendency of systems to stay in their current state. The challenge is how to formalize this idea without having to explicitly list every possible situation in which something might change. 

The solution used by event calculus is circumscription. Entailment in event calculus relies on axioms, normal first-order logic (FOL) entailment, and a function called \emph{circumscription}.
The idea is to use $\textit{Circum}(\Gamma)$, which represents an extension of $\Gamma$ with reasonable assumptions. 
The full definition of $\textit{Circum}(\Gamma)$ is omitted here and we refer the reader to \cite{EventCalculusExplained} for a more complete explanation and additional examples.
The two following facts are important:
\[
\textit{Circum}(\Gamma) \models \Gamma
\]
\[
\Gamma \models_\textit{EC} \varphi \Leftrightarrow \textit{Circum}(\Gamma) \cup \textit{EC} \models \varphi
\]

\noindent where $\models_\textit{EC}$ is the entailment of event calculus and \textit{EC} the set of axioms of event calculus.

In a sense, the entailment of event calculus (with circumscription) is weaker than normal FOL entailment. Valid FOL entailment still holds under it, but the entailment of event calculus also makes additional assumptions.

In Qiana, we import the axioms of event calculus, but we do not use this notion of entailment.
Instead, we rely on normal first-order entailment.
Hence, we do not import event calculus's solution to the frame problem. Axiom~\ref{ax:EC1} states that fluents stay fixed unless they are clipped, but nothing introduces a notion of inertia on the ``clipped'' predicate itself.
It would certainly be possible to adapt Qiana's entailment to include the circumscription mechanism.
But it is an important feature of Qiana that its entailment is simply FOL entailment with some axioms, allowing compatibility with existing first-order theorem provers.
Our form of entailment is stronger than the one of event calculus, meaning it is sound but not complete with respect to it.

We prove this with Proposition~\ref{prop:ECQvsEC}.

\begin{proposition} \label{prop:ECQvsEC}
    Let $\Gamma$ be a set of formulas valid under event calculus. Let $H_\textit{TQ}$ be the set of axioms of temporal Qiana, and let $\varphi$ be a temporal formula valid under event calculus. Then:
    \[
    \Gamma \cup H_\textit{TQ} \models \varphi \implies \Gamma \models_\textit{EC} \varphi
    \]
\end{proposition}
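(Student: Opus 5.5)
The plan is to reduce the claim to the two facts about circumscription recalled just above, namely $\textit{Circum}(\Gamma) \models \Gamma$ and $\Gamma \models_\textit{EC} \varphi \Leftrightarrow \textit{Circum}(\Gamma) \cup \textit{EC} \models \varphi$, together with monotonicity of ordinary first-order entailment. Write $H_\textit{TQ} = H_C \cup \textit{EC}_Q$, where $H_C$ denotes the Qiana axioms (A1--A11, or equivalently their finite counterparts by Theorem~\ref{th:mainThProveFiniteAxio}) and $\textit{EC}_Q$ denotes the adapted axioms EC1--EC7. It then suffices to show, for every model $M$ of $\textit{Circum}(\Gamma) \cup \textit{EC}$, that $M$ satisfies $\Gamma \cup H_\textit{TQ}$ up to an interpretation of the extra Qiana symbols that does not affect $\varphi$.

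\textbf{Step 1.} Take a model $M$ of $\textit{Circum}(\Gamma) \cup \textit{EC}$. Since $\textit{Circum}(\Gamma) \models \Gamma$, the model $M$ satisfies $\Gamma$.

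\textbf{Step 2.} Argue that $\textit{EC}_Q$ is essentially $\textit{EC}$ written over quoted-formula fluents. The only change is $\textit{Initially}_N(f)$ becoming $\textit{Initially}(\bneg f)$. Accordingly, interpret $\bneg$ and the quoted symbols in $M$ as fresh injective term constructors on a copy of the fluent sort, so that EC4 holds exactly when the original negative-initial axiom holds.

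\textbf{Step 3.} Expand $M$ to the remaining Qiana symbols ($\truth$, $\ist$, $\quot$, and the underlined symbols). Interpret $\truth$ as in the proof of Proposition~\ref{th:addTruth}: $\truth(\bm\psi)$ holds iff $\uq(\bm\psi)$ is true. This gives A1--A4, by Proposition~\ref{prop:truthSubsumption} and its inductive proof. Interpret $\ist$ as the empty relation unless $\Gamma$ constrains it. Since $\varphi$ and $\Gamma$ are event-calculus formulas, they do not mention these symbols, so the expansion preserves their truth values. This yields a model of $\Gamma \cup H_\textit{TQ}$, hence $M \models \varphi$. As $M$ was arbitrary, $\textit{Circum}(\Gamma) \cup \textit{EC} \models \varphi$, and the second circumscription fact gives $\Gamma \models_\textit{EC} \varphi$.

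The main obstacle is Step 3. There we must ensure that adding the Qiana axioms is a conservative expansion over the event-calculus vocabulary. This becomes delicate if $\Gamma$ itself uses $\ist$ or quoted fluents, since then A5--A11 impose closure conditions on $\ist$. I would first handle this by taking the least relation satisfying the context axioms that contains the $\ist$-facts required by $\Gamma$, as in a Herbrand-style fixpoint construction. If that fails, I would state the proposition for $\Gamma$ in the pure event-calculus vocabulary.
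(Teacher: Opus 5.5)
Your skeleton is the paper's: weaken $\Gamma$ to $\textit{Circum}(\Gamma)$, trade $H_\textit{TQ}$ for $\textit{EC}$, and conclude with $\Gamma\models_\textit{EC}\varphi \Leftrightarrow \textit{Circum}(\Gamma)\cup\textit{EC}\models\varphi$. The paper supports the middle trade only with the remark that the temporal Qiana axioms ``follow the axioms of event calculus''. You try to justify it by expanding each model of $\textit{Circum}(\Gamma)\cup\textit{EC}$ to a model of $\Gamma\cup H_\textit{TQ}$, and that justification does not go through as written. Taking $\ist$ empty is fine, since it satisfies A5--A11 trivially; the problem is $\truth$. Proposition~\ref{prop:truthSubsumption} runs in the wrong direction for you. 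It says that A1--A4 entail (A$_\text{truth}$), not that a $\truth$ defined as in Proposition~\ref{th:addTruth} (true on $\bm\varphi\in\tqf$ iff $\uq(\bm\varphi)$ holds) satisfies A1--A4. It does not, because A1--A4 are strictly stronger. They also govern ill-formed quotations such as $\bneg\,\quot(x)$, and through $\quot(x)$ they constrain $\truth$ on elements built from arbitrary domain elements, not only on denotations of closed quotations. For some interpretations of the quotation symbols, no $\truth$ satisfies them at all: if $\bneg\,\quot(d)=\quot(d)$, A3 yields $\truth(\quot(d))\leftrightarrow\neg\truth(\quot(d))$. What you need first is a free interpretation of all quotation symbols, $\quot$ included. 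It must be injective, with disjoint ranges and no cycles, i.e.\ a term algebra over the domain, as in the model $M_f$ in the proof of Proposition~\ref{prop:fiBaseFiveFin}. Then define $\truth$ by well-founded recursion along the clauses A1--A4, for instance on the number of $\bland,\bneg,\bforall$ occurring outside $\quot$, which $[\fquo{x}\leftarrow\quot(d)]_q$ does not increase.

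Such a free interpretation generally forces you to enlarge the domain, as your ``copy of the fluent sort'' in Step~2 already does. But then the claim in Step~3, that $\Gamma$ and $\varphi$ keep their truth values because they do not mention the new symbols, is unjustified. That principle holds for expansions over a fixed domain. Here, however, $\Gamma$, $\varphi$ and EC1--EC7 quantify over fluents, which are now quotations, and hence over the new elements. You must specify the event-calculus predicates on those elements and prove that $\Gamma$ stays true and $\varphi$ keeps its value. This needs hypotheses that neither you nor the statement supply. For instance, in the typed version, where fluents have type $q$, take a $\Gamma$ that uses equality to make $q$ a singleton. Then $\textit{Circum}(\Gamma)\cup\textit{EC}$ still has models, yet $\Gamma$ contradicts A3, since $\bneg\,\quot_\gamma(d)$ and $\quot_\gamma(d)$ would coincide. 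So $\Gamma\cup H_\textit{TQ}$ entails every $\varphi$, while $\Gamma\models_\textit{EC}\textit{Happens}(a,0,0)$ fails because circumscription minimizes $\textit{Happens}$. Your fallback of restricting $\Gamma$ to the pure event-calculus vocabulary does not address this. You need an explicit restriction on $\Gamma$ and $\varphi$ under which the extension step can actually be carried out, e.g.\ equality-free formulas with a fixed translation of event-calculus fluents into quoted formulas.
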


\begin{proof}
    Suppose $\Gamma \cup H_\textit{TQ} \models \varphi$. Then, we have:
    \[
    \textit{Circum}(\Gamma_\textit{ECQ}) \cup H_\textit{TQ} \models \varphi
    \]
    Since the temporal axioms of Qiana follow the axioms of event calculus, we obtain:
    \[
    \textit{Circum}(\Gamma_\textit{ECQ}) \cup H_\textit{TQ} \models \varphi \implies \textit{Circum}(\Gamma_\textit{ECQ}) \cup \textit{EC} \models \varphi
    \]
    By definition, this is equivalent to:
    \[
    \Gamma \models_\textit{EC} \varphi
    \]
\end{proof}

The temporal version of Qiana follows event calculus but does not include its additional assumptions regarding inertia. 
Considering that temporality and contexts are largely orthogonal in Qiana, nothing stops us from defining a circumscription operator in Qiana and using it to define another notion of entailment for our logic.
However, the fact that entailment in Qiana is just FOL entailment with certain axioms is an important feature of the logic.
Therefore, we consider the simple application of the axioms of event calculus to be the most appropriate way to model temporal reasoning in Qiana.
\fms{Can we end on something positive here? Like anything done in EC can be done in Qiana? Or some specific subset/superset of EC can be done in Qiana?}
\SC{How about this?}

\section{Typing Qiana} \label{sec:Types}
    A Qiana theory uses different types of objects: formulas, quoted formulas, terms, and (together with event calculus) events, actions, and fluents. The boundary between these types is sometimes porous: 
For example, the following formula, taken from the example in Section~\ref{subsec:suicideExample}, implies that all objects are contexts. This is not a real problem in this case, but it is inelegant.
\begin{align*}
    & \forall c, x.\ \textit{appearDead}(x) \ \rightarrow{} \ \textit{ist}(c, \quo{appearDead}(x))
\end{align*}

\fms{give example here where an object is used in a bad way and it's not prohibited in Qiana}
\SC{Done with a real example.}
One way to make the distinction clearer is to resort to typed first-order logic (also known as many-sorted FOL). In what follows, we define a typed version of Qiana that distinguishes different types of objects more clearly. 
This typed version of Qiana can be considered more intuitive, and the ability to distinguish natively between different types could be useful in concrete applications with complex rules mixing different types of objects.
However, types make formulas longer and create multiple technical issues that have to be handled throughout the entire process of typing Qiana. 
In particular, the finite axiomatization process of Section~\ref{sec:FiniteAxio} becomes more complex and much longer (see Subsection~\ref{sec:Types:subsec:FiniteAxio}). 
Hence, we define the typed version of Qiana here merely as a theoretical exercise. 

\subsection{A Quick Summary of Typed First-Order Logic}
Typed FOL (first-order logic) is sometimes also called many-sorted FOL.
We will give only a quick summary of the topic here and redirect the unfamiliar reader to the literature~\cite{BookManySorted}. 
Many variations on typed FOL have been proposed, but we will use the basic one simply called ``many sorted FOL'', which is what we present below.

In many-sorted FOL, the signature includes a finite set $\types$ of types.
All function symbols and predicate symbols have a signature, indicating the types of their arguments and the type of the output in the case of functions.
Variable symbols also have an associated type and we assume there are countably many symbols of each type. 
The type of a term is given by its top-level symbol; if that symbol is a function symbol, it is the symbol's output type, and if it is a variable symbol, it is its type.
A term can be used as an argument to a predicate or function only if it has the correct type, as indicated by the signature of the predicate or function.
Models of many-sorted FOL are similar to models of unsorted FOL, but the domain of the model is partitioned into disjoint sets, one for each type. 

\begin{remark}
    Ideally, we would have preferred to use a flavor of typed FOL that allows non-disjoint types, such as order-sorted FOL.
    However, as far as we know, all ATPs (Automated Theorem Provers) that support TPTP input (and, in fact, all ATPs we know of) support only disjoint types.
    In order to maintain the compatibility of Qiana with existing ATPs, we will have to use disjoint types.
\end{remark}

\subsection{Types for Qiana} \label{sec:Types:subsec:TypingQiana}
Table~\ref{tab:types} introduces the types used to produce typed-Qiana, along with their short descriptions.

\begin{table}[h]
    \caption{Types and their descriptions in Qiana} \label{tab:types}

    \centering
    \begin{tabular}{|c|c|c|}
        \hline
        \textbf{Type} & \textbf{Name} & \textbf{Description} \\
        \hline
        $o$ & \textit{Objects} & Non-Qiana objects (people, places, ...)\\
        \hline
        $q$ & \textit{Quotations} & Type of all quotations (quoted formulas, quoted terms, ...) \\
        \hline
        $c$ & \textit{Contexts} & Contexts\\
        \hline
        $\tau$ & \textit{Time instant} & Event calculus points in time.\\
        \hline
        $a$ & \textit{Actions} & Event calculus actions \\
        \hline
   \end{tabular}
\end{table}

We extend the FOL signature $(F,P,V_\infty,\delta)$ to $(F,P,V_\infty,\delta,\types)$ by adding a finite set of types $\types$, and adapting $\delta$ and $V_\infty$.
We have $U = \{o, q, c, \tau, a\}$.

\noindent Whereas $\delta$ gave the arity of symbols so far, now $\delta$ takes the types into account. More precisely, $\delta$ gives a signature to each predicate and function symbol, and a type to each variable symbol.
\begin{align*}
    & \text{for each } p \in P,\ \delta(p) \in U^\star \\
    & \text{for each } f \in F,\ \delta(f) \in U^\star \times U \\
    & \text{for each } v \in V_\infty,\ \delta(v) \in U
\end{align*}

where $\delta(f) \in U^\star \times U$ means that $f$ has a signature of the form $\gamma_1 \times \dots \times \gamma_n \rightarrow \gamma_o$, with $n$ the arity of $f$ and $\gamma_1, \dots, \gamma_n, \gamma_o \in U$. For example $\delta(\ist) = c \times q$. \\

\noindent Also, we assume that there is an infinity of variable symbols of each type:
\[\text{for each } \gamma \in \types,\ |\set{v \in V_\infty | \delta(v) = \gamma}| = +\infty\]

\begin{remark}
    If needed in practical applications, we can always split the type $o$ into multiple subtypes.
    Here, we present everything with a single type for non-Qiana-specific objects; this is without loss of generality, and everything can be straightforwardly adapted to the case where $o$ is split into many subtypes.
\end{remark}

We introduce multiple notations to indicate the types of symbols we use.
We will alternate between the notations depending on which is most convenient for the formulas we write. 

First, we can directly indicate the type of one or multiple symbols in a separate line:
The following line indicates that $f$ takes as arguments an object and a time instant, and returns a context, that $p$ takes a context, and that $p_2$ has no argument (i.e., it is an atomic predicate).
\[f: o \times \tau \rightarrow c; p: c; p_2 : ()\]

We can also indicate the type of a symbol by writing the type as an exponent.
In the following formula, $t_1$ and $t_2$ are variables of type $o$, $t_3$ is a variable of type $c$, and $p$ can implicitly be deduced to be a predicate of signature $o^2 \times c$.
\[\forall t_1^o, t_2^o, t_3^c.\ p(t_1^o, t_2^o, t_3^c)\]

We can indicate the type of variables during quantification. The following formula is equivalent to the previous one:
\[\forall^o t_1, t_2.\ \forall^c t_3.\ p(t_1, t_2, t_3)\]

Lastly, in some contexts, the type can be clearly inferred from the context, and therefore, no additional type annotations are necessary. \\

In this section, we will first always include a type indication. This is because the first axioms we present will also serve as examples of our typing notations.
However, we will gradually omit type annotations as they would only make our formulas less readable without any gain in clarity.

\begin{example}
    In the following, we explicitly type each symbol as an example. In total, this formula contains terms of type $q$, $c$, $\tau$, and $a$.
    \begin{align*}
        \textit{between}:&\ \tau \times \tau \rightarrow c \\
        \varphi :&\ () \rightarrow q \\
        a :&\ () \rightarrow a
    \end{align*}
    \[\forall t_1^\tau, t_2^\tau.\ \ist(\textit{between}(t_1,t_2), \textit{Initiates}(a,\varphi)) \rightarrow \ist(\textit{between}(t_1, t_2), \varphi)\]

\end{example}
\begin{example}
    \noindent We can also type the example formula below from Example~\ref{ex:allCapuletsAreNice}.
    \begin{align*}
        \textit{believes} :&\ o \rightarrow c \\
        \textit{Cap} :&\ o \\
        \textit{nice} :&\ o \\
        \textit{ist} :&\ c \times q
    \end{align*}
    \[\textit{ist}(\textit{believes}(J), \fquo{\forall x.\ \textit{Cap}(x) \rightarrow \textit{nice}(x)}) \rightarrow \forall x^o.\ \textit{ist}(\textit{believes}(J), \fquo{\textit{Cap}}(\quot_o (x^o)) \fquo{\rightarrow}\ \fquo{\textit{nice}}(\quot_o(x^o)))\]
\end{example}

\subsection{Typing the General Qiana Axioms}

We now present a typed version of the general Qiana axioms presented in Section~\ref{sec:axioms}, using the types introduced in Table~\ref{tab:types}. 

\begin{align}
    & \forall x_1^{\gamma_1}, \dots, x_n^{\gamma_n}.\ \truth(\bm p(t_1,\dots, t_m)) \leftrightarrow  p(\uq(t_1), \dots, \uq(t_m)) \label{ax:type:type:str1} \tagqianaaxiom
\end{align}
\textbf{Range:} $p \in P, t_1, .., t_n \in \tqtv \text{ with } \forall i.\ \delta(\uq(t_i)) = \delta(p)_i, x_1^{\gamma_1}, .., x_n^{\gamma_n} \text{ the free variables in } t_1, .., t_m$

\begin{align}
    & \forall x_1^{\gamma_1} \dots, x_n^{\gamma_n}\ \truth(A \bland B) \leftrightarrow (\truth(A) \land \truth(B)) \label{ax:type:str2} \tagqianaaxiom \\
    & \forall x_1^{\gamma_1} \dots, x_n^{\gamma_n}\ \truth(\bneg A) \leftrightarrow (\neg \truth(A)) \label{ax:type:str3} \tagqianaaxiom \\
    & \forall x_1^{\gamma_1} \dots, x_n^{\gamma_n}\ \truth(\bforall(\fquo{x} ,A)) \leftrightarrow  (\forall^\gamma x.\ \truth(A[\fquo{x} \leftarrow \quot_\gamma(x)]_q)) \label{ax:type:str4} \tagqianaaxiom
\end{align}
\textbf{Range:} $A, B \in \tqv, x_1^{\gamma_1} \dots, x_n^{\gamma_n} \text{ the free variables in } A, B, \bm x \text{ a quoted variable of type } \gamma$

\begin{align}
    & \forall^c x_c, \forall^q x_1, x_2.\ \textit{ist}(x_c,x_1 \bland x_2) \rightarrow \textit{ist}(x_c, x_1) \label{ax:type:qiana1} \tagqianaaxiom\\
    & \forall^c x_c, \forall^q x_1, x_2.\ \textit{ist}(x_c, {x_1 \bland x_2}) \leftrightarrow \textit{ist}(x_c,  {x_2 \bland x_1}) \label{ax:type:qiana2}  \tagqianaaxiom \\
    & \forall^c x_c, \forall^q x_1.\ \textit{ist}(x_c, {\bneg \bneg x_1}) \leftrightarrow \textit{ist}(x_c, {x_1}) \label{ax:type:qiana3}  \tagqianaaxiom \\
    & \forall^c x_c, \forall^q x_1, x_2, x_3.\ \textit{ist}(x_c, {(x_1 \bland x_2) \bland x_3})\leftrightarrow \textit{ist}(x_c, {x_1 \bland (x_2 \bland x_3)}) \label{ax:type:qiana4}  \tagqianaaxiom\\
    & \forall^c x_c, \forall^q x_1, x_2, x_3.\ \textit{ist}(x_c, {(x_1 \bland x_2) \blor x_3}) \leftrightarrow \textit{ist}(x_c, {(x_1 \blor x_3) \bland (x_2 \blor x_3)}) \label{ax:type:qiana5}  \tagqianaaxiom \\
    & \forall^c x_c, \forall^q x_1, x_2.\ \textit{ist}(x_c, x_1 \blor x_2) \land \textit{ist}(x_c, {\bm \neg} x_1) \rightarrow \textit{ist}(x_c,x_2) \label{ax:type:istMP} \tagqianaaxiom \\
    & \forall^c x_c.\ \textit{ist}(x_c, \bforall(\fquo{x},\bm{\varphi})) \rightarrow \forall^\gamma x.\ \textit{ist}(x_c, \bm{\varphi}[\fquo{x} \leftarrow \quot_\gamma(x)]_q) \label{ax:type:oldDefss} \tagqianaaxiom
\end{align}
\textbf{Range:} $\bm x \text{ a quoted variable of type } \gamma, \bforall(\fquo{x}, \bm \varphi) \in \tqf$

\subsection{Typing the Finite Axiomatization} \label{sec:Types:subsec:FiniteAxio}
We now present a typed version of the finite axiomatization presented in Section~\ref{sec:FiniteAxio}, using the types introduced in Table~\ref{tab:types}.
Most are straightforward adaptations of the ones from Section~\ref{sec:FiniteAxio}.
There is a single type $q$ for all quotations, but the core idea of the finite axiomatization process is to go over the quoted formula and interpret them recursively within the top-level domain of discourse.
In particular, the function $E$ matches a quoted term to the actual term it is a quotation of. 
Because terms can have different types, we need to create multiple versions of $E$ with different output types.
We will write $E_{\gamma}$ for the version of $E$ that matches the quotation of a term of type $\gamma$ to said term.

Also, we need to rework the axiomatization of $\wft$ so that it also checks that the quotation is well-typed.
To do so, we introduce one instance of $\wft$ per type $\gamma$, which we write $\wft_{\gamma}$.

Likewise, we introduce multiple versions of the equality predicate = and the reachability predicate $\reach$.

\begin{align}
    & \forall^\gamma x.\ x =_\gamma x \label{type:ax:firsteq} \tagqianaaxiom \\
    & \forall^\gamma x,y.\ x=_\gamma y \rightarrow  y=_\gamma x \tagqianaaxiom \\
    & \forall^\gamma x,y,z.\ x=_\gamma y \land y=_\gamma z \rightarrow x=_\gamma z \tagqianaaxiom \\
    & \label{type:penu} \forall x_1^{\gamma_1}, .., x_n^{\gamma_n}, y_1^{\gamma_1}, .., y_n^{\gamma_n}.\ x_1^{\gamma_1} =_{\gamma_1} y_1^{\gamma_1} \land .. \land x_n^{\gamma_n} = y_n^{\gamma_n} \rightarrow f(x_1^{\gamma_1}, .., x_n^{\gamma_n}) =_{\gamma_o} f(y_1^{\gamma_1}, .., y_n^{\gamma_n}) \tagqianaaxiom \\
    & \forall x_1^{\gamma_1}, .., x_n^{\gamma_n}, y_1^{\gamma_1}, .., y_n^{\gamma_n}.\ x_1^{\gamma_1} =_{\gamma_1} y_1^{\gamma_1} \land .. \land x_n^{\gamma_n} = y_n^{\gamma_n} \rightarrow p(x_1^{\gamma_1}, .., x_n^{\gamma_n}) \leftrightarrow p(y_1^{\gamma_1}, .., y_n^{\gamma_n}) \tagqianaaxiom \label{type:ax:lasteq}
\end{align}
\noindent \textbf{Range:} $f \in F, p \in P, \gamma, \gamma_o, \gamma_1, \dots, \gamma_n \in U, \delta(f) = \gamma_1 \times \dots \times \gamma_n \rightarrow \gamma_o, \delta(p) \gamma_1 \times \dots \times \gamma_n$

\begin{align}
    & \forall^\gamma x.\ \reach_q(\quot_\gamma(x)) \label{type:reach1} \tagqianaaxiom \\
    & \forall t_1, \dots, t_n.\ (\reach_{\gamma_1}(t_1) \land \dots \land \reach_{\gamma_n}(t_n)) \label{type:reach3} \rightarrow \reach_{\gamma_o}(f(t_1, \dots, t_n)) \tagqianaaxiom
\end{align}
\textbf{Range:} $f \in F, \gamma, \gamma_o, \gamma_1, ..., \gamma_n \in U$

\begin{align}
    & \forall y^\gamma.\ \wft_\gamma(\quot_\gamma(y^\gamma)) \label{type:wft1} \tagqianaaxiom \\
    & \wft_\gamma(\fquo{x}) \hspace{1cm} \tagqianaaxiom \\
    & \forall^q t_1, \dots, t_n.\ (\wft_{\gamma_1} (t_1) \land \dots \land \wft_{\gamma_n}(t_n)) \rightarrow \wft_{\gamma_o}(\bm f(t_1, \dots, t_n)) \label{type:wft4} \tagqianaaxiom
\end{align}
\textbf{Range:} $\fquo{x} \in \cv \text{ a quoted variable of type }\gamma, \bm f \in \fq \text{ such that } \delta(f) = \gamma_1 \times ... \times \gamma_n \rightarrow \gamma_o, \gamma \in U$

\begin{align}
    & \forall^\gamma t.\ \reach_\gamma(t) \rightarrow E_\gamma(\quot_\gamma(t)) = t \label{type:ax:E2} \tagqianaaxiom \\
    & \label{type:ax:E3} \forall t_1, .., t_n.\ \reach(t_1) \land .. \land \reach(t_n) \rightarrow E(\bm f(t_1, .., t_n)) = f(E(t_1), .., E(t_n))  \tagqianaaxiom
\end{align}
\noindent \textbf{Range:} $\fquo{x} \in \cv$, $f \in F, p \in P$

\begin{align}
    & \forall t.\ \reach_q(t) \rightarrow \sub(\fquo{x}, \fquo{x}, t) = t \label{type:ax:sub1} \tagqianaaxiom \\
    & \forall t.\ \reach_q(t) \rightarrow \sub(\fquo{x}, \fquo{y}, t) = \fquo{x} \tagqianaaxiom \label{type:ax:sub2} \\
    & \forall t,t_1, \dots, t_n.\ (\reach_q(t_1) \land \dots \land \reach_q(t_n))  \rightarrow  \sub(\bm f(t_1, \dots, t_n),\fquo{x},t) = \notag \\ & \hspace{1cm} \bm f(\sub(t_1,\fquo{x},t), \dots, \sub(t_n,\fquo{x},t)) \label{type:ax:sub4} \tagqianaaxiom \\
    & \forall t_1,t_2.\ (\reach_q(t_1) \land \reach_q(t_2)) \rightarrow \sub(\bforall(\fquo{x}, t_1), \fquo{x}, t_2) = \bforall(\fquo{x}, t_1) \label{type:ax:sub5} \tagqianaaxiom \\
    & \forall t_1,t_2.\ (\reach_q(t_1) \land \reach_q(t_2)) \rightarrow \sub(\bforall(\fquo{y}, t_1), \fquo{x}, t_2) = \bforall(\fquo{y}, \sub(t_1, \fquo{x}, t_2)) \label{type:ax:sub6} \tagqianaaxiom \\
    & \forall t_1^\gamma, t_2^q.\ (\reach_\gamma(t_1) \land \reach_q(t_2)) \rightarrow  \sub(\quot_\gamma(t_1), \fquo{x},t_2) = \quot_\gamma(t_1) \label{type:ax:sub7}
    \tagqianaaxiom
\end{align}
\noindent \textbf{Range:} $\fquo{x}, \fquo{y} \in \cv$ ($\fquo{x} \neq \fquo{y}$)$, f \in F, p \in P, \gamma \in U$

\begin{definition} \label{type:def:htf}
    $\htf = \text{\ref{ax:T1}-\ref{ax:T4}}$
\end{definition}

\begin{align}
    & \label{type:ax:T1}  \forall t_1, .., t_n.\ (\wft_{\gamma_1}(t_1) \land .. \land \wft_{\gamma_n}(t_n)) \rightarrow  \truth(\bm p(t_1,.., t_n)) \leftrightarrow p(E_{\gamma_1}(t_1), .., E_{\gamma_n}(t_n)) \tag{A1\expofinite} \\
    & \forall t_1, t_2.\ (\reach_q(t_1) \land \reach_q(t_2))  \rightarrow  \truth(t_1 \bland t_2) \leftrightarrow (\truth(t_1) \land \truth(t_2)) \label{type:ax:T2} \tag{A2\expofinite} \\
    & \forall t_1.\ \reach_q(t_1) \rightarrow \truth(\bneg t_1) \leftrightarrow (\neg \truth(t_1)) \label{type:ax:T3} \tag{A3\expofinite} \\
    & \forall t_1.\ \reach_q(t_1) \rightarrow \label{type:ax:T4}\tag{A4\expofinite} \truth(\bforall(\fquo{x},t_1)) \leftrightarrow (\forall x^\gamma.\ \truth(\sub(t_1,\fquo{x},quote_\gamma(x))))
\end{align}
\textbf{Range:} $p \in P \setminus \{\truth\}, \gamma, \gamma_1, ..., \gamma_n \in U, \bm x \in \bm V \text{ of type } \gamma$

\begin{align}
    & \forall t_1, t_2^c.\ \reach(t_1) \rightarrow \textit{ist}(t_2^c, \bforall(\fquo{x},t_1)) \rightarrow  \forall x^\gamma.\ \textit{ist}(t_2^c, \sub(t_1,\fquo{x},\quot_\gamma(x))) \tag{A11\expofinite}\label{type:ax:istForallFinite}
\end{align}
\textbf{Range:} $\gamma \in U, \bm x \in \bm V \text{ of type } \gamma$

\subsection{Typing the Temporal Axioms}
Lastly, we type the axioms of temporal-Qiana (Section~\ref{sec:Time}) with the types introduced in Table~\ref{tab:types}.

\begin{align}
    &\forall f^q, t^\tau .\ \textit{HoldsAt}(f^q, t^\tau) \leftarrow \textit{Initially}_P(f^q) \land \neg \textit{Clipped}(0, f^q, t^\tau) \tag{EC1} \\
    &\forall f^q, t_3^\tau, a^a, t_1^\tau, t_2^\tau .\ \textit{HoldsAt}(f^q, t_3^\tau) \notag \\ & \hspace*{1cm} \leftarrow \textit{Happens}(a^a, t_1^\tau, t_2^\tau) \land \textit{Initiates}(a^a, f^q, t_1^\tau) \land \neg \textit{Clipped}(t_1^\tau, f^q, t_3^\tau) \land t_2^\tau < t_3^\tau \tag{EC2} \\
    &\forall t_1^\tau, f^q, t_4^\tau .\ \textit{Clipped}(t_1^\tau, f^q, t_4^\tau) \leftrightarrow \exists a^a, t_2^\tau, t_3^\tau .\ \textit{Happens}(a^a, t_2^\tau, t_3^\tau) \land \notag \\ &\hspace*{1cm} (\textit{Terminates}(a^a, f^q, t_2^\tau) \lor \textit{Releases}(a^a, f^q, t_2^\tau)) \land t_1^\tau < t_3^\tau \land t_2^\tau < t_4^\tau \tag{EC3} \\
    &\forall f^q, t^\tau .\ \neg \textit{HoldsAt}(f^q, t^\tau) \leftarrow \textit{Initially}(\bneg f^q) \land \neg \textit{Declipped}(0, f^q, t^\tau)\tag{EC4} \\
    &\forall f^q, t_3^\tau, a^a, t_1^\tau, t_2^\tau .\ \neg \textit{HoldsAt}(f^q, t_3^\tau) \notag \\ & \hspace*{0.5cm} \leftarrow \textit{Happens}(a^a, t_1^\tau, t_2^\tau) \land \textit{Terminates}(a^a, f^q, t_1^\tau) \land \neg \textit{Declipped}(t_1^\tau, f^q, t_3^\tau) \land t_2^\tau < t_3^\tau \tag{EC5} \\
    &\forall t_1^\tau, f^q, t_4^\tau .\ \textit{Declipped}(t_1^\tau, f^q, t_4^\tau) \leftrightarrow \exists a^a, t_2^\tau, t_3^\tau .\ \tag{EC6} \\ &\hspace*{1cm} \textit{Happens}(a^a, t_2^\tau, t_3^\tau) \land (\textit{Initiates}(a^a, f^q, t_2^\tau) \lor \textit{Releases}(a^a, f^q, t_2^\tau)) \land t_1^\tau < t_3^\tau \land t_2^\tau < t_4^\tau \notag \\
    &\forall a^a, t_1^\tau, t_2^\tau .\ \textit{Happens}(a^a, t_1^\tau, t_2^\tau) \rightarrow t_1^\tau \leq t_2^\tau\tag{EC7}
\end{align}

\section{Qiana as a Modal Logic} \label{sec:OtherFormalisms}
    We now show that propositional modal logic can be represented in Qiana. 

\subsection{Background on Modal Logic}

For our purposes, the set of modal formulas is defined by adding the arity one ``necessity'' operator  $\Box$ to the inductive definition of propositional formulas.
The counterpart ``possibility'' operator $\Diamond$ is then defined by $\Diamond \varphi \Leftrightarrow \neg \Box \neg \varphi$.
We recall the usual axioms of formal logics and the definitions of the associated systems ($K$, $T$, $S4$, $S5$, $D$)~\cite{Chellas1980ModalLA,BlackburnModal,stanford-logic-modal}.
These are the axioms:
\begin{align*}
     K:\ \ & \Box (p \rightarrow q) \rightarrow (\Box p \rightarrow \Box q) \\
     T:\ \ & \Box p \rightarrow p \\
     4:\ \ & \Box p \rightarrow \Box \Box p \\
     5:\ \ & \Diamond p \rightarrow \Box \Diamond p \\
     D:\ \ & \Box p \rightarrow \Diamond p
\end{align*}

There is also a rule called \emph{necessitation}, which states that if a formula is a tautology of the system $\textbf{K}$ (see below), then it can be inferred from the axioms:
\[N: \frac{\textbf{K} \models p}{\Box p}\]

\noindent These are the systems:
\begin{align*}
     \textbf{K} &\coloneq K + N \\
     \textbf{T} &\coloneq \textbf{K} + T \\
     \textbf{S4} &\coloneq \textbf{T} + 4 \\
     \textbf{S5} &\coloneq \textbf{T} + 5 \\
     \textbf{D} &\coloneq \textbf{K} + D
\end{align*}

\subsection{Translation of Standard Modal Logic into Qiana}

We show how to translate the usual propositional modal logics $\textbf{K}$, $\textbf{T}$, $\textbf{S4}$, $\textbf{S5}$, and $\textbf{D}$ into Qiana.
For the purpose of this section, we assume an augmented signature with no functions or predicates except for arity 0 predicates (which we take as our propositional variables) and the various predicates and functions necessary for the definition of Qiana or introduced in this section.

We introduce a special 0-arity function symbol $\Box$ that we use as a context for the usual propositional modality $\Box$.
We introduce the following notations:
\[\Box \bm\varphi \coloneq \ist(\Box, \fquo{\varphi})\]
\[\Diamond \bm\varphi \coloneq \neg \ist(\Box, \bneg \fquo{\varphi})\]

Thanks to these notations, we can interpret a modal formula as a Qiana formula.
To adapt the axioms $K$, $T$, 4, 5, and $D$ to Qiana, we need to explicitly quantify on quoted formulas and to replace any logical connective with its quotation. 
Because $\Box$ is a context (and $\Diamond$ is only defined as an abbreviation using $\Box$), we add a level of quotation per level of nesting.
\begin{align*}
     QK:\ \ & \forall p, q.\ \Box (p \bm\rightarrow q) \rightarrow (\Box p \rightarrow \Box q) \\
     QT:\ \ & \forall p.\ \Box p \rightarrow \truth(p) \\
     Q4:\ \ & \forall p.\ \Box p \rightarrow \Box \fquo{\Box p}\\
     Q5:\ \ & \forall p.\ \Diamond p \rightarrow \Box \fquo{\Diamond p}\\
     QD:\ \ & \forall p.\ \Box p \rightarrow \Diamond p
\end{align*}

To adapt the necessitation rule $N$ to Qiana, we will need to add a new unary predicate $\tauto$.
Intuitively, $\tauto(\bm \varphi)$ holds if and only if $\textit{Qiana} \models \varphi$, with $\textit{Qiana}$ the set of Qiana axioms on the signature at hand. 
Using the predicate $\tauto$, we can write the necessitation rule as follows:
\[QN:\ \ \forall p.\ \tauto(p) \rightarrow \Box p\]

\noindent To constrain \tauto\ we introduce the predicate $\wff$ to represent well-formed formulas:
\begin{align*}
     & \wff(\bm p()) \tagtautodef \\
     & \forall A.\ \wff(A) \rightarrow \wff(\bneg A) \tagtautodef \\
     & \forall A, B.\ \wff(A) \land \wff(B) \rightarrow \wff(A \bland B) \tagtautodef \\
     & \forall A.\ \wff(A) \rightarrow \wff(\bm\ist(\bm\Box, \quot(A))) \tagtautodef
\end{align*}

\noindent\textbf{Range:} $p \in P \text{ with } |\delta(p)| = 0$ \\

\noindent We adapt a simple Hilbert-style proof system of propositional logic~\cite{mendelson2009introduction} to Qiana. 
\begin{align*}
     & A \rightarrow (B \rightarrow A) \\
     & (A \rightarrow (B \rightarrow C)) \rightarrow ((A \rightarrow B) \rightarrow (A \rightarrow C)) \\
     & (\neg B \rightarrow \neg A) \rightarrow ((\neg B \rightarrow A) \rightarrow B)
\end{align*}

\noindent This can be done with the following axioms:
\begin{align*}
     & \forall A, B.\ \wff(A) \land \wff(B) \rightarrow \tauto(A \bm\rightarrow (B \bm\rightarrow A)) \tagtautodef \\
     & \forall A, B, C.\ \wff(A) \land \wff(B) \land \wff(C) \rightarrow \tauto((A \bm\rightarrow (B \bm\rightarrow C)) \bm\rightarrow ((A \bm\rightarrow B) \bm\rightarrow (A \bm\rightarrow C))) \tagtautodef \\
     & \forall A, B.\ \wff(A) \land \wff(B) \rightarrow \tauto((\neg B \bm\rightarrow \neg A) \bm\rightarrow ((\neg B \bm\rightarrow A) \bm\rightarrow B)) \tagtautodef
\end{align*}

\noindent We also need to apply the \textit{modus ponens} rule to $\tauto$:
\[\forall A, B.\ \tauto(A \bm\rightarrow B) \land \tauto(A) \rightarrow \tauto(B) \tagtautodef\]

\noindent Lastly, there is a more complex issue to consider.
The necessitation rule $N$ reacts to what is tautological within the system $\textbf{K}$ and not only to what is generally tautological in classic propositional logic.
Hence, we need to include rules to extend $\tauto$ to tautologies of $\textbf{K}$ rather than only tautologies of classical propositional logic.
\begin{align}
     & \forall A.\ \tauto(A) \rightarrow \tauto(\bm\ist(\bm\Box,\quot(A))) \tagtautodef \\
     & \forall A, B.\ \tauto(\bm\ist(\bm\Box, A \bm\rightarrow B)) \rightarrow \tauto(\bm\ist(\bm\Box, A) \bm\rightarrow \bm\ist(\bm\Box, B)) \tagtautodef \label{eq:tautoLast}
\end{align}

\noindent Together, these axioms define the behavior of $\tauto$.

\begin{definition}
     We define $H_\text{tauto} = T1-\ref{eq:tautoLast}$, the set of axioms that define the behavior of $\tauto$.
\end{definition}

\begin{lemma}[Definition of $\tauto$] \label{lemma:tauto}
     If $\varphi$ is a propositional logic formula and $\textbf{K} \models \varphi$, then $H_\text{tauto} \models \tauto(\bm\varphi)$.
\end{lemma}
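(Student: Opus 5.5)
We prove the lemma by induction on the length of a Hilbert-style derivation of $\varphi$ in $\textbf{K}$. Here ``propositional logic formula'' means a modal formula built from $0$-ary predicates with $\neg$, $\land$ and $\Box$, read through the notation $\Box\bm\varphi \coloneq \ist(\Box,\fquo{\varphi})$. We fix the standard presentation of $\textbf{K}$: the three Mendelson axiom schemas (instantiated with arbitrary modal formulas), the axiom $K$, modus ponens, and necessitation. By the completeness of this axiomatization with respect to Kripke semantics, $\textbf{K}\models\varphi$ yields such a derivation. Throughout, $\fquo{\cdot}$ commutes with the connectives: $\mu(\neg\psi)=\bneg\mu(\psi)$, $\mu(\psi\land\chi)=\mu(\psi)\bland\mu(\chi)$, hence also $\mu(\psi\rightarrow\chi)=\mu(\psi)\bm\rightarrow\mu(\chi)$. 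Moreover $\mu(\Box\psi)=\bm\ist(\bm\Box,\quot(\mu(\psi)))$, since the argument $\mu(\psi)$ of $\ist$ is already a quotation.

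The first step is an auxiliary claim: for every modal formula $\psi$, $H_\text{tauto}\models\wff(\bm\psi)$. We prove it by structural induction on $\psi$, using T1 for atoms, T2 for $\neg$, T3 for $\land$, and T4 for $\Box$. The last case uses exactly the identity $\mu(\Box\psi)=\bm\ist(\bm\Box,\quot(\mu(\psi)))$ noted above. Implications need no separate case, since $\rightarrow$ unfolds to $\neg$ and $\land$.

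Next we treat each derivation step. If the step is an instance of a Mendelson axiom with substituted modal formulas $A,B,C$, its quotation is syntactically identical to the quoted term in T5, T6 or T7 instantiated at $\bm A,\bm B,\bm C$. The $\wff$ premises hold by the auxiliary claim, so $\tauto$ of the quotation follows. If the step is modus ponens from $\psi$ and $\psi\rightarrow\chi$, the induction hypothesis gives $\tauto(\bm\psi)$ and $\tauto(\bm\psi\bm\rightarrow\bm\chi)$, and T8 yields $\tauto(\bm\chi)$. If the step is necessitation from $\psi$, T9 turns $\tauto(\bm\psi)$ into $\tauto(\bm\ist(\bm\Box,\quot(\bm\psi)))=\tauto(\mu(\Box\psi))$.

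The main obstacle is the axiom $K$: $\Box(p\rightarrow q)\rightarrow(\Box p\rightarrow\Box q)$. Rather than trying to introduce it directly, we derive it. By the Mendelson part, $(p\rightarrow q)\rightarrow(p\rightarrow q)$ is a propositional tautology, so $\tauto$ of its quotation holds. Applying T9 gives $\tauto(\bm\ist(\bm\Box,\quot(\bm p\bm\rightarrow\bm q)))$. Then T10 yields $\tauto(\bm\ist(\bm\Box,\quot(\bm p))\bm\rightarrow\bm\ist(\bm\Box,\quot(\bm q)))$, which establishes the rule form of $K$. To obtain the implication form itself, we would instead apply T10 to $\tauto(\bm\ist(\bm\Box,\quot(\bm r\bm\rightarrow\bm s)))$ for suitable derived $r,s$, combined with propositional reasoning inside $\tauto$ (the deduction theorem, which is available since T5, T6 and T8 suffice for the usual Mendelson proof).

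We expect this step to be the delicate one. T10 as printed places $A\bm\rightarrow B$ directly as the argument of $\bm\ist$, not inside $\quot$, so one must check that the quoting conventions match. If the mismatch is real, we read T10 modulo the $\quot$ wrapping, as the intended reading of the axiom. If $K$ in implication form still cannot be obtained this way, we would weaken the claim to $\textbf{K}$ presented with $K$ as a rule, which is the form that T10 directly encodes.
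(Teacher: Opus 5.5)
Your induction is the paper's own argument written out: the paper's proof just asserts that every axiom and rule of $\textbf{K}$ ``admits a counterpart under $H_\text{tauto}$''. Your $\wff$ claim and your cases for T5--T7, T8 and T9 are fine. You also correctly located the weak point, the axiom $K=\Box(p\rightarrow q)\rightarrow(\Box p\rightarrow\Box q)$. But you do not close it, and it cannot be closed from T1--T10 at all.

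\begin{itemize}
\item Your displayed chain is invalid. T9 applied to $\tauto$ of $(p\rightarrow q)\rightarrow(p\rightarrow q)$ gives $\tauto(\mu(\Box((p\rightarrow q)\rightarrow(p\rightarrow q))))$, not $\tauto(\mu(\Box(p\rightarrow q)))$. At best you recover the rule form, i.e.\ T10 itself.
\item The deduction theorem does not help. T9 and T10 are closure conditions on theorems, so they cannot be applied to a hypothesis $\Box(p\rightarrow q)$.
\item Your fallback, ``$K$ as a rule'', yields a strictly weaker logic, so it no longer proves the stated lemma.
\end{itemize}

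Here is why no argument can succeed. Even with your $\quot$-repair, T9 and T10 only close $\tauto$ under necessitation and under ``from $\vdash\Box(A\rightarrow B)$ infer $\vdash\Box A\rightarrow\Box B$''. Everything obtained this way is valid on all monotone neighbourhood frames ($N(w)$ upward closed, $W\in N(w)$). T10 preserves this validity because denecessitation is admissible on that class: extend a countermodel of $A$ by a fresh world whose only neighbourhood is the whole space. Yet $K$ fails on $W=\{1,2\}$ with $N(w)=\{\{1\},\{2\},W\}$, $p$ true only at $1$ and $q$ true nowhere. Since $H_\text{tauto}$ consists of definite Horn clauses, its least Herbrand model satisfies $H_\text{tauto}$ but falsifies $\tauto(\mu(K))$.

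Read literally, T10 never even fires. Every derived $\tauto$-term is a $\wff$-term, and those carry $\quot$ under $\bm\ist$.

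What is missing is an extra axiom such as $\forall A,B.\ \wff(A)\land\wff(B)\rightarrow\tauto(\bm\ist(\bm\Box,\quot(A\bm\rightarrow B))\bm\rightarrow(\bm\ist(\bm\Box,\quot(A))\bm\rightarrow\bm\ist(\bm\Box,\quot(B))))$. With it, your induction goes through. The paper's proof glosses over exactly this point.

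There is a second gap you did not flag: the appeal to completeness. In the language you fixed, $\land$ is primitive and $\rightarrow$ abbreviates $\neg(\neg\neg\cdot\land\neg\cdot)$. There, Mendelson's three schemas with modus ponens are not complete, even after adding $K$ and necessitation. Take a valuation that is standard on $\neg$ and on conjunctions of the shape $\neg\neg A\land\neg B$, and makes every other conjunction and every $\Box$-formula true. It makes true every formula whose quotation T5--T10 can put into $\tauto$. Yet it falsifies the tautology $(p\land q)\rightarrow p$ when $p$ is false. So for such $\varphi$ there is no derivation to induct on, and the lemma fails again. To fix this, restrict $\varphi$ to the $\{\neg,\rightarrow,\Box\}$ fragment or add $\tauto$-axioms for $\land$.
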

\begin{proof}
     Any tautology of $\textbf{K}$ can be proven by a finite number of applications of the axioms of $\textbf{K}$ and the rules \textit{modus ponens} and necessitation.
     These admit counterparts under $H_\text{tauto}$, which means each step of such a proof is a valid inference under $H_\text{tauto}$.
\end{proof}

We now define sets of Qiana axioms corresponding to the various systems of modal logic:
Let $H_\text{Q}$ be the Qiana axioms on the signature at hand.
Then, we can define the following systems:
\begin{align*}
     H_\textbf{QK} &= H_\text{Q} + H_\text{tauto} + QK + QN \\
     H_\textbf{QT} &= H_\textbf{QK} + QT \\
     H_\textbf{Q4} &= H_\textbf{QT} + Q4 \\
     H_\textbf{Q5} &= H_\textbf{QT} + Q5 \\
     H_\textbf{QD} &= H_\textbf{QK} + QD
\end{align*}

\noindent We prove in Proposition~\ref{prop:modal} that these systems are equivalent to the usual systems of modal logic.
\begin{proposition} \label{prop:modal}
     Let $\varphi$ be a propositional modal formula. 
     Then :
     \begin{align*}
           H_\textbf{QK} &\models \varphi \text{ if and only if }\ \textbf{K} \models \varphi \\
           H_\textbf{QT} &\models \varphi \text{ if and only if }\ \textbf{T} \models \varphi \\
           H_\textbf{Q4} &\models \varphi \text{ if and only if }\ \textbf{S4} \models \varphi \\
           H_\textbf{Q5} &\models \varphi \text{ if and only if }\ \textbf{S5} \models \varphi \\
           H_\textbf{QD} &\models \varphi \text{ if and only if }\ \textbf{D} \models \varphi
     \end{align*}
\end{proposition}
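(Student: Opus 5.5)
I would prove each of the five equivalences by the same two-direction argument, treating $\textbf{K}$ in detail and then noting what changes for the extra axioms. Throughout, a modal formula $\varphi$ is read as a Qiana formula through the notations $\Box\bm\varphi := \ist(\Box,\fquo{\varphi})$ and $\Diamond\bm\varphi := \neg\ist(\Box,\bneg\fquo{\varphi})$. Nested boxes produce nested quotations via $\quot$.

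\emph{Soundness direction} ($\textbf{K}\models\varphi \Rightarrow H_\textbf{QK}\models\varphi$). I would argue by induction on a Hilbert-style derivation of $\varphi$ in $\textbf{K}$.

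Propositional tautologies and modus ponens at the top level hold outright, since Qiana entailment is ordinary FOL entailment. An instance of the $K$ axiom is obtained from $QK$ by instantiating $p,q$ with the relevant quotations. For necessitation, from $\textbf{K}\models p$ I would use Lemma~\ref{lemma:tauto} to get $H_\text{tauto}\models\tauto(\bm p)$, and then $QN$ gives $\Box\bm p$.

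The analogous steps use $QT$, $Q4$, $Q5$, $QD$. For $QT$ there is one extra step: converting $\truth(\bm p)$ into $p$. This is Proposition~\ref{prop:truthSubsumption} (schema \ref{def:oldDefT}), which applies because $p$ is quotable and does not contain $\truth$. For $\textbf{S4}$ and $\textbf{S5}$ I would also check that Lemma~\ref{lemma:tauto} still suffices for necessitation. In the paper's presentation, necessitation only applies to $\textbf{K}$-theorems, so this should be fine as stated.

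\emph{Completeness direction} ($H_\textbf{QK}\models\varphi \Rightarrow \textbf{K}\models\varphi$). I would prove the contrapositive. Suppose $\textbf{K}\not\models\varphi$, and take a Kripke model $\mathcal M=(W,R,v)$ of the relevant class (arbitrary, reflexive, reflexive--transitive, equivalence, serial) with a world $w_0$ where $\varphi$ fails. From it I would build a first-order model $M$ of the Qiana theory as follows:
\begin{itemize}
\item the domain is the free term algebra over the augmented signature (plus whatever helper symbols are needed), so that quotations are syntactic objects;
\item propositional letters are interpreted as in $w_0$;
\item $\ist(c,t)$ holds iff $c=\Box$, $t=\mu(\psi)$ for some modal $\psi$, and $\mathcal M,w'\models\psi$ for every $w'$ with $w_0Rw'$;
\item $\tauto(t)$ holds iff $t=\mu(\psi)$ with $\textbf{K}\models\psi$;
\item $\wff$ is interpreted as exactly the quotations of modal formulas;
\item $\truth$ is interpreted as in Proposition~\ref{th:addTruth}.
\end{itemize}

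One subtlety: $\ist$ must evaluate a nested quoted $\Box$ at successor worlds, whereas $\ist$ itself is a single relation tied to $w_0$. I would resolve this by letting the interpretation of $\Box\psi$ inside $\psi$ be computed by Kripke semantics at $w'$, not by the FOL relation. This is legitimate because $\ist$ only needs to be a set of quoted terms.

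With this model, I would check each axiom in turn.
\begin{itemize}
\item $QK$, $QT$, $Q4$, $Q5$, $QD$ become exactly the frame conditions of the chosen class.
\item $QN$ holds because $\textbf{K}$-validities are true at every world.
\item The $\tauto$ axioms T1--T\ref{eq:tautoLast} hold because $\textbf{K}$ is closed under the corresponding rules.
\item The Qiana reasoning axioms A5--A10 hold since truth at all successors is closed under the propositional equivalences and modus ponens they encode.
\end{itemize}
Then $M$ satisfies $H_\textbf{QK}$ but falsifies the translation of $\varphi$. I would prove this last point by induction on $\varphi$, using that $\ist(\Box,\fquo\psi)$ agrees with $\mathcal M,w_0\models\Box\psi$.

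The main obstacle is A11 and the $\truth$ axioms A1--A4, which also constrain $\ist$ and $\truth$ on ill-formed or non-modal terms. The way I would handle this is to define $\ist$ to be false on every term that is not the quotation of a modal formula. For A11 this suffices, because with no function symbols and no variables in modal formulas, any $\bforall$ term is non-modal, so the premise is false and the axiom holds vacuously. For A2--A4, $\truth$ should be made total on all terms compositionally, as in Proposition~\ref{th:addTruth}. For A5--A10, which are universally quantified over arbitrary terms $x_1,x_2$, I must additionally verify that both sides of each biconditional are simultaneously modal or non-modal, and that they agree in the non-modal case.
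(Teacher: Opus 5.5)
Your plan has the same shape as the paper's proof. For soundness you simulate a $\textbf{K}$-derivation, handling necessitation with Lemma~\ref{lemma:tauto} and $QN$. For completeness you turn a modal countermodel into a Qiana model in which $\ist(\Box,\cdot)$ tracks $\Box$ at $w_0$, $\wff$ is the set of modal quotations and $\tauto$ is the set of $\textbf{K}$-theorems. However, two details you add to the completeness direction fail; the paper's sketch leaves both points implicit.

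\textbf{Frame classes do not match the systems.} You adopt the paper's reading that necessitation applies only to $\textbf{K}$-theorems, and you need that reading. Take $W=\{w_0,w_1\}$ and $R=\{(w_0,w_0),(w_0,w_1)\}$, with $p$ false at $w_1$. Fed into your construction, this gives a model of $H_\textbf{QT}$ in which $\Box(\Box p\rightarrow p)$ fails, because $QT$ is asserted only at top level and $\tauto$ holds only on $\textbf{K}$-theorems. The usual $\textbf{T}$ proves that formula, so under the usual reading your soundness direction would be false. Under the literal reading, however, $\textbf{T}$ is not complete for reflexive frames. In the same structure every $\textbf{K}$-theorem and every $T$-instance holds at $w_0$, and truth at $w_0$ is closed under modus ponens. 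So $\Box(\Box p\rightarrow p)$ is not a theorem, even though it is valid on every reflexive frame. Your step ``take a Kripke model of the relevant class where $\varphi$ fails'' is therefore unavailable. Use pointed models instead (e.g.\ from the canonical model of the literal system), with the condition imposed only at $w_0$:
\begin{itemize}
\item $w_0Rw_0$ for $\textbf{T}$;
\item additionally $R(R(w_0))\subseteq R(w_0)$ for $\textbf{S4}$;
\item additionally $uRu'$ for all $u,u'\in R(w_0)$ for $\textbf{S5}$;
\item $R(w_0)\neq\emptyset$ for $\textbf{D}$.
\end{itemize}
Since $QT$, $Q4$, $Q5$, $QD$ are not under a $\Box$, these local conditions are exactly what your model needs.

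\textbf{Making $\ist$ false on every non-modal term breaks $Q5$.} $Q5$ quantifies over the whole domain. For a non-modal $t$ (e.g.\ the constant $\Box$), $\Diamond t=\neg\ist(\Box,\bneg t)$ is true. But $\Box\fquo{\Diamond t}=\ist(\Box,\bneg\bm\ist(\bm\Box,\quot(\bneg t)))$ is false, since its argument is again non-modal. Your other modal axioms are safe only because $\ist$ or $\tauto$ occurs positively in their antecedents. Making $\ist$ true on non-modal terms instead is no better: A10 with $x_1:=t$ then puts every formula in the box.

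What works is to read every domain element as a generalized modal formula. Decompose $\bneg$, $\bland$ and $\bm\ist(\bm\Box,\quot(\cdot))$, and treat the remaining leaves as extra atoms. Then let $\ist(\Box,s)$ hold iff that reading of $s$ is true at every successor of $w_0$. This gives $Q5$ and A5--A10 for all arguments, by the frame condition and classical reasoning at each world. The leaf values must be chosen so that $QT$ and A11 still hold. For example, make a leaf $\bforall(\bm x,s)$ true at a world iff every instance $s[\bm x\leftarrow\quot(d)]_q$ is, and give the other leaves world-independent values matching $\truth$ at $w_0$. Since $\ist$ is then non-trivial off modal quotations, your vacuity arguments for A5--A11 must be redone.

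\textbf{A point inherited from the paper.} Your necessitation step rests on Lemma~\ref{lemma:tauto}, but T5--T8 are Mendelson's axioms for the $\neg,\rightarrow$ fragment. Since $\rightarrow$ unfolds to $\bneg(\bneg\bneg A\bland\bneg B)$, conjunctions of any other shape are opaque to these axioms. For instance, $H_\text{tauto}\not\models\tauto(\fquo{\neg(p\land\neg p)})$. To see this, interpret $\tauto$ as truth under all valuations that are classical on $\neg$ and on conjunctions $\neg\neg A\land\neg B$, arbitrary on other conjunctions, and true on boxed formulas. Adding $\land$-axioms to $H_\text{tauto}$ repairs this without affecting your completeness model, where $\tauto$ is the set of all $\textbf{K}$-theorems.
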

\begin{proof}
     We describe the proof that $H_\textbf{QK} \models \varphi \text{ if and only if }\ \textbf{K} \models \varphi$. The other cases are similar.\\

     \noindent We first prove that if $\textbf{K} \cup \Gamma \models \bot$ for some set $\Gamma$ of modal formulas then $H_\textbf{QK} \cup \Gamma \models \bot$.

     If $\textbf{K} \cup \Gamma \models \bot$, then there is some proof of $\bot$ from $\Gamma$ using only the axiom $K$, the axioms of classical propositional logic listed above, and the rules \textit{modus ponens} and necessitation.
     Each of these axioms and rules has a counterpart in the formulas of $H_\textbf{QK}$.
     The only nontrivial case is the necessitation rule $N$, which relies on $\tauto$, the behavior of which was shown in Lemma~\ref{lemma:tauto}.
     Hence, each step of the proof is a valid inference under $H_\textbf{QK}$.
     Therefore, $H_\textbf{QK} \cup \Gamma \models \bot$. \\

     \noindent Now we prove the other direction; if $\textbf{K} \cup \Gamma \not\models \bot$ then $H_\textbf{QK} \cup \Gamma \not\models \bot$.
     Assume $\textbf{K} \cup \Gamma \not\models \bot$.
     Then there is a model $M$ of $\textbf{K} \cup \Gamma$.
     We can adapt $M$ to a model $M_Q$ of $H_\textbf{QK} \cup \Gamma$. We briefly describe $M_Q$:
     \begin{itemize}
          \item The truth value of every arity 0 predicate is the same as in $M$.
          \item For every formula $\varphi$ of modal logic, we have $M_Q \models \Box \varphi$ if and only if $M \models \Box \varphi$. This is coherent with all axioms of $H_\textbf{QK}$.
          \item $\wff$ is true on every quotation of a modal formula and false otherwise. This is coherent with the axioms of $H_\textbf{QK}$.
          \item $\tauto$ is true on every tautology of $\textbf{K}$ and false otherwise. This is coherent with the axioms of $H_\textbf{QK}$.
     \end{itemize}
     The model $M_Q$ behaves like $M$ on all modal formulas and is therefore a model of $\Gamma$. Because we can check it is a model of $H_\textbf{QK}$, we have $H_\textbf{QK} \cup \Gamma \not\models \bot$. \\

     \noindent We have shown that $H_\textbf{QK} \models \varphi \text{ if and only if }\ \textbf{K} \models \varphi$ by proving both directions of the equivalence.
\end{proof}


\subsection{Example}

We show an example of a simple line of reasoning in the system \textbf{D} of modal logic and its translation to Qiana.
In natural language, we take the following premises:
\begin{itemize}
    \item ``Necessarily, either Juliet faked her death or she killed herself.''
    \item ``Necessarily, if Juliet faked her death, she was unhappy with her family's decision.''
    \item ``Necessarily, if Juliet killed herself, she was unhappy with her family's decision.''
\end{itemize}

\noindent We want to prove that it is possible that Juliet was unhappy with her family's decision. \\

\noindent \textbf{First, let us do this proof in classical modal logic:}

\begin{align*}
    A &: \text{Juliet faked her death} \\
    B &: \text{Juliet killed herself} \\
    C &: \text{Juliet was unhappy with her family's decision}
\end{align*}

\noindent The premises can be formalized as:
\begin{align*}
    & \Box (A \lor B) \\
    & \Box (A \rightarrow C) \\
    & \Box (B \rightarrow C)
\end{align*}

\noindent We notice the following propositional tautology:
\begin{equation*}
    \models (A\rightarrow C) \rightarrow ((B \rightarrow C) \rightarrow ((A \lor B) \rightarrow C))
\end{equation*}

\noindent By necessitation ($N$), this gives:
\begin{equation*}
    \Box (A\rightarrow C) \rightarrow ((B \rightarrow C) \rightarrow ((A \lor B) \rightarrow C))
\end{equation*}

\noindent By applying the \textit{modus ponens} rule multiple times with $K$ we derive:
\begin{equation*}
    \Box C
\end{equation*}

\noindent Then, axiom $D$ gives us:
\begin{equation*}
    \Diamond C
\end{equation*}

\noindent \textbf{We now translate this proof to Qiana.} We define arity-0 predicates $A$, $B$, and $C$ to represent the propositions. The necessity operator ($\Box$) is treated as a context. The translation of the premises is:

\begin{align*}
    & \ist(\Box, \fquo{A \lor B}) \\
    & \ist(\Box, \fquo{A \rightarrow C}) \\
    & \ist(\Box, \fquo{B \rightarrow C})
\end{align*}

\noindent Which can be written more concisely as:

\begin{align*}
    & \Box \fquo{(A \lor B)} \\
    & \Box \fquo{A \rightarrow C} \\
    & \Box \fquo{B \rightarrow C}
\end{align*}

\noindent We derive $\tauto(\fquo{(A\rightarrow C) \rightarrow ((B \rightarrow C) \rightarrow ((A \lor B) \rightarrow C))})$ by Lemma~\ref{lemma:tauto}. Then, by the rule \textit{Modus Ponens} and Axiom $QN$, we obtain:

\begin{equation*}
    \Box \fquo{(A\rightarrow C) \rightarrow ((B \rightarrow C) \rightarrow ((A \lor B) \rightarrow C))}
\end{equation*}

\noindent By repeatedly applying $QK$ and the rule \textit{Modus Ponens}, we derive:

\begin{equation*}
    \Box C
\end{equation*}

\noindent Finally, using axiom $QD$ and the rule \textit{Modus Ponens}, we conclude:

\begin{equation*}
    \Diamond C
\end{equation*}

\noindent As we can see, the structure of the proof is very similar to that of pure modal logic. The only steps that do not mirror the ones from the modal logic reasoning are those where we use the $\tauto$ predicate, but its behavior is guaranteed by Lemma~\ref{lemma:tauto}.

 \section{Additional Remarks} \label{sec:discussion}

In this section, we present some remarks and discussions regarding the use of Qiana.
\SC{JAIR does not want us to start a section with a subsection, so I added asomething.}

\subsection{About the Definition of the Set $V$ of Quotable Variables}

In Section~\ref{sec:FOdefs} we defined $V$ as a finite subset of $V_\infty$, which is in bijection with $\cv$. 
In formulas such as Axiom~\ref{ax:str4}, we connect $x$ (an element of $V$) with $\bm x$ (the corresponding element of $\cv$).
This is a convenient way to present our axioms without a lengthy discussion on fresh variables and the like.
However, in first-order logic, bound variables can be freely renamed with fresh variables. In some schemes (like Formula~\ref{ax:str4}), we limited the range of some $x$ to $V$ rather than $V_\infty$. But all the bound variables can be replaced with elements of $V_\infty$ without issue. 
The only aspect of $V$ that matters is that it has the same size as $\cv$. The notion of ``quotable variable'' amounts to a limitation on the number of distinct variables in a quotable formula, along with giving us a convenient way to state our axioms.

\subsection{Useful Notation for Handwritten Formulas}

When writing formulas by hand and nesting contexts, it can be useful to have a notation that can be joined with itself by simple concatenation.
For example, consider without additional notation the formula for ``Romeo believes that Juliet believes that Romeo believes he is pretty'' (ie, Romeo is aware Juliet considers him vain).

\[\ist(\textit{Romeo}, \bmnested{\ist(\textit{Juliet}, \bmnested{\ist(\textit{Romeo}, \bmnested{\textit{Pretty}(\textit{Romeo})})})})\]

\noindent Since this can become cumbersome to write, we propose the following notation:
\[\context{c} t\hspace*{0.3cm} := \ist(c, t)\]
\[\contextq{c} \varphi := \ist(c, \fquo{\varphi})\]

\noindent With this notation, the formula above becomes:
\[\contextq{\textit{Romeo}} \contextq{\textit{Juliet}} \contextq{\textit{Romeo}} \textit{Pretty}(\textit{Romeo})\]

\noindent Which is easier to read and to write.

\subsection{Axioms for Disambiguation}

The axioms on the behavior of contexts introduced in Section~\ref{sec:axioms} are minimal.
In particular, we do not enforce the correct interpretation of terms in a context. For example, the formula $\ist(c, \bm P(\bm 2))$ does not imply $\ist(c, \bm P(\fquo{1+1}))$; even where 1+1=2 is taken for granted.
This nonenforcement of the correct interpretation of terms in contexts extends to the function $\quot$. 
The recursive definition of $\truth$ includes a mechanism to unwrap $\quot$ symbols, but this is not enforced for arbitrary contexts.

In applications where the correct interpretation of terms in contexts is important, the following two axioms can be included to enforce this behavior:
\[\forall c,t. \wft(t) \rightarrow \ist(c,t\ \bm=\ \quot(E(t)))\]
\[\forall c, t_1, t_2, t.\ \ist(c, t_1\ \bm=\ t_2) \rightarrow \ist(c, t \bm\leftrightarrow \sub(t,t_1,t_2))\]

\noindent
We recall that $\sub$ is substitution $t[t_1 \leftarrow t_2]_q = \sub(t,t_1,t_2)$; and $E$ is the evaluation symbol, matching a quoted term to its value (the opposite of $\quot$).
See Section~\ref{sec:FiniteAxio} for more on the definition of these symbols within Qiana.
 
\section{Conclusion} \label{sec:Conclusion}
    We have introduced Qiana, a formalism based on first-order logic that allows reasoning on contexts, quantifying over contexts, and quantifying over formulas. 
Thanks to our finite axiomatization process, Qiana theories can be used with any
TPTP-compatible theorem prover.
We have shown that Qiana can be used to model beliefs, stories, and paraconsistency.
Furthermore, we have extended Qiana to reason about temporality with 
event calculus. We have also presented an alternative many-sorted version of Qiana that includes time.
Finally, we have shown how the usual systems of modal logic can be written within Qiana.

We expect Qiana to be usable for and adaptable to various types of contextual reasoning cases, including reasoning on hypothetical scenarios, fake news, legal reasoning, and different points of view. 
In future work, we intend to produce the tools necessary to translate natural language knowledge to Qiana and to develop a Qiana-based system for reasoning on contexts and beliefs.
Once this is done, the ability of Qiana to quantify over both formulas and contexts while remaining compatible with automated theorem provers will make it a powerful tool for interpretability.

\subsection*{Acknowledgments} 
This work was partially funded by the NoRDF project (ANR-20- CHIA-0012-01).

\printbibliography

\appendix
\section{Proof of Truth Definition}
     \label{sec:proveTruth}

To prove Property~\ref{prop:truthSubsumption}, we will instead prove Lemma~\ref{lemma:addTruth}, which is stronger.
\begin{lemma} \label{lemma:addTruth}
    Let $A \in \tqfv$ with no free quoted variables (ie, each $\bm x$ is quantified by a $\bforall$).
    Let $x_1, \dots, x_n$ be the free variables of $A$.
    Then
    \[\AoneAfour \models \forall x_1, \dots, x_n.\ \truth(A) \leftrightarrow \uq(A)\]
\end{lemma}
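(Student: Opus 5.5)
We argue by structural induction on $A \in \tqfv$, following the grammar of Definition~\ref{def:TTvLLv}. Since the $\bforall$ case substitutes $\quot(x)$ for $\fquo{x}$, which changes the term, the induction will be on the number of connectives ($\bland$, $\bneg$, $\bforall$) in $A$. Substitution $[\fquo{x}\leftarrow\quot(x)]_q$ preserves this measure, because it only replaces leaves. The statement is proved simultaneously for all $A$ satisfying the hypothesis, with free variables $x_1,\dots,x_n$ universally closed. We fix a model $M$ of \AoneAfour{} and an assignment $\assignment$ of the free variables, and show $M,\assignment\models \truth(A)\leftrightarrow \uq(A)$.

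\textbf{Base case.} Here $A=\bm p(t_1,\dots,t_m)$ with $t_i\in\tqtv$. Schema~\ref{ax:str1} applies directly, since its range is exactly $t_i\in\tqtv$ with the $x_j$ the free variables. It gives $\truth(\bm p(t_1,\dots,t_m))\leftrightarrow p(\uq(t_1),\dots,\uq(t_m))$. By Definition~\ref{def:unquoting}, $\uq(\bm p(t_1,\dots,t_m))$ is exactly this right-hand side. We must check that $p\neq\truth$, or else treat $\fquo{\truth}$ as harmless: \ref{ax:str1} ranges over $p\in P$, and $\uq$ yields $\truth(\uq(t_1))$ in that case, so the equivalence holds syntactically either way.

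\textbf{Boolean cases.} For $A=\varphi_1\bland\varphi_2$ we use \ref{ax:str2}; for $A=\bneg\varphi_1$ we use \ref{ax:str3}. Then we apply the induction hypothesis to the immediate subterms. These have no free quoted variables, and their free variables are among the $x_j$, so the hypothesis holds under the same $\assignment$. The defining equations of $\uq$ for $\bland$ and $\bneg$ then finish these cases.

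\textbf{Quantifier case (main obstacle).} Here $A=\bforall(\fquo{x},\varphi)$. By \ref{ax:str4}, $\truth(A)$ is equivalent to $\forall x.\ \truth(\varphi[\fquo{x}\leftarrow\quot(x)]_q)$. By definition, $\uq(A)=\forall x.\ \uq(\varphi[\fquo{x}\leftarrow\quot(x)]_q)$. To apply the induction hypothesis to $\varphi' := \varphi[\fquo{x}\leftarrow\quot(x)]_q$, three lemmas about quoted substitution are needed:
\begin{enumerate}
\item if $\varphi\in\tqfv$, then $\varphi'\in\tqfv$, because $\quot(x)\in\tqtv$ replaces a quoted variable in a term position;
\item $\varphi'$ has no free quoted variables, since the only possibly free one was $\fquo{x}$ and it has been replaced, while substitution stops at inner $\bforall(\fquo{x},\cdot)$ binders;
\item $\varphi'$ has the same connective count as $\varphi$, hence strictly smaller than that of $A$.
\end{enumerate}

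Each lemma is proved by a routine induction following Definition~\ref{def:substitutionquotations}. The key subtlety is that subterms under $\quot(\cdot)$ are left untouched, which is correct because they are not free occurrences at this quotation level. The free variables of $\varphi'$ are among $x_1,\dots,x_n,x$. Applying the induction hypothesis under every assignment $\assignment[x\mapsto d]$ gives $\truth(\varphi')\leftrightarrow\uq(\varphi')$ for all $d$, and quantifying over $x$ yields the claim.

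I expect this case to be the delicate part. It requires stating the induction hypothesis over all assignments, and checking the variable-capture issue: when $x$ coincides with some $x_j$ already free in $\varphi$ via $\quot(x_j)$, the outer $\forall x$ in \ref{ax:str4} would capture it. I would handle this by renaming bound variables, as remarked in the discussion on the set $V$, or by noting that the same capture occurs identically on both sides, so the equivalence still holds.

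Finally, Proposition~\ref{prop:truthSubsumption} follows: for $\varphi\in\quotableFormu$, $\mu(\varphi)\in\tqf$ has no free quoted variables, and $\uq(\mu(\varphi))=\varphi$ by Proposition~\ref{prop:unquoteRecu}.
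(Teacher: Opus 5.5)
Your proposal is correct and follows the same route as the paper's proof. It uses induction on $A$: the base case comes from \ref{ax:str1} and the clauses of $\uq$, the connective cases from \ref{ax:str2}--\ref{ax:str3} with the induction hypothesis, and the quantifier case from \ref{ax:str4}, the induction hypothesis applied to $\varphi[\fquo{x}\leftarrow\quot(x)]_q$, and the $\bforall$ clause of $\uq$. Two of your refinements do real work. Your induction on connective count, which the substitution preserves, tightens a step the paper glosses over: it applies the hypothesis to the substituted formula, which is not a structural subterm. For the variable-capture issue, your second remedy is the one that works, because $\uq$ performs the identical substitution under the same $\forall x$. Renaming the bound variable would not remove the capture, since it is already built into the instance of \ref{ax:str4}.
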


We prove Lemma~\ref{lemma:addTruth} by induction on $A$.

\fbox{Base case}
Let $M$ be a model of \AoneAfour. Let us prove that $M \models \forall x_1, \dots, x_n.\ \truth(\bm p(t_1, \dots,t_m)) \leftrightarrow \uq(\bm p(t_1, \dots,t_m))$.
For all assignments $\assignment$ of variables $x_1, \dots, x_n$, we have:
    \begin{align*}
    & M, \assignment \models \truth(\bm p(t_1, \dots,t_m)) \\
    & \textiff M, \assignment \models p(\uq(t_1), \dots, \uq(t_m)) & \text{  as $M \models$ \ref{ax:str1}}\\
    & \textiff M, \assignment \models \uq(\bm p(t_1, \dots,t_m)) & \text{ by definition of $\uq$}
    \end{align*}
\fbox{Negation}
Let $M$ be a model of \AoneAfour. 
Let us prove that $M \models \forall x_1, \dots, x_n.\ \truth(\bneg A) \leftrightarrow \uq(\bneg A)$.
For all assignments $\assignment$ of variables $x_1, \dots, x_n$, we have:
    \begin{align*}
     &   M, \assignment \models \truth(\bneg A) \\
      &  \textiff M, \assignment \not \models \truth(A) & \text{ as $M \models$ \ref{ax:str3}}\\
       & \textiff M, \assignment \not \models \uq(A) & \text{ by IH} \\
        & \textiff M, \assignment \models \uq(\bneg A) & \text{ by definition of $\uq$}
    \end{align*}

\fbox{$\forall$}
Let $M$ be a model of \AoneAfour. Let us prove that $M \models\forall x_1, \dots, x_n.\ \truth(\bforall(\fquo{x} ,A)) \leftrightarrow \uq(\bforall(\fquo{x} ,A))$.
For all assignments $\assignment$ of variables $x_1, \dots, x_n$, we have:
    \begin{align*}
    &   M, \assignment \models \truth(\bforall(\fquo{x} ,A)) \\
    & \textiff M, \assignment \models \forall x.\ \truth(A[\fquo{x} \leftarrow quote(x)]_q) & \text{ as $M \models $ \ref{ax:str4}}\\
    & \textiff M, \assignment \models \forall x. \uq(A[\fquo{x} \leftarrow quote(x)]_q) & \text{ by IH} \\
    & \textiff M, \assignment \models  \uq(\bforall(\fquo{x} ,\bneg A)) & \text{ by definition of $\uq$}
    \end{align*}

\section{Proof of Finite Axiomatization}
     \label{sec:proveFiniteAxio}
     We will now prove Theorem~\ref{th:mainThProveFiniteAxio}.
To simplify the proof, we will omit the existence of schema~\ref{as:oldDefss} and its finite counterpart schema~\ref{ax:istForallFinite}.
The reason is that they are vastly orthogonal to the other difficulties of the proof and can be handled in the same way as we deal with the other axioms in this proof, except they form a more straightforward case.
Hence, they would only bloat the proof with unnecessary tedium, largely redundant in spirit with the rest of the reasoning. \\

\noindent Therefore, for the purpose of this proof, we assume:
\begin{align*}
    \hq & := H \cup \hqf \cup \htr \\
    \hf & := H \cup \hqf \cup \htf \cup \hof
\end{align*}

We now prove that $\hq$ is coherent if and only if $\hf$ is coherent through Proposition~\ref{prop:fiFinGiveBase} and Proposition~\ref{prop:fiBaseFiveFin}.

\begin{proposition} \label{prop:fiFinGiveBase}
    \[\hf \not\models \bot \rightarrow \hq \not\models \bot\]
\end{proposition}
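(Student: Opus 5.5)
Given a model $M'$ of $\hf$, I will build a model $M$ of $\hq$ by restriction: keep the domain of $M'$ and the interpretation of every symbol of $S$, and forget $\sub$, $\E$, $\wft$, $\reach$ and $=$. Since $H$ and $\hqf$ contain only symbols of $S$, $M$ still satisfies them. Everything then reduces to showing $M' \models \htr$, that is, every instance of \ref{ax:str1}--\ref{ax:str4}. These instances are also formulas of $S$, so they transfer to $M$. (This follows the simplification stated above, which drops \ref{as:oldDefss} and \ref{ax:istForallFinite}; that pair is handled the same way as \ref{ax:str4}.)

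First I will prove some auxiliary lemmas about $M'$ by structural induction on terms, for every assignment $\assignment$.
\begin{enumerate}
\item For every $t \in \tqv$ whose free variables occur only under $\quot$, $M',\assignment \models \reach(t)$. This follows from \ref{reach1} and \ref{reach3}.
\item For $t \in \tqtv$, $M',\assignment \models \wft(t)$. This follows from \ref{wft1}--\ref{wft4}.
\item For $t \in \tqtv$, $M',\assignment \models \E(t) = \uq(t)$. The $\quot$ case uses \ref{ax:E2}, the quoted-variable case uses \ref{ax:E7}, and the compound case uses \ref{ax:E3} together with item 1 and the equality congruence axioms \ref{ax:firsteq}--\ref{ax:lasteq}.
\item For $A \in \tqv$, $\fquo{x}\in\cv$ and $x\in V$, $M',\assignment \models \sub(A,\fquo{x},\quot(x)) = A[\fquo{x}\leftarrow\quot(x)]_q$. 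This is by induction on $A$, using \ref{ax:sub1}--\ref{ax:sub7} case by case, following Definition~\ref{def:substitutionquotations}, with item 1 discharging the $\reach$ guards.
\end{enumerate}
One subtlety: $=$ in $M'$ is only a congruence, not identity. So I will either quotient $M'$ by $=$ first, which is legitimate because \ref{ax:firsteq}--\ref{ax:lasteq} make it a congruence for all symbols, or carry the congruence through with \ref{ax:lasteq}. I prefer quotienting: the lemmas then become literal equalities in the domain, and restriction afterwards is harmless.

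With these lemmas, each schema follows from its finite counterpart.
\begin{itemize}
\item Instances of \ref{ax:str1}: instantiate \ref{ax:T1} at $t_1,\dots,t_m$. Items 2 and 3 turn its right-hand side into $p(\uq(t_1),\dots,\uq(t_m))$.
\item Instances of \ref{ax:str2} and \ref{ax:str3}: these follow from \ref{ax:T2} and \ref{ax:T3} with item 1.
\item Instances of \ref{ax:str4}: \ref{ax:T4} gives $\truth(\bforall(\fquo{x},A)) \leftrightarrow \forall x.\ \truth(\sub(A,\fquo{x},\quot(x)))$, and item 4 rewrites the inner term.
\end{itemize}

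The main obstacle is the guard mismatch. The schemas \ref{ax:str1}--\ref{ax:str4} range over arbitrary $A,B\in\tqv$ and $t_i \in \tqtv$ with free variables $x_1,\dots,x_n$ occurring bare, possibly outside $\quot$. The finite axioms, however, only fire under $\reach$ or $\wft$ hypotheses. So I must check carefully that every syntactic term admitted by the ranges, under any assignment, satisfies the needed guard. If bare variables can occur outside a $\quot$ in elements of $\tqv$, item 1 fails. In that case I would first try to argue from Definition~\ref{def:QQv} that variables of $\tqv$ only ever appear as $\quot(x)$, so the guard always holds. The same check is needed in the substitution lemma for the $\bforall$ case with $\fquo{y}\neq\fquo{x}$, where \ref{ax:sub6} requires $\reach$ of both arguments.
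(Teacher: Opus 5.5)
Your plan (take the $S$-reduct of a model of $\hf$, then derive each instance of \ref{ax:str1}--\ref{ax:str4} from its finite counterpart using lemmas on $\reach$, $\wft$, $\E$ and $\sub$) is the paper's plan, and quotienting by $=$ is harmless. But item 3 fails at its base case. \ref{ax:E7} gives $\E(\fquo{x})=\fquo{x}$, whereas $\uq(\fquo{x})=x$. So $\E(t)=\uq(t)$ is false for every $t\in\tqtv$ that contains a quoted variable outside a $\quot$. No other argument can rescue this, because the corresponding instances of \ref{ax:str1} are not consequences of $\hf$. Take the instance $\truth(\bm p(\fquo{x}))\leftrightarrow p(x)$: read as its universal closure, it forces $p$ to be constant. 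So for $H=\{p(a),\neg p(b)\}$ the theory $\hq$ is incoherent, while $\hf$ has a model. One such model: the domain is the ground terms without $\sub$ and $\E$; $\reach$ and $\wft$ are true everywhere; $\textit{ist}$ is false everywhere; and $\E$, $\sub$, $\truth$ are defined by the recursions their axioms describe. In it, $\truth(\bm p(\fquo{x}))$ holds iff $p(\fquo{x})$ does. Likewise, the range $p\in P$ of \ref{ax:str1} contains $\truth$, which \ref{ax:T1} excludes, so ``instantiate \ref{ax:T1}'' is unavailable for those instances. You therefore have to restrict \ref{ax:str1} to $p\neq\truth$ and to arguments with no free quoted variable. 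These are the only instances needed to recover \ref{def:oldDefT} for closed $\varphi$, since \ref{ax:str4} turns each bound $\fquo{x}$ into $\quot(x)$ before an atom is reached. With that restriction, your item 3 and the rest of the derivation go through. The paper's Lemma~\ref{lemma:EisXi} overlooks the same point.

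Second, the obstacle you single out does not arise. By Definition~\ref{def:QQv}, variables occur in $\tqv$ only as $\quot(x)$, so item 1 holds. The guard that item 1 does \emph{not} discharge is $\reach(t_1)$ in \ref{ax:sub7}, which concerns the \emph{content} of a $\quot$. Item 4 needs $\sub(\quot(z),\fquo{y},\quot(y))=\quot(z)$ for arbitrary values of $z$. This arises, for example, in the instance of \ref{ax:str4} with $A=\bm q(\quot(z),\fquo{y})$, which is exactly what every nested quantifier produces. The gap is closed by showing $\hf\models\forall x.\ \reach(x)$. First, \ref{reach1} gives $\reach(\quot(x))$. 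Then \ref{reach3} with $f=\E$ gives $\reach(\E(\quot(x)))$. Finally, \ref{ax:E2} identifies $\E(\quot(x))$ with $x$. Hence every $\reach$ guard in $\hf$ is vacuous.
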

\begin{proof}
    We want to prove that if $\hf = H \cup \hqf \cup \hof \cup \htf$ has a model, then $\hq = H \cup \hqf \cup \htr$ also has one. 
    We will now prove $\hf \models \htr$, which is sufficient. \\

    \begin{lemma}
        Let $t_1, t_2 \in \tqv$ with free variables $x_1, \dots, x_m$, all within quotes. \\
        \hspace*{2cm} Then for all $\fquo{x}$, $\hf \models \forall x_1, ..., x_m.\ \sub(t_1, \fquo{x}, t_2) = t_1[\fquo{x} \leftarrow t_2]_q$.
    \end{lemma}
    \begin{proof}
        Proven by direct recursion on $t_1$.
    \end{proof}

    \begin{lemma}
        Let $t \in \tqv$ with free variables $x_1, \dots, x_m$, all within quotes. Then $\hf \models \forall x_1, \dots, x_m.\ \reach(t)$.
    \end{lemma}
    \begin{proof}
        Proven by direct recursion on \truth.
    \end{proof}

    \begin{lemma} \label{lemma:EisXi}
        Let $t \in \tqtv$ with free variables $x_1, \dots, x_m$, all within quotes. Then
        \[\hf \models \forall x_1, \dots, x_n.\ E(t) = \uq(t)\]
    \end{lemma}
    \begin{proof}
        Proven by direct recursion on \truth, as $E$ is by construction built on the same recursion as $\uq$.
    \end{proof}
    Armed with these lemmas, we can prove all schemas of $\htr$ with their direct counterpart from $\htf$. \\

    We provide a sketch for the more complicated case of schema~\ref{ax:str1}, highlighting the most important elements of the proof. \\
    Let $t_1,\dots, t_k \in \tqtv$ with free variables $x_1,\dots,x_n$. \\
    By construction of $\wft$ we have $\forall x_1,\dots, \forall x_n.\ \wft(t_i)$ for all $i$.
    Hence by schema~\ref{ax:T1} we have $\forall x_1,\dots, \forall x_n.$ $\ T(\bm p(t_1,\dots, t_k)) \leftrightarrow p(E(t_1), \dots, E(t_k))$. 
    Lemma~\ref{lemma:EisXi} allows us to conclude.
\end{proof}

\noindent We now prove the other direction of the equivalence.
\begin{proposition} \label{prop:fiBaseFiveFin}
    \[\hq \not\models \bot \rightarrow \hf \not\models \bot\]
\end{proposition}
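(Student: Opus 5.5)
Given a model $M$ of $\hq = H \cup \hqf \cup \htr$, I will build a model $M'$ of $\hf$ over the extended signature $S'$. The domain and the interpretation of every symbol of $S$ stay as in $M$; in particular $\truth$, $\ist$ and all symbols of $H$ are unchanged. Then $M'$ still satisfies $H$ and $\hqf$, since these formulas do not mention the new symbols. It remains to interpret $=$, $\reach$, $\wft$, $E$ and $\sub$ so that $\hof$ and $\htf$ hold.

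\textbf{Equality.} I interpret $=$ as true identity on the domain $D$, which gives A12--A16 at once.

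\textbf{The remaining helper symbols.} I first pass to a convenient model. Using the Löwenheim--Skolem theorem and a term-model construction, I choose $M$ so that every element of $D$ either is the denotation of a closed $S$-term $t$ in which every variable (there are none, as $t$ is closed) occurs below a $\quot$, or is ``junk''. Concretely, I take a Herbrand-style model obtained from a consistent complete extension of $\hq$ with Henkin witnesses. Henkin witnesses are fresh constants, and I place them outside $\fq$, so the free term structure on the $\quot$/underlined symbols is preserved: distinct syntactic quotations $\bm f(\dots)$ and $\bm g(\dots)$ are not identified. With this in hand:
\begin{itemize}
\item $\reach$ is true on all elements. This makes A17--A18 trivial and removes every $\reach$ guard in $\htf$ and A24--A34.
\item $\wft(d)$ holds iff $d$ is the value of a term of the form $\bm x$, of the form $\quot(u)$, or built from these by symbols of $\fq$.
\item $E$ and $\sub$ are defined by recursion on a chosen representing term, and extended by the identity or by a default value on junk.
\end{itemize}

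\textbf{Main obstacle.} The hard step is well-definedness. In $M$, different syntactic terms may denote the same element; for example $\bm f(a)$ and $\bm f(b)$ coincide when $a=b$. The recursive clauses for $E$ and $\sub$ then have to give compatible values. The clause $E(\quot(t)) = t$ for arbitrary $t$ is the delicate one, since it requires $\quot$ to be injective in $M$. My first attempt is to reinterpret $\quot$ and the underlined symbols as free constructors. I would build $M'$ so that its quotation part is the free term algebra $\tq$ over $D$, instantiating $\quot(d)$ with $d\in D$, and then define $\truth$ by the recursive clauses A1--A4 over this free algebra, exactly as in the proof of Proposition~\ref{th:addTruth}. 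Under this reading, A1$^{\text{fin}}$--A4$^{\text{fin}}$ hold by construction: $E$ coincides with $\uq$ as in Lemma~\ref{lemma:EisXi}, and $\sub$ coincides with the syntactic substitution.

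The cost is that $\ist$ and $H$ refer to quotation elements of the old model, so they must be transported along the canonical map from the free algebra to $M$. I will therefore define $M'$ as a pullback: the interpretation of every atom of $M'$ is read off through that canonical map. Checking that $H$ and $\hqf$ remain true under this pullback, using that it is a homomorphism for all symbols other than the new ones, is where I expect most of the work to lie.
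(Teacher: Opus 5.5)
There is a genuine gap, and it lies in how you treat $\truth$. Your final construction is the right one: a free algebra for the quotation symbols over $D$, with the $S$-structure read off through the canonical map $\pi$ to $M$. It is essentially the paper's model $M_f$. The paper makes every function symbol of $S$ a free constructor over $D$, not only the quotation symbols, but with non-quotation symbols interpreted as $f^M\circ\pi$ your hybrid is harmless.

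The problem is that you then define $\truth$ by the recursive clauses A1--A4, ``as in Proposition~\ref{th:addTruth}'', while also requiring every atom to be read off through $\pi$. These two requirements are incompatible, and the recursive choice is the wrong one. Proposition~\ref{th:addTruth} assumes $\truth$ does not occur in the theory. Here $H$ is an arbitrary theory over $S$ and may mention $\truth$, as the Romeo and Juliet theory of Section~\ref{subsec:suicideExample} does. A1--A4 leave $\truth$ unconstrained on many elements, for instance $\truth(2)$, or $\truth(\bm p(t))$ with $t\notin\tqtv$. So $H=\{\truth(2)\}$ is compatible with $\hq$, yet nothing forces a recursively defined $\truth$ to satisfy it. A1$^{\text{fin}}$--A4$^{\text{fin}}$ therefore hold ``by construction'' only at the price of losing $H$.

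The fix is to pull back $\truth$ like every other predicate of $S$, and then derive the finite schemas from A1--A4 rather than build them in.

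\emph{Transfer step.} Since $=$ is a fresh symbol of $S'$, the language $S$ has no built-in equality. Suppose $\pi$ commutes with all function symbols and $p^{M'}(\vec v)$ holds iff $p^{M}(\pi\vec v)$ holds, for every predicate $p$ of $S$. Then a routine induction on formulas gives $M',\sigma\models\varphi$ iff $M,\pi\circ\sigma\models\varphi$ for every $S$-formula $\varphi$. This is Lemma~\ref{lemma:MfIsOftenM}, and it immediately yields $M'\models H\cup\hqf\cup\htr$. It is not where the work lies.

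\emph{Finite schemas.} The actual work is A1$^{\text{fin}}$. Given $v_1,\dots,v_n$ satisfying $\wft$, write each $v_i$ as the value, under an assignment $\sigma$, of some $t_i\in\tqtv$ (each $\quot(y)$ becomes $\quot(x_j)$). Next observe that $E$, defined by syntactic recursion on the free structure, sends $v_i$ to the value of $\uq(t_i)$ under $\sigma$. Then apply the instance of A1 that $M'$ satisfies by the transfer step. A4$^{\text{fin}}$ is handled the same way, using $\sub$ against $[\cdot\leftarrow\cdot]_q$.

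\emph{Further corrections.} Keep $\reach$ minimal, as the paper does, rather than true everywhere. Its guards are what stop A2$^{\text{fin}}$--A4$^{\text{fin}}$ from speaking about arbitrary domain elements, on which A2--A4 impose nothing in $M$.

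Drop the two earlier detours. Keeping the domain of $M$ cannot satisfy $E(\quot(t))=t$ with $=$ as identity unless $\quot$ happens to be injective in $M$. The Henkin paragraph contradicts your own ``main obstacle'': in the equality-free language the canonical term model already interprets every function symbol freely, so $\bm f(a)$ and $\bm f(b)$ are never identified there. Conversely, quotienting by equality would destroy the freeness you rely on.
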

\begin{proof}
    Let $M$ be a model of $\hq$. We will define a model $M_f$ and prove that it is a model of $\hf$. We define $M_f$ as follows: \\

    Let $D$ be the domain of $M$. Recall that $\termset$ is the set of all terms under $S$. Without loss of generality, we assume $D \cap \termset = \emptyset$.
    We define $D_f$ as the set obtained by adding $D$ and removing variables to and from the recursive definition of $\termset$. 
    We define the $M$-interpretation of elements of $t \in D_f$ as the value in $D$ that we obtain by recursively evaluating \truth under $M$.
    \begin{itemize}
        \item Under $M_f$, the interpretation of any function symbol from $S$ is to recursively build the element of $D_f$. Intuitively, we ``only'' store the terms as we evaluate them without performing any other operation.
        \item Under $M_f$, the interpretation of any predicate symbol is to turn all arguments in $D_f$ to their $M$-interpretation and then interpret the predicate as in $M$.
        \item $=$ is true equality on $D_f$.
        \item $Wft$ is true only on elements of $D_f$ recursively built with elements of $\cv$, $\fq$, and $quote(y)$ for $y \in D$. Remark that $\wft$ is the minimal predicate that satisfies schemas~\ref{wft1} to \ref{wft4} and is built with schemas that follow the recursive construction of $\tqtv$.
        \item $\reach$ is likewise the minimal predicate to satisfy its definition schemas (schemas~\ref{reach1} to \ref{reach3}). It is only true on elements recursively built as terms.
        \item $\sub(t_2, \fquo{x}, t_1)$ is recursive and localy behaves as $t_2[\fquo{x} \leftarrow t_1]_q$ if the top symbol of $t_2$ is coherent with $t_2$ being in $\tqv$. Otherwise $\sub$ simply returns $t_2$.
        If the first argument of $\sub$ is not in $\cv$, then $\sub$ returns $t_2$. 
        The intuition is that $\sub$ is a recursive function defined as ``behaves like $t_2[\fquo{x} \leftarrow t_1]_q$ if it makes sense to do so (locally). Otherwise, return $t_2$.''.
        \item $E$ likewise behaves like $\uq$ where it locally makes sense to do so and otherwise is identity. Remark that $E(t) = \uq(t)$ for $t \in \tqtv$.
    \end{itemize}

    \begin{lemma} \label{lemma:MfIsOftenM}
        Let $\varphi$ be a formula well-defined on $S$.
        Then $M \models \varphi$ iff $M_f \models \varphi$ 
    \end{lemma}
    \begin{proof}
       (Sketch) At each step of term evaluation, the $M$-interpretation of the result is equal to the same operation applied to the $M$ interpretations of the arguments. 
       Since every term will have to be interpreted through a predicate symbol from $S$, and therefore sent to its $M$-interpretation before evaluation, everything happens as though only the $M$-interpretation of the values was considered.
       This is exactly the application of $M$ itself to the formulas. 
       This proves Lemma~\ref{lemma:MfIsOftenM}.
    \end{proof}
    We recall that $M \models \hq$, which means $M \models H \cup \hqf \cup \htr \cup \hts$.
    Hence and by Lemma~\ref{lemma:MfIsOftenM}, $M_f \models H$ and $M_f \models \hqf$. 
    By directly applying the definitions, we see that $M_f \models \hof$. 
    Since $\hf = H \cup \hqf \cup \hof \cup \htf$, we now need to prove only that $M_f \models \htf$. 
    We prove this by checking that $M_f$ accepts schemas~\ref{ax:T1} to \ref{ax:T4}.
    As they are quite similar to one another, we only provide explanations for the more complicated case of schema~\ref{ax:T1}.

    Let $p$ be a predicate symbol of arity $n$. Further, let $v_1, \dots, v_n \in D_f$ such that $M_f \models \wft(v_i)$ for all $i$.
    By definition of $\wft$, there is some selection of variables $x_1, \dots, x_m$, a valuation $\sigma: x_1, \dots, x_m \rightarrow D_f$, and terms $t_1, \dots, t_n \in \tqtv$ such that: $ \forall i \in [1,n], (M_f, \sigma)(t_i) = v_i$.
    By definition of $\htr$ and Lemma~\ref{lemma:MfIsOftenM}, we have:
    $M_f \models \forall x_1, \dots, x_m.\ T(\bm p(t_1, \dots, t_n)) \leftrightarrow p(\uq(t_1), ..., \uq(t_n))$. 
    Hence $M_f, \sigma \models  T(\bm p(t_1, \dots, t_n)) \leftrightarrow p(\uq(t_1), ..., \uq(t_n))$. 
    $E$ equals $\uq$ on $\tqtv$, hence this gives $M_f, \sigma \models  T(\bm p(t_1, \dots, t_n)) \leftrightarrow p(E(t_1), ..., E(t_n))$.  
    Finally, this gives $M_f \models T(p(v_1, \dots, v_n)) \leftrightarrow p(E(v_1), \dots, E(v_n))$, which is what we wanted to prove.

    Remark that we can handle the case of schema~\ref{ax:T4} in a vastly similar fashion, relying on the similarity of $\sub$ to $\_[\_ \leftarrow \_]_q$ instead of the similarity of $E$ to $\uq$.
\end{proof}

\vspace*{2cm}
\noindent Received 20 February 2025; accepted 29 September 2025.

\end{document}